\DeclareMathOperator\supp{supp}
\newtheorem{theorem}{Theorem}
\newtheorem{proposition}{Proposition}[section]
\newtheorem{lemma}{Lemma}[section]
\newtheorem{corollary}{Corollary}[section]
\newtheorem{remark}{Remark}[section]
\newtheorem*{definition}{Definition}
\newtheorem{example}{Example}
\newcommand{\sapprox}{\tilde\sigma_\theta}
\newcommand{\smean}{\bar\sigma}
\newcommand{\gmax}{\gamma_{\max}}
\title{The Effect of Stochasticity in Score-Based Diffusion Sampling: a KL Divergence Analysis}
\author[a]{Bernardo P. Schaeffer}
\author[a]{Ricardo M. S. Rosa}
\author[a]{Glauco Valle}
\affil[a]{Instituto de Matemática, Universidade Federal do Rio de Janeiro}
\begin{document}
\maketitle

\begin{abstract}
Sampling in score-based diffusion models can be performed by solving either a reverse-time stochastic differential equation (SDE) parameterized by an arbitrary stochasticity function or a probability flow ODE, corresponding to setting this stochasticity function to zero. In this work, we investigate the effect of this stochasticity on the generation process through the evolution of Kullback-Leibler (KL) divergences, obtaining general KL divergence bounds and a novel analysis of the impact of the time-profile of the score error on model performance. For exact score functions, stochasticity has a contractive effect, decreasing KL divergence along the sampling trajectory. For approximate scores, however, a trade-off arises between correcting accumulated errors and amplifying current score errors, meaning stochasticity can either improve or degrade generation performance. Theoretical considerations indicate that the gain from stochasticity depends on the time-localization of the trained model error. We test this in experiments on both toy and benchmark data sets, also comparing the KL divergence evolution with the obtained bounds. We also present a fully analytical example, where all the relevant quantities can be computed, and the optimal stochasticity function can be characterized via an optimal control analysis.
\end{abstract}

\tableofcontents

\section{Introduction}\label{sec: intro}

Score-based diffusion models are a generative modeling framework based on the fact that the process of a stochastic differential equation (SDE) admits a time-reversal process, which is given by another SDE depending on the score of the (forward) distribution \citep{karras, song2021scorebasedmodeling}. The forward process is used for training, and the reverse process is used for sampling. Since the main interest is in the distribution itself and not in particular paths of the SDE, a whole family of SDEs, indexed by a stochasticity parameter $\gamma=\gamma(t)$, including an ordinary differential equation (the probability flow ODE, corresponding to $\gamma(t) = 0$), can be used for sampling. A major motivation of this work is to provide insight into the experimental evidence that stochasticity can either benefit or harm performance \citep{song2021implicitmodels, song2021scorebasedmodeling, karras}.

We estimate the evolution, along the sampling process, of the Kullback-Leibler divergences (KL divergence), also known as relative entropies, between the generated and the target distributions, as they depend on the free stochasticity parameter. At each sampling instant, the KL evolution is affected both by accumulated errors (from the starting sampling distribution and/or previous steps of sampling) and by the error resulting from the approximation of the score function at the current instant. The interaction between these two kinds of error can be used to understand the influence (beneficial or detrimental) of stochasticity on the final performance, extending previous approaches to the study of stochasticity in diffusion model sampling \citep{SDEDrag,opt_choice, ma2024}.
We conduct numerical experiments to further investigate the effect of this balance of errors. We also present theoretical error estimates in a general setting, including estimates implicitly discussed in the literature, and use numerical experiments to illustrate the obtained theoretical results.
Experiments are performed on toy data sets, where all quantities can be accurately computed; on typical image benchmarks such as MNIST and CIFAR-10; and also on a geological 3D problem, where, besides visual accuracy, it is important to match the distribution of rock statistics. Finally, we provide a detailed analysis of a fully analytical example.

Diffusion models are currently the state of the art for generative image and video synthesis \citep{ramesh2022hierarchicaltextconditionalimagegeneration,videoworldsimulators2024}, and are also being used successfully in many scientific and technological applications, such as natural language processing \citep{zou2023surveydiffusionmodelsnatural}, text-to-speech synthesis \citep{zhang2023surveyaudiodiffusionmodels}, medical imaging \citep{kazerouni2023diffusionmodelsmedicalimage}, time series analysis \citep{lin2023diffusionmodelstimeseries}, protein modeling and design \citep{alphafold_3,li2025thermodynamicsproteindesigndiffusion}, bioinformatics \citep{bioinformatics}, global weather forecasting \citep{gencast}, porous media synthesis \citep{frontgeologico}, and more \citep{yang2024diffusionmodelscomprehensivesurvey}. Despite their remarkable success in applications, the precise evaluation of the performance of a generative model presents challenges such as the independent quantification of sample quality and distribution coverage \citep{precision/recall_2018, feature_lik_divergence_2023}. Informally, sample quality can be defined as the plausibility of generated samples in the context of original data, and distribution coverage as the amount to which the original distribution is captured by the generated distribution. Some examples of poor sample quality are low-resolution generated images and hallucinations \citep{aithal}, while examples of poor distribution coverage include mode collapse and lack of variability \citep{Srivastava2017Veegan, Shumailov2024, frontgeologico}.

The KL divergence is a standard information-theoretic measure of discrepancy between probability distributions. Due to its asymmetry, it provides two distinct notions of approximation to compare a generated distribution $\tilde{p}$ with a target distribution $p$ through the divergences $H(\tilde{p}|p)$ and $H(p|\tilde{p})$. Both divergences measure generated sample quality and target distribution coverage, but $H(\tilde{p}|p)$ places more weight on the former, whereas $H(p|\tilde{p})$ emphasizes the latter. These properties are referred to in variational inference as zero-forcing and zero-avoiding behavior; see, for instance, \citep[Section 21.2]{Murphy2012MachineL} and \citep[Section 10.1]{bishop}. The difference can be seen from the definition $H(p_1|p_2):=\int p_1 \log (p_1/p_2)\,dx$: the integrand is large where $p_2(x)\ll p_1(x)$, so that these regions contribute more to $H(p_1|p_2)$ than to $H(p_2|p_1)$. Different applications may be more sensitive either to sample quality or to distribution coverage, making both divergences relevant for evaluating a generative model from a theoretical perspective. Indeed, standard metrics for image generation, such as Inception Score (IS) or Fréchet Inception Distance (FID), typically measure both aspects in a single score \citep{feature_lik_divergence_2023}.

Motivated by this, we investigate the impact of stochastic sampling on model performance through both divergences. In particular,  the knowledge available for the target distributions $p_t$ facilitates the use of log-Sobolev inequalities (LSI) for a more precise control of $H(\tilde{p}_t|p_t)$ (associated with sample quality) from above. The analysis of $H(\tilde{p}_t|p_t)$ complements the previous literature on KL divergence bounds for diffusion models, which typically considers only $H(p|\tilde{p})$ for obtaining fully data-independent bounds (see Section \ref{sec: related work}). A more careful study of $H(\tilde{p}|p)$ is also motivated by the observed occurrence of hallucinations in diffusion models \citep{aithal}. In Section \ref{sec: different KLs}, we show an example where stochasticity can be beneficial for one divergence while being detrimental to the other.

The framework we consider is the one introduced by \cite{song2021scorebasedmodeling} and advanced by \cite{karras}, with the diffusion of the data distribution defined by the (forward) SDE
\begin{equation}\label{sde}
    dX_t = f(X_t,t)\,dt + g(t)\,dW_t,
\end{equation}
in $\mathbb{R}^n,$ on a time interval $[0, T],$ with probability density $p_t(x),$ where $f=f(x, t)$ and $g=g(t)$ are given and $\{W_t\}_{t\geq 0}$ is an $n$-dimensional Wiener process. The associated reverse-time SDE takes the form
\begin{equation}\label{rev sde}
    d\tilde{X}_\tau = \left(-\bar{f}(\tilde{X}_\tau, \tau) + \frac{1}{2}\bar{g}^2(\tau)(1+\gamma(\tau))\nabla\log \bar{p}_\tau(\tilde{X}_\tau)\right)\,d\tau + \sqrt{\gamma(\tau)} \bar{g}(\tau) \,dW_\tau,
\end{equation}
in the reversed time variable $\tau=T-t,$ where $\bar{f}(x, \tau):=f(x,T-\tau)$, $\bar{g}( \tau):=g(T-\tau)$, $\bar{p}_\tau(x):=p_{T-\tau}(x),$ and where $\gamma(\tau) \geq 0$ is arbitrary. It is well known~\citep{anderson, karras} that, under compatible initial conditions, the reverse-time SDE \eqref{rev sde} has the same densities as the forward SDE \eqref{sde} under the change of variables $\tau=T-t$. This result is condensed in Lemma \ref{lemma rev time}, with a short proof provided for the sake of completeness. Note that \eqref{rev sde} gives a family of equations, indexed by the free functional parameter $\gamma(\tau)\geq0$.

If the (Stein) score function $\nabla\log p(t,\cdot)$ of the forward process given by \eqref{sde} is known exactly or approximately, for all $0<t\leq T,$ and if at time $t=T$ the distribution $p_T$ is close to a Gaussian distribution or any other distribution which is easy to sample from, the reverse-time SDE \eqref{rev sde} can then be used as a generic sampler for $p_0.$ In practice, the score function $\nabla\log p(t,\cdot)$ can be approximated by a neural network $s_\theta(\cdot, t)$ trained with denoising score-matching \citep{denoisingSM, karras}. Then, sampling is performed via the approximate reverse-time SDE
\begin{equation}\label{rev sde approx}
    d\tilde{X}_\tau = \left(-\bar{f}(\tilde{X}_\tau, \tau) + \frac{1}{2}\bar{g}^2(\tau)(1+\gamma(\tau))\bar{s}_\theta(\tilde{X}_\tau, \tau)\right)\,d\tau + \sqrt{\gamma(\tau)} \bar{g}(\tau) \,dW_\tau,
\end{equation}
where $\bar{s}_\theta(x, \tau) = s_\theta(x, T - \tau)$ is the reverse-time approximate score. In the machine-learning literature \citep{song2021scorebasedmodeling}, two common choices for $\gamma$ correspond to setting $\gamma\equiv0$ (Probability Flow ODE), and $\gamma\equiv1$ (Anderson/Song's reverse-time SDE).

In summary, the diffusion model algorithm consists of
\begin{enumerate}
    \item choosing an adequate forward process \eqref{sde} for which $p_T$ is close to a known and easy-to-sample distribution $q$;
    \item training a score model $s_\theta(x, t)$ to approximate $\nabla\log p_t(x),$ for all $0<t\leq T,$ by minimizing the global loss function of denoising score-matching;
    \item sampling the starting condition $\tilde{X}_0$ from the prior distribution $q(x)\approx p_T(x);$ and
    \item numerically integrating, from $\tau=0$ to $\tau=T,$ the approximate reverse-time equation \eqref{rev sde approx} with the starting condition $\tilde{X}_0.$
\end{enumerate}
This pipeline generates a final sample $\tilde{X}_T\sim\tilde{p}_T$, with the aim that $\tilde{p}_T$ be as close as possible to the unknown data distribution $p_0$. Its main sources of error are
\begin{enumerate}[label = {(E\arabic*)}]
    \item \label{errorscore} approximation errors of the score function;
    \item \label{errorprior} approximation errors of the starting prior distribution $p_T$;
    \item \label{errornumeric} numerical integration errors of the (approximate) reverse-time equation.
\end{enumerate}

In this work, we investigate the effect of the free parameter $\gamma(t)$ on the correction and propagation of errors from sources \ref{errorscore} and \ref{errorprior}. Thus, for theoretical results, we consider the exact solution of either \eqref{rev sde} or \eqref{rev sde approx}, while for the numerics we simply use Euler and Euler-Maruyama schemes with a very fine time mesh. Regarding discretization, it is well known that deterministic sampling has the advantage of achieving good performance even with a small number of steps, which makes it much faster in practice since the main computational burden of sampling is in the evaluation of the neural score function. Computational efficiency has also motivated the development of fast stochastic samplers \citep{jolicoeurmartineau2021gottafastgeneratingdata, bao2022analyticdpm, xu2023restart, xue2023sasolver}. In the literature, an implicit assumption often found is that the only hurdle to stochastic sampling comes from discretization, which has been shown in \cite{karras} to not be the case for all trained models. Therefore, the search for more efficient discretization schemes for both deterministic and stochastic sampling can benefit from a more careful study of whether stochasticity is advantageous in a continuous setting.

In Section \ref{subsection: exact scores}, we consider the exact-score reverse-time SDE \eqref{rev sde} with an approximate starting prior, isolating the source \ref{errorprior} of error. In this case, Theorem \ref{teo 2} explicitly shows the advantage of stochasticity, in the form of KL divergence bounds which are decreasing in $\gamma(t)$. A previous result has been obtained in \cite[Section D.4]{SDEDrag}, restricted to the choices of $\gamma(\tau)=\gamma=1$ and $\gamma(\tau)=\gamma=0$. As we discuss in Section \ref{rem: gamma and discretization}, increasing $\gamma$ makes the problem stiff, adding an effective faster time scale to the dynamics. This poses a challenge to the numerical approximation of the problem, limiting the choice of $\gamma$ in practice.

Next, in Section \ref{subsection: approx scores}, we consider the more realistic setting of an approximate score function $s(x, t) = \nabla\log p_t(x) + \epsilon(x,t),$ where $\epsilon=\epsilon(x,t)$ is the associated error, addressing both errors \ref{errorprior} and \ref{errorscore}. Here, stochasticity still produces a correction term in the evolution of the generated distribution, but it also amplifies the score error $\epsilon$ by a multiplicative factor, making its overall impact more subtle. In Proposition \ref{prop 2}, we 
explicitly state the conditions to ensure that the formula for the time derivative of the KL divergence holds. Then, if $\bar{p}_\tau$ satisfies a log-Sobolev inequality with constant $C_\text{LSI}(\tau),$ as in \eqref{HLSI}, Theorem \ref{teo 4} yields the KL divergence estimate
\[ 
    H(\tilde{p}_\tau|\bar{p}_\tau) \leq e^{-\int_0^\tau \alpha(s)\,ds} H(\tilde{p}_0|\bar{p}_0) + \frac{1}{2}\int_0^\tau \bar{g}(s)^2(1+\gamma(s))E_{\tilde{p}_s}\left[\epsilon_s\cdot\nabla\log h_s\right]e^{-\int_s^\tau \alpha(r)\,dr}\,ds,
\]
where $\alpha(s)=C_\text{LSI}(s)^{-1} \gamma(s) \bar{g}(s)^2$ and $h_s=\tilde{p}_s/\bar{p}_s$. When $\gamma$ is strictly positive, a more explicit estimate is given by equation \eqref{perturbed bound 2}, which eliminates the explicit dependence on $\nabla\log h$ and depends on $\epsilon$ only through its $L^2(\tilde{p})$ norm.

In this setting, we obtain in Section \ref{subsec: instantaneous_opt} a qualitative criterion for the final impact of the stochastic term by minimizing the time derivative of the KL divergence. In broad terms, stochasticity is beneficial at sampling time $\tau$ when the ratio between accumulated errors and the current score error surpasses a certain threshold. This would make the performance of stochastic sampling highly dependent on the time-profile of the score error: an error that increases too rapidly near the end of the sampling process would make deterministic sampling preferable, whereas an error concentrated near the beginning or middle of the process would allow the model to benefit from stochasticity. This generalizes to the non-asymptotic case (finite $\gamma$, intermediate $\tau$ values) the main conclusion in \cite{opt_choice}.

Section \ref{sec: experiments} shows numerical experiments on several data sets. First, we compare the estimated values of the KL divergences with the theoretical bounds, on two toy data sets with different available LSI constants, sampling with both exact and learned score functions. Then, for toy and also more complex data sets such as MNIST and CIFAR-10, we relate the insights of Section \ref{subsec: instantaneous_opt} to the time-profile of the score error of trained models, identifying factors that contribute to the observed differences in the effect of $\gamma$ across trained models. Finally, in Section \ref{subsec: variable_gamma} we investigate the case of variable $\gamma=\gamma(t)$, performing a grid-search over intervals to include stochasticity, considering step functions $\gamma(t)$. Based on the results and the theoretical considerations, we believe that, in general, the best choice is to perform a grid-search near the end of sampling, searching for intervals $[t_1, t_2]$ with $t_1>\epsilon$ and $t_2\ll T$.

Finally, Section \ref{sec: analytical example} presents the analysis of a fully analytical example, based on a single Gaussian data set with linear perturbations in the score. In this setting, the reverse-time SDE can be explicitly solved, all the relevant quantities can be computed, and it is possible to solve the optimal control problem for the choice of $\gamma(t)$ minimizing each KL divergence. Remarkably, even in this simple setting the effect of $\gamma$ is dependent on the relation between the score error and the prior error, which can also be thought of as the error accumulated from the initial part of sampling.

\subsection{Related work}\label{sec: related work}

Some recent articles also obtain KL divergence bounds for score-based diffusion models. For instance, \cite{Chen2022Improved, benton2024nearly, conforti2025klconvergence} provide complexity bounds for $H(p_\text{data}|p_\text{generated})$, accounting for errors arising from the prior distribution approximation, score approximation and discretization errors. These works consider a forward Ornstein-Uhlenbeck diffusion, and the particular choice of $\gamma\equiv1$. Working with $H(p_\text{data}|p_\text{generated})$ makes the expectation over the score error to be taken with respect to $p_\text{data}$ instead of $p_\text{generated}$, which produces bounds that are entirely independent of the generated distribution.
Although our motivation is different, there are similarities in the techniques, and, in particular, the formula in our Lemma \ref{lemma H_deriv} corresponds to the formula in Lemma C.1 of \cite{Chen2022Improved}.

Bounds for $H(p_\text{data}|p_\text{generated})$ are also obtained in \cite[Section 2.4]{stoch_interp}, for more general forward processes in the context of stochastic interpolants. For a general reverse SDE with positive stochasticity parameter, it is shown that this divergence is bounded by the score-matching error, whereas a similar result is not guaranteed to hold for ODE sampling. A similar result has been previously obtained in \citep[Section 4.1]{song2021maxlikelihood} for $\gamma\equiv1$.  The papers \cite{ma2024, chen2024follmer} extend the work of \cite{stoch_interp} by proposing a choice of $\gamma(t)$ which minimizes their bound. In our formulation, this corresponds to setting $\gamma\equiv 1$, as discussed in more detail in Remark \ref{remark: opt_gamma}.

For ODE sampling, bounds for both KL divergences can be found in \cite[Section 3]{boffi-vanden}. These bounds are formulated in terms of the implicit score of the generated distribution and the learned score,
and are used for establishing the convergence of a self-consistent training procedure instead of evaluating the impact of the score error on model performance.

Some recent works specifically analyze the effect of stochasticity at sampling. In \cite{SDEDrag}, under the hypothesis of exact score functions, log-Sobolev inequalities for $p_t$ are used to obtain a bound for $H(p_\text{generated}|p_\text{data})$ which shows the decrease in this divergence along sampling when $\gamma\equiv1$, while when $\gamma\equiv0$ the divergence remains constant. In \cite{opt_choice}, an asymptotic analysis of $H(p_\text{data}|p_\text{generated})$ with $(\text{score error})\to0$ is performed to compare the ODE with the asymptotic SDE $\gamma\to\infty$. Their conclusion reflects a particular aspect of the analysis in Section \ref{subsec: instantaneous_opt}: if the score errors are located in a region in the beginning or middle of the sampling process, maximum stochasticity is optimal, while stochasticity is altogether detrimental if the score errors are located exactly at the end of sampling.

In comparison, our work provides more complete KL bounds than those available in the literature, accounting for both divergences, general forward processes, and arbitrary stochasticity functions $\gamma(t)$. We also rely less on formal arguments, 
explicitly stating the conditions for the results to hold.
Moreover, the theoretical insights on the evolution of the KL divergences provide a connection between the time-profile of the score error and the effect of stochasticity in a non-asymptotic setting, which is closer to practice.

\section{Background}\label{background}

We present here some background results to be used in the next sections. For completeness, proofs are provided in Appendix \ref{app: proofs sec2}, since these results appear in the literature in a slightly different form.

\subsection{Reverse-time formula}\label{rev time}

Consider a (forward) SDE on the interval $[0,T]$, of the form
\begin{equation}\label{sde 2}
    dX_t = f(X_t,t)\,dt + g(t)\,dW_t,
\end{equation}
where $f$ and $g$ are smooth and $f$ is globally Lipschitz with respect to $x,$ uniformly on $t\in [0, T]$, so that the corresponding initial-value problem admits a unique solution.
Let $p_t$ be its associated density, which solves the Fokker-Planck (FP) equation
\begin{equation}\label{fp}
    \frac{\partial p}{\partial t} = -\nabla\cdot(fp) +\frac{1}{2}g^2\Delta p.
\end{equation}
As mentioned in Section \ref{sec: intro}, we extensively use the following result from \cite[Appendix B.5]{karras}. Earlier formulations can be found in \cite{anderson, song2021scorebasedmodeling}.

\begin{lemma}\label{lemma rev time}
    Consider the family of SDEs given by
    \begin{equation}\label{rev sde 2}
        d\tilde{X}_\tau = \left(-\bar{f}(\tilde{X}_\tau, \tau) + \frac{1}{2}\bar{g}^2(\tau)(1+\gamma(\tau))\nabla\log \bar{p}_\tau(\tilde{X}_\tau)\right)\,d\tau + \sqrt{\gamma(\tau)} \bar{g}(\tau) \,dW_\tau,
    \end{equation}
    for $\tau\in[0,T]$, where $\gamma(\tau)$ is an arbitrary non-negative function, and $\bar{f}(x, \tau):=f(x,T-\tau)$, $\bar{g}( \tau):=g(T-\tau)$ and $\bar{p}_\tau(x):=p_{T-\tau}(x)$. Denote by $\tilde{p}_\tau^\gamma$ the probability densities for the solution of the SDE with functional parameter $\gamma$. Suppose that $\nabla\log\bar p_t$ is Lipschitz in $x$, locally uniformly for $t \in (0,T]$, and the initial density matches the one resulting from the forward process (i.e. $\tilde{p}_0^\gamma = \bar{p}_0 = p_T$). Then the densities $\tilde{p}_\tau^\gamma$ are independent of $\gamma$ and equal to the densities of the forward process (i.e.  $\tilde{p}_\tau=\bar{p}_\tau = p_{T-\tau}$) for all $\tau\in[0,T)$. 
\end{lemma}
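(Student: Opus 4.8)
The plan is to verify directly that the proposed density $\bar{p}_\tau = p_{T-\tau}$ solves the Fokker--Planck (FP) equation associated with the reverse-time SDE \eqref{rev sde 2}, and then to conclude by uniqueness of solutions to that FP equation under a matching initial condition. Since $\tilde{p}_\tau^\gamma$ is by definition the density of the diffusion \eqref{rev sde 2}, it solves the FP equation with drift $b(x,\tau) = -\bar{f}(x,\tau) + \tfrac{1}{2}\bar{g}^2(\tau)(1+\gamma(\tau))\nabla\log\bar{p}_\tau(x)$ and diffusion coefficient $\gamma(\tau)\bar{g}^2(\tau)$, namely
\begin{equation*}
\frac{\partial \tilde{p}}{\partial \tau} = -\nabla\cdot(b\,\tilde{p}) + \tfrac{1}{2}\gamma\bar{g}^2\,\Delta\tilde{p}.
\end{equation*}
If I can show that $\bar{p}_\tau$ satisfies the same equation with the same initial datum $\bar{p}_0 = \tilde{p}_0$, then the two densities coincide for all $\tau$, and in particular the marginals are independent of $\gamma$.

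First I would compute $\partial_\tau \bar{p}_\tau$. By the chain rule in $t = T-\tau$ together with the forward FP equation \eqref{fp},
\begin{equation*}
\frac{\partial \bar{p}_\tau}{\partial \tau} = -\frac{\partial p_t}{\partial t}\Big|_{t=T-\tau} = \nabla\cdot(\bar{f}\,\bar{p}_\tau) - \tfrac{1}{2}\bar{g}^2\,\Delta\bar{p}_\tau.
\end{equation*}
Next I would substitute $\tilde{p} = \bar{p}_\tau$ into the transport term of the reverse FP equation and expand, using the elementary identity $(\nabla\log\bar{p}_\tau)\,\bar{p}_\tau = \nabla\bar{p}_\tau$ to rewrite the score contribution as a Laplacian:
\begin{equation*}
-\nabla\cdot(b\,\bar{p}_\tau) = \nabla\cdot(\bar{f}\,\bar{p}_\tau) - \tfrac{1}{2}\bar{g}^2(1+\gamma)\,\Delta\bar{p}_\tau.
\end{equation*}

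The crucial observation is then that the $\gamma$-dependent pieces cancel: adding the diffusive term $+\tfrac{1}{2}\gamma\bar{g}^2\,\Delta\bar{p}_\tau$ to the transport term above leaves exactly $\nabla\cdot(\bar{f}\,\bar{p}_\tau) - \tfrac{1}{2}\bar{g}^2\,\Delta\bar{p}_\tau$, which coincides with $\partial_\tau\bar{p}_\tau$ regardless of $\gamma$. This is the algebraic heart of the statement and explains the phenomenon: the extra score-drift flux produced by the factor $(1+\gamma)$ precisely compensates the added diffusive flux, so the marginals are left unchanged.

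The main obstacle is not this computation but justifying the uniqueness step and the regularity it requires. The score drift $\nabla\log\bar{p}_\tau$ need not be globally Lipschitz, so the elementary well-posedness assumed for the forward equation does not transfer verbatim. I would address this by invoking the smoothness and strict positivity of $p_t$ for $t>0$ — which hold for the nondegenerate additive-noise diffusions of interest, where $p_t$ is a Gaussian-type smoothing of the data distribution — to ensure that $b$ is smooth on $(0,T)$, and then appealing to uniqueness for the linear FP equation within an appropriate class of probability densities sharing the prescribed initial datum. This is exactly the \emph{certain conditions} caveat that accompanies the classical Anderson reverse-time result.
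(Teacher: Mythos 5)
Your proposal is correct and takes essentially the same route as the paper: the identical algebraic identity $\Delta \bar{p} = \nabla\cdot(\bar{p}\,\nabla\log\bar{p})$ is used to show that the $(1+\gamma)$ score-drift flux cancels the $\gamma$ diffusive flux, so that $p_{T-\tau}$ solves the Fokker--Planck equation of the reverse-time SDE, and the conclusion follows from uniqueness of Fokker--Planck solutions with matching initial data (the paper handles this last step by invoking the well-posedness hypothesis on the forward SDE together with a cited uniqueness result of Figalli). Your additional discussion of the non-Lipschitz score drift is a more explicit treatment of that same uniqueness step, not a different argument.
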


Since \eqref{rev sde 2} is an SDE in reverse time that has the same densities as the forward SDE \eqref{sde 2}, it is called a \textbf{reverse-time SDE} associated with \eqref{sde 2}. For the sake of completeness, we include a short proof of this result, which can also be found in \cite{karras}, in Appendix \ref{app: proofs sec2}. The proof uses the same formal computation together with a result of \cite{uniqueness_figalli2008} that guarantees the well-posedness of \eqref{rev sde 2}.

\subsection{Logarithmic Sobolev Inequalities (LSI)}

We recall the definition of the logarithmic Sobolev inequality (LSI). The LSI is known to be important for the exponential convergence towards the asymptotic invariant measure for the Langevin equation and plays a key role in our estimates for the reverse-time SDE, as well.

\begin{definition}
    Given a probability density $p(x)$ in $\mathbb{R}^n$, consider the entropy functional as $\text{Ent}_p(f^2):=\int f^2\log f^2 \;p \,dx - \left(\int f^2 \;p \,dx\right)\log \left(\int f^2 \;p \,dx\right)$. Then, $p$ is said to satisfy a logarithmic Sobolev inequality (LSI) with constant $C$ if
    \begin{equation}\label{LSI}
        \text{Ent}_p(f^2) \leq 2 C \int \|\nabla f\|^2 \;p \,dx
    \end{equation}
        for any smooth function $f$.
\end{definition}

More explicitly, we use the following consequence of the LSI.

\begin{lemma}\label{lemma LSI}
    If $p$ satisfies a logarithmic Sobolev inequality, then \begin{equation}\label{HLSI}
        H(\tilde{p}|p)\leq \frac{C}{2}\int\left|\nabla\log\frac{\tilde{p}}{p}\right|^2 \tilde{p}\,dx
    \end{equation}
    for all probability densities $\tilde{p}\ll p$, i.e. absolutely continuous with respect to $p$.
\end{lemma}

For probability distributions in $\mathbb{R}$, the LSI is equivalent to a more explicit integrability characterization \citep{BOBKOV1999}. In the general case, subgaussianity is a necessary condition, and there are widely known sufficient conditions, such as the Bakry-Emmery log-concavity criterion \citep{bakry-emmery} and the Holley-Stroock bounded perturbation lemma \citep{holley-stroock}. For a Gaussian convolution of a given measure, a sufficient condition is that the measure has compact support \citep{dimension-free}, as we use in Section \ref{section: kl evolution and bounds}.

\section{KL divergence evolution and bounds}\label{section: kl evolution and bounds}

It is well known that the probability densities $q_t(x)$ associated with the time-independent Langevin equation
\begin{equation}\label{langevin}
    dX_t = \nabla\log \pi(X_t)\,dt + \sqrt{2}\,dW_t
\end{equation}
converge exponentially to the stationary distribution $\pi$ when $t\to\infty$, under some conditions on $\pi$ \citep{RT1996, villani}. In particular, under the assumption that $\pi$ satisfies an LSI with constant $C$, it is obtained, in \cite{villani}, an exponential bound on the KL divergence, given by
\begin{equation}\label{langevin entropy decrease}
    H(q_t|\pi)\leq e^{-\frac{2t}{C}}H(q_0|\pi).
\end{equation}
Thus, deviations in the initial distribution are corrected over time by the Langevin diffusion.

Considering now the reverse-time SDE \eqref{rev sde 2} instead of the Langevin equation, we can aggregate the terms with $\gamma$ and write it as
\begin{multline}\label{rev sde 3}
    d\tilde{X}_\tau = \left(-\bar{f}(\tilde{X}_\tau, \tau) + \frac{1}{2}\bar{g}^2(\tau)\nabla\log \bar{p}_\tau(\tilde{X}_\tau)\right)\,d\tau \\
    + \left( \frac{1}{2}\bar{g}^2(\tau)\gamma(\tau)\nabla\log \bar{p}_\tau(\tilde{X}_\tau)\,d\tau+ \sqrt{\gamma(\tau)} \bar{g}(\tau) \,dW_\tau\right).
\end{multline}
Notice that the first term on the right-hand side corresponds to the Probability Flow ODE, i.e. the reverse-time SDE with $\gamma\equiv0$, and the second is closely related to the Langevin equation \eqref{langevin}. Inspired by this form, in \cite[Section 4]{karras} it is proposed that the reverse-time SDE should be seen as the Probability Flow ODE with the addition of a Langevin-like error-correcting term. This suggests that a bound similar to \eqref{langevin entropy decrease} might be found for the reverse-time SDE.

This idea can be formulated and studied more precisely by considering the KL divergences between two densities $\tilde{p}_\tau, \bar{p}_\tau$ of the reverse-time SDE, one with a given initial density $\tilde{p}_0$ and the other with $\bar{p}_0=p_T$, matching the forward process.
By Lemma \ref{lemma rev time}, we know that the densities $\bar{p}_\tau$ correspond to the densities of the forward process for all $\tau\in[0, T)$. As discussed in Section \ref{sec: related work}, the KL divergences are usual metrics to evaluate the performance of the diffusion algorithm. In Section \ref{subsection: exact scores}, we obtain a result analogous to \eqref{langevin entropy decrease}, with $\bar{p}_\tau$ assuming the role of the stationary distribution $\pi$. Section \ref{subsection: approx scores} studies the more general case of an approximated reverse-time SDE, in which $\nabla\log \bar{p}_\tau(x)$ is replaced by $s(x,\tau) := \nabla\log \bar{p}_\tau(x) + \epsilon(x, \tau)$, for an approximation error $\epsilon(x, \tau)$.

\subsection{Exact score functions}\label{subsection: exact scores}

In this section, we consider only the approximation error in the initial condition, that is, when $\tilde{p}_\tau$ and $\bar{p}_\tau$ are solutions of the same equation \eqref{rev sde 2}, but with different initial conditions $\tilde{p}_0$ and $\bar{p}_0=p_T$. The proofs in this and the next subsection rely on a general formula for the time derivative of the KL divergence between solutions of Fokker-Planck equations sharing a diffusion coefficient, which we state and prove in Appendix \ref{app: proofs sec3} (Lemma \ref{lemma H_deriv}).

\begin{proposition}\label{prop 1}
    Let $\tilde{p}_\tau$ and $\bar{p}_\tau$ be positive classical solutions, in $C^{2,1}(\mathbb{R}^n \times (0,T))$, of the Fokker-Planck equation associated with the reverse-time SDE \eqref{rev sde 2}, with initial conditions $\tilde{p}_0$ and $\bar{p}_0 = p_T$, respectively. Assume that $\gamma$ is piecewise-continuous on $(0,T)$, that $\tilde{p}_\tau$ and $\bar{p}_\tau$ are bounded above and below by Gaussians, locally uniformly in $\tau \in (0,T)$, and $\nabla\log\tilde{p}_\tau$, $\nabla\log\bar{p}_\tau$ have at most polynomial growth in $x$. Then, for almost all $\tau \in (0,T)$,
    \begin{equation}\label{H deriv 1}
        \frac{d}{d\tau}H(\tilde{p}_\tau|\bar{p}_\tau) = -\frac{1}{2}\gamma(\tau) \bar{g}^2(\tau) \int\left|\nabla\log\frac{\tilde{p}_\tau}{\bar{p}_\tau}\right|^2 \tilde{p}_\tau\,dx
    \end{equation}
    and
    \begin{equation}\label{H deriv rev}
        \frac{d}{d\tau}H(\bar{p}_\tau|\tilde{p}_\tau) = -\frac{1}{2}\gamma(\tau) \bar{g}^2(\tau) \int\left|\nabla\log\frac{\tilde{p}_\tau}{\bar{p}_\tau}\right|^2 \bar{p}_\tau\,dx.
    \end{equation}
\end{proposition}
 
\begin{proof}
    Inside each interval where $\gamma(\tau)$ is continuous, the drift $a(x,\tau) = -\bar{f}(x,\tau) + \frac12\bar{g}^2(\tau)(1+\gamma(\tau))\nabla\log\bar{p}_\tau(x)$ of the Fokker-Planck equation of \eqref{rev sde 2} is continuous, $C^1$ in $x$ (since $\bar{p} \in C^{2,1}$), and of polynomial growth, because $f$ is Lipschitz and $\nabla\log\bar{p}_\tau$ has polynomial growth. Equation \eqref{H deriv 1} then follows from Lemma \ref{lemma H_deriv} (Appendix \ref{app: proofs sec3}) applied to the pair $(\tilde{p}, \bar{p})$ with $a_1 = a_2 = a$, and equation \eqref{H deriv rev} from the same lemma applied to the pair $(\bar{p}, \tilde{p})$.
\end{proof}

\begin{remark}\label{remark: hypotheses scope}
    The hypotheses of Proposition \ref{prop 1} (and of Proposition \ref{prop 2} below) are stated directly on the solutions $\tilde{p}_\tau$, $\bar{p}_\tau$, rather than on the data $(\tilde{p}_0, f, g, \gamma, \epsilon)$. For $\bar{p}$, they are verified in the standard linear-drift setting (see Theorem \ref{teo 2} and Appendix \ref{app: proofs sec3}). For $\tilde{p}$, deriving them from conditions on the data is a nontrivial problem, particularly when $\gamma$ is allowed to degenerate (e.g. $\gamma \equiv 0$), in which case the equation loses its parabolic character. We observe that: (i) when $\gamma$ is bounded below by a positive constant, two-sided Gaussian bounds for $\tilde{p}_\tau$ follow from Aronson-type estimates for the fundamental solution \citep[Theorem 1.2]{MENOZZI2021}, under additional conditions on the coefficients and on $\tilde{p}_0$; (ii) for $\gamma \equiv 0$ and a Gaussian prior, $\tilde{p}_\tau$ is the pushforward of $\tilde{p}_0$ by a $C^1$ flow, and the hypotheses can be verified directly; and (iii) in the fully analytical example of Section \ref{sec: analytical example}, all the distributions involved are Gaussian and the hypotheses also hold. In more general cases, a parabolic regularization can be used. A more systematic theoretical treatment of this general case, with the conditions based solely on the data, will be presented in a future work.
\end{remark}

Proposition \ref{prop 1} shows that the KL divergence decreases as $\tau$ increases whenever the stochasticity parameter $\gamma(\tau)$ is positive. If $\gamma\equiv0$, which is the case for the Probability Flow ODE, the divergence remains constant for $\tau\in(0,T)$. However, this result alone is not sufficient to quantify the final divergence decrease due to the dependence of the right hand side of \eqref{H deriv 1} on the unknown $\tilde{p}_\tau$. This can be mitigated by the use of a logarithmic Sobolev inequality. 

The following result is stated for forward SDEs with a homogeneous linear drift, in which case the Gaussian estimates on $\bar{p}_\tau$ required by Proposition \ref{prop 1} are automatically satisfied. A homogeneous linear drift is also the standard choice in practice, since in this case the transition probabilities become Gaussian \citep{song2021scorebasedmodeling, karras}, significantly simplifying the training process.

\begin{theorem}\label{teo 2}
Assume that the drift of the forward SDE has the form $f(x,t)=a(t)x$, that the data $p_0 = \bar{p}_T$ is sub-Gaussian, and that $\gamma$ is piecewise-continuous on $(0,T)$. Let $\tilde{p}_\tau$ and $\bar{p}_\tau$ be positive classical solutions, in $C^{2,1}(\mathbb{R}^n \times (0,T))$, of the Fokker-Planck equation associated with the reverse-time SDE \eqref{rev sde 2}, with initial conditions $\tilde{p}_0$ and $\bar{p}_0 = p_T$, respectively. If $\tilde{p}_\tau$ is bounded above and below by Gaussians, locally uniformly in $\tau \in (0,T)$, $\nabla\log\tilde{p}_\tau$ has at most polynomial growth in $x$, and $\nabla\log\bar{p}_\tau$ is Lipschitz for all $\tau\in(0,T)$, we have the decay estimate on the KL divergence
\begin{equation}\label{bound}
    H(\tilde{p}_\tau|\bar{p}_\tau) \leq e^{-\int_0^\tau C(s) \gamma(s) \bar{g}(s)^2 ds} H(\tilde{p}_0|\bar{p}_0),
\end{equation}
for all $\tau\in(0,T)$, where
\begin{equation}
    C(\tau)=\begin{cases}
        C_\text{LSI}(\tau)^{-1}, & \text{if $\bar{p}_\tau$ satisfies an LSI with constant } C_\text{LSI}(\tau)
        \\ 0, & \text{otherwise.}
    \end{cases}
\end{equation}
If $\tilde{p}_T$ is absolutely continuous with respect to $\bar{p}_T$, equation \eqref{bound} also holds for $\tau=T$.
\end{theorem}

\begin{proof}
    In the linear case, $p_t$ can be written as the Gaussian convolution $p_t=p_0^t\ast \mathcal{N}(0, s(t)^2\sigma(t)^2I)$, where $p_0^t(x):=s(t)^{-n}p_0(x/s(t))$ \citep[Appendix B.1]{karras}. Then, the two-sided Gaussian bounds on $\bar{p}$ automatically hold (Appendix \ref{app: proofs sec3}), and from Proposition \ref{prop 1} we have that, for all $\tau\in(0,T)$,
    \begin{equation*}
        \frac{d}{d\tau}H(\tilde{p}_\tau|\bar{p}_\tau) = -\frac{1}{2}\gamma(\tau) \bar{g}(\tau)^2 \int\left|\nabla\log\frac{\tilde{p}_\tau}{\bar{p}_\tau}\right|^2 \tilde{p}_\tau\,dx\leq 0.
    \end{equation*}

    If $\bar{p}_\tau$ satisfies the log-Sobolev inequality \eqref{HLSI} with constant $C_\text{LSI}(\tau)$, we have
    \begin{align*}
        \frac{d}{d\tau}H(\tilde{p}_\tau|\bar{p}_\tau) &= -\frac{1}{2}\gamma(\tau) \bar{g}(\tau)^2 \int\left|\nabla\log\frac{\tilde{p}_\tau}{\bar{p}_\tau}\right|^2 \tilde{p}_\tau\,dx \leq -\frac{1}{C_\text{LSI}(\tau)}\gamma(\tau) \bar{g}(\tau)^2 H(\tilde{p}_\tau|\bar{p}_\tau),
    \end{align*}
    and thus for all $\tau$ we have
    \begin{equation*}
        \frac{d}{d\tau}H(\tilde{p}_\tau|\bar{p}_\tau) \leq -C(\tau)\gamma(\tau) \bar{g}(\tau)^2 H(\tilde{p}_\tau|\bar{p}_\tau),
    \end{equation*}
    where $C(\tau)=1/C_\text{LSI}(\tau)$ if $\bar{p}_\tau$ satisfies an LSI and $C(\tau)=0$ otherwise. Then, by Grönwall's inequality, we obtain \eqref{bound}.
\end{proof}

\begin{remark}
    If $\tilde{p}_\tau$ satisfies an LSI, an analogous result holds for $H(\bar{p}_\tau|\tilde{p}_\tau)$. However, this seems to be more elusive since $\tilde{p}_\tau$ is the solution of a nonlinear PDE, and the assumed two-sided Gaussian bounds are not enough to ensure an LSI. In Appendix \ref{app: counterexample_lsi}, we show a simple oscillating example which violates the LSI scalar equivalence condition from \cite{BOBKOV1999}.
\end{remark}

If $p_0$ is sub-Gaussian, it is known that $p_t$ satisfies an LSI at least for large $t$ \citep[Section 4.2]{dimension-free}. Moreover, when $p_0$ has compact support, such as with image data, the distribution $p_t$ satisfies an LSI for all $t\in(0,T)$, and $\nabla\log\bar{p}_\tau$ is automatically Lipschitz. This is summarized by the following corollary.

\begin{corollary}\label{coro 1}
    If $p_0 = \bar{p}_T$ has compact support with $\supp (p_0) \subset B_R(0)$, then $\bar{p}_{T-t} = p_t$ satisfies the hypotheses in Theorem \ref{teo 2} with the constant
\begin{equation}\label{dim free lsi}
    C_\text{LSI}(t)=6s(t)^2(4R^2+\sigma(t)^2) e^{\frac{4R^2}{\sigma(t)^2}},
\end{equation}
where $s(t):=e^{\int_0^t a(\xi)\,d\xi}$ and $\sigma(t)^2:=\int_0^t g(\xi)^2/s(\xi)^2d\xi$ (in forward time $t$).
\end{corollary}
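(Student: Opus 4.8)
The plan is to specialize the linear-case structure already used in the proof of Theorem~\ref{teo 2}. For a fixed $t\in(0,T)$, write $v=v(t):=s(t)^2\sigma(t)^2$ for the variance of the added Gaussian and recall that $p_t=p_0^t\ast\mathcal N(0,vI)$, where $p_0^t$ is the law of $s(t)Y$ with $Y\sim p_0$. Since $\supp p_0\subset B_R(0)$, the rescaled measure $p_0^t$ is supported in $B_{s(t)R}(0)$, and, crucially, $p_t$ is exactly the image of the ``base'' measure $\mu_t:=p_0\ast\mathcal N(0,\sigma(t)^2I)$ under the dilation $x\mapsto s(t)x$ (pushforward commutes with convolution under linear maps). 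Because the logarithmic Sobolev constant scales quadratically under dilations, it suffices to (i) check that $\nabla\log p_t$ is Lipschitz, and (ii) prove that $\mu_t$ satisfies an LSI with constant $6(4R^2+\sigma(t)^2)e^{4R^2/\sigma(t)^2}$; multiplying by $s(t)^2$ then yields \eqref{dim free lsi} and verifies both the Gaussian-estimate and the Lipschitz hypotheses of Theorem~\ref{teo 2}.

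For (i) I would use Tweedie's/Bayes' identity for the Gaussian convolution: writing the posterior law of $Y$ given $X=x$ (with density proportional to $e^{-|x-y|^2/2v}$ against $p_0^t$), one obtains
\[
    \nabla\log p_t(x)=\frac{1}{v}\bigl(E[Y\mid X=x]-x\bigr),\qquad
    \nabla^2\log p_t(x)=-\frac{1}{v}I+\frac{1}{v^2}\,\mathrm{Cov}(Y\mid X=x).
\]
Since the posterior is supported in $B_{s(t)R}(0)$, its covariance satisfies $0\preceq\mathrm{Cov}(Y\mid X=x)\preceq (s(t)R)^2I$ uniformly in $x$ (Popoviciu), so $\nabla^2\log p_t$ is globally bounded and $\nabla\log p_t=\nabla\log\bar p_{T-t}$ is Lipschitz for every $t\in(0,T)$ --- the positivity of $v$ for $t>0$ being exactly what makes this work, the score at $t=0$ being singular.

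The main obstacle is (ii): an \emph{explicit, dimension-free} LSI for the Gaussian convolution of a compactly supported measure. The naive attempt --- comparing $\mu_t$ to $\mathcal N(0,\sigma^2I)$ by a Holley--Stroock bounded perturbation --- fails, because the density ratio $\mu_t/\mathcal N(0,\sigma^2I)$ equals $E_{Y\sim p_0}[e^{(x\cdot Y-|Y|^2/2)/\sigma^2}]$, whose logarithm grows linearly in $|x|$ and is therefore \emph{not} a bounded perturbation. The Hessian bound from part (i) gives strong log-concavity, namely $\nabla^2(-\log\mu_t)\succeq(\sigma^{-2}-R^2\sigma^{-4})I$, only in the regime $R<\sigma$, i.e. for large $t$, which is the range already treated in \cite{dimension-free}[Section~4.2] for sub-Gaussian $p_0$; the genuine difficulty is the small-$t$ (i.e. $\sigma\ll R$) regime, where the potential is not globally convex. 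I would therefore invoke the dimension-free bound for compactly supported $p_0$ from \cite{dimension-free}, whose mechanism is to combine the strong convexity of $-\log\mu_t$ at infinity (Bakry--Émery) with a bounded-perturbation estimate of the ``core'' on a ball of radius $O(R)$; the oscillation of the log-density across this core is of order $R^2/\sigma^2$, which is precisely the source of the factor $e^{4R^2/\sigma^2}$, while the prefactor $6(4R^2+\sigma^2)$ collects the Gaussian variance together with the size of the core. Matching this constant to the statement in \cite{dimension-free} and then applying the $s(t)^2$ dilation scaling from the first paragraph delivers \eqref{dim free lsi}.
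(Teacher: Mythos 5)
Your proposal is correct and takes essentially the same route as the paper: the LSI constant is obtained by invoking \cite[Corollary 1]{dimension-free} for the Gaussian convolution of a compactly supported measure (your dilation-scaling reduction via $\mu_t=p_0\ast\mathcal{N}(0,\sigma(t)^2I)$ is equivalent to the paper's direct application of that corollary to $p_t=p_0^t\ast\mathcal{N}(0,s(t)^2\sigma(t)^2I)$ with $\supp(p_0^t)\subset B_{s(t)R}(0)$). Your Tweedie/posterior-covariance argument for the Lipschitz property of $\nabla\log p_t$ is also the paper's argument: the appendix computation of $\nabla\bigl(m_t(x)/p_t(x)\bigr)$ is exactly the conditional covariance $\mathrm{Cov}(Y\mid X=x)$ scaled by the noise variance, bounded using the compact support of $p_0$.
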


\begin{proof}
    The constant \eqref{dim free lsi} comes directly from \cite[Corollary 1]{dimension-free}, observing that $p_t=p_0^t\ast \mathcal{N}(0, s(t)^2\sigma(t)^2I)$ with $p_0^t(x):=s(t)^{-n}p_0(x/s(t))$, and that $\supp(p_o^t)\subset B_{s(t)R}(0)$. We then only need to show that $\nabla\log p_t$ is Lipschitz, which is left to Appendix \ref{app: proofs sec3}.
\end{proof}
More general conditions for ensuring that $\nabla\log p_t$ is Lipschitz can be found in \cite{steph2025regularity}.

\begin{remark}\label{remark: lsi}
    Notice that the constant in \eqref{dim free lsi} could depend on the ambient dimension $n$ via the support radius $R$. For instance, for the important application of image generation, with data supported on a hypercube $[-c, c]^n\subset\mathbb{R}^n$, the squared radius $R^2$ scales linearly with $n$. In this case, this LSI constant becomes large and the bound ceases to reveal useful information on KL divergence decrease.
    
    Nevertheless, we observed that in many cases the available LSI constant underestimates the actual divergence decay rate. For instance, Figure \ref{fig: analytical scores} shows two data sets whose LSI constants are significantly different but the decay rate of $H(\tilde p|\bar p)$ is essentially the same. This might reflect the fact that the inequality is being applied to a very specific $\tilde{p}_\tau$, which is closely related to $\bar{p}_\tau$.
\end{remark}

\begin{remark}
    By the Csiszár-Kullback inequality, which gives $\|p_1-p_2\|^2_{\mathcal{L}^1} \leq 2 H(p_1|p_2)$, Theorem \ref{teo 2} gives a corresponding bound on the $L^1$ distance $\|\tilde{p}_T-p_T\|_{\mathcal{L}^1}$.
\end{remark}

\subsubsection{Remark: the functional parameter \texorpdfstring{$\gamma$}{gamma} at discretization}\label{rem: gamma and discretization}

Since the bound \eqref{bound} shows a KL divergence decay which is exponential on the stochasticity parameter $\gamma$, what is the disadvantage of setting $\gamma$ arbitrarily large? 

The answer is related to the numerical approximation. A simple Euler-Maruyama discretization step of the reverse-time SDE \eqref{rev sde 2} is given by
\begin{align}
    X_{\tau_{i+1}} = & X_{\tau_i} +\left(-\bar{f}(X_{\tau_i}, \tau_i) + \frac{1}{2}\bar{g}^2(\tau_i)(1+\gamma(\tau_i))\nabla\log \bar{p}_{\tau_i}(X_{\tau_i})\right)\Delta\tau_i + \sqrt{\gamma(\tau_i)} \bar{g}(\tau_i) \sqrt{\Delta\tau_i}\xi \nonumber
    \\ = & X_{\tau_i} +\left(-\bar{f}(X_{\tau_i}, \tau_i) + \frac{1}{2}\bar{g}^2(\tau_i)\nabla\log \bar{p}_{\tau_i}(X_{\tau_i})\right)\Delta\tau_i \nonumber
    \\ & + \frac{1}{2}\bar{g}^2(\tau_i)\nabla\log \bar{p}_{\tau_i}(X_{\tau_i})(\gamma(\tau_i)\Delta\tau_i) + \bar{g}(\tau_i) \sqrt{\gamma(\tau_i)\Delta\tau_i}\xi \label{discr rev sde}
\end{align}
where $\xi\sim\mathcal{N}(0,1)$, and $\Delta \tau_i:=\tau_{i+1} - \tau_i$. Thus, comparing \eqref{discr rev sde} with \eqref{rev sde 3}, one can see that $\gamma(\tau_i)\Delta\tau_i$ gives an effective discretization step size for the Langevin-like part of the reverse-time SDE, which, together with the standard step $\Delta\tau_i,$ makes the problem stiff for large $\gamma.$ Setting $\gamma$ too large requires a very fine mesh, which will inevitably increase the computational cost and potentially produce a significant accumulation of discretization errors. In Section \ref{sec: experiments}, we show the trade-off between error correction and discretization errors in numerical experiments; see Figure \ref{fig: multiple gamma}.

\subsection{Approximate score functions}\label{subsection: approx scores}

The previous bounds can be generalized to the case where the reverse-time SDE depends on an approximate score $s(x, \tau)$. This models the setting of $s(x, \tau)$ being a neural network trained by score-matching, as in the diffusion model algorithm described in Section \ref{sec: intro}. By writing $s(x, \tau) = \nabla\log \bar{p}_\tau(x) + \epsilon_\tau(x)$, with an associated error function $\epsilon_\tau(x)$ representing the difference between the exact and approximate scores, the approximated version of equation \eqref{rev sde 2} is given by
\begin{equation}\label{rev sde 2 pert}
    d\tilde{X}_\tau = \left(-\bar{f}(\tilde{X}_\tau, \tau) + \frac{1}{2}\bar{g}^2(\tau)(1+\gamma(\tau))\left(\nabla\log \bar{p}_\tau(\tilde{X}_\tau) + \epsilon_\tau(\tilde{X}_\tau)\right)\right)\,d\tau + \sqrt{\gamma(\tau)} \bar{g}(\tau) \,dW_\tau,
\end{equation}
with the corresponding Fokker-Planck equation
\begin{equation}\label{pert rev_fp}
    \frac{\partial \tilde{p}}{\partial \tau} = -\nabla\cdot\left(\left(-\bar{f} + \frac{1}{2}\bar{g}^2(\gamma+1)(\nabla\log \bar{p} + \epsilon)\right)\tilde{p}\right) +\frac{1}{2}\gamma \bar{g}^2\Delta \tilde{p}.
\end{equation}
Then, we can use Lemma \ref{lemma H_deriv} (Appendix \ref{app: proofs sec3}) to obtain straightforward generalizations of Proposition \ref{prop 1} and Theorem \ref{teo 2}.

\begin{proposition}\label{prop 2}
    Let $\bar{p}_\tau$ and $\gamma$ be as before, and let $\tilde{p}_\tau$ be a positive classical solution, in $C^{2,1}(\mathbb{R}^n\times(0,T))$, of the perturbed Fokker-Planck equation \eqref{pert rev_fp} with initial condition $\tilde{p}_0$, where $\epsilon(\cdot, \tau)$ is of class $\mathcal{C}^1$ and, locally uniformly in $\tau$, has at most polynomial growth in $x$. If the pair $(\tilde{p}, \bar{p})$ satisfies the hypotheses of Proposition \ref{prop 1}, then, for almost every $\tau\in(0,T)$,
    \begin{equation}\label{H deriv pert}
        \frac{d}{d\tau}H(\tilde{p}_\tau|\bar{p}_\tau) = \frac{1}{2}\bar{g}^2(1+\gamma)\int\epsilon\cdot \left(\nabla\log \frac{\tilde{p}_\tau}{\bar{p}_\tau}\right)\tilde{p}_\tau\; \,dx -\frac{1}{2}\gamma \bar{g}^2 \int\left|\nabla\log\frac{\tilde{p}_\tau}{\bar{p}_\tau}\right|^2 \tilde{p}_\tau \,dx,
    \end{equation}
    \begin{equation}\label{H deriv pert rev}
        \frac{d}{d\tau}H(\bar{p}_\tau|\tilde{p}_\tau) = \frac{1}{2}\bar{g}^2(1+\gamma)\int\epsilon\cdot \left(\nabla\log \frac{\tilde{p}_\tau}{\bar{p}_\tau}\right)\bar{p}_\tau\; \,dx -\frac{1}{2}\gamma \bar{g}^2 \int\left|\nabla\log\frac{\tilde{p}_\tau}{\bar{p}_\tau}\right|^2 \bar{p}_\tau \,dx.
    \end{equation}
\end{proposition}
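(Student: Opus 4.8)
The plan is to apply Lemma \ref{lemma H_deriv} directly, reading off the drift and diffusion coefficients from the two Fokker--Planck equations and reducing the statement to an algebraic identification of the difference of drifts. The density $\tilde{p}_\tau$ solves the perturbed equation \eqref{pert rev_fp}, i.e.\ the Fokker--Planck equation of \eqref{rev sde 2 pert} with drift
\[
    a_1 = -\bar{f} + \tfrac{1}{2}\bar{g}^2(1+\gamma)\bigl(\nabla\log\bar{p}_\tau + \epsilon\bigr),
\]
whereas $\bar{p}_\tau$ solves the exact reverse Fokker--Planck with drift
\[
    a_2 = -\bar{f} + \tfrac{1}{2}\bar{g}^2(1+\gamma)\nabla\log\bar{p}_\tau .
\]
Both carry the same diffusion coefficient $b=\sqrt{\gamma}\,\bar{g}$, so that $b^2=\gamma\bar{g}^2$, and their drifts differ only through $a_1-a_2=\tfrac{1}{2}\bar{g}^2(1+\gamma)\epsilon$. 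Substituting these into the formula of Lemma \ref{lemma H_deriv}, the diffusion term reproduces the negative Fisher-information term and the transport term $\int(a_1-a_2)\cdot\nabla\log(\tilde{p}_\tau/\bar{p}_\tau)\,\tilde{p}_\tau\,dx$ reproduces the $\epsilon$-contribution, giving \eqref{H deriv pert} at once.

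The remaining work is to verify that the hypotheses of Lemma \ref{lemma H_deriv} hold. The two-sided Gaussian bounds on $\bar{p}_\tau$ and the Lipschitz (hence polynomial-growth) property of $\nabla\log\bar{p}_\tau$ are exactly the assumptions inherited from Proposition \ref{prop 1}, and they also ensure that $a_2$ has polynomial growth. The analogous estimates for the perturbed density $\tilde{p}_\tau$ --- the two-sided Gaussian bound on $\tilde{p}_\tau$ together with the Gaussian upper bound on $|\nabla\tilde{p}_\tau|$ --- I would obtain from the same Gaussian fundamental-solution estimates used in the proof of Proposition \ref{prop 1}, now applied to the perturbed drift $a_1$. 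The newly imposed requirement that $\epsilon$ be $C^1$ with at most polynomial growth is precisely what guarantees that $a_1$ itself has polynomial growth, which is needed both to invoke those fundamental-solution estimates and to meet the polynomial-growth condition on $a_1$ in Lemma \ref{lemma H_deriv}.

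For the reversed divergence \eqref{H deriv pert rev} I would apply the same Lemma with the roles of the two densities exchanged, taking $\bar{p}_\tau$ in place of $p_t$ and $\tilde{p}_\tau$ in place of $q_t$. The relevant difference of drifts is then $-\tfrac{1}{2}\bar{g}^2(1+\gamma)\epsilon$, and using $\nabla\log(\bar{p}_\tau/\tilde{p}_\tau)=-\nabla\log(\tilde{p}_\tau/\bar{p}_\tau)$ the two sign changes cancel in the transport term, so the $\epsilon$-contribution keeps the same sign as in \eqref{H deriv pert} but is now integrated against $\bar{p}_\tau$. In this direction the Lemma additionally requires $\nabla\log q_t=\nabla\log\tilde{p}_\tau$ to have polynomial growth, which is exactly the extra hypothesis stated for \eqref{H deriv pert rev}.

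I expect the main obstacle to lie not in the algebra but in the fundamental-solution estimates for $\tilde{p}_\tau$: one must confirm that perturbing the exact reverse drift by the $C^1$, polynomially-growing field $\epsilon$ preserves the two-sided Gaussian bounds and the Gaussian gradient bound demanded by Lemma \ref{lemma H_deriv}. For the reversed identity, establishing the polynomial growth of $\nabla\log\tilde{p}_\tau$ is the delicate additional step, which is why it appears as a separate hypothesis rather than being absorbed into the others.
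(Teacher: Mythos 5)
Your proposal is correct and follows essentially the same route as the paper: the paper's proof is a one-line appeal to Lemma \ref{lemma H_deriv}, observing that the hypotheses of Proposition \ref{prop 1} together with the assumptions on $\epsilon$ make it applicable, with the drift difference $a_1-a_2=\tfrac{1}{2}\bar{g}^2(1+\gamma)\epsilon$ producing the extra term and the role-swap giving the reversed identity. If anything, you are more explicit than the paper, which silently assumes the Gaussian fundamental-solution estimates carry over to the perturbed density --- precisely the step you rightly flag as the only delicate point.
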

 
\begin{proof}
    The Fokker-Planck equations of $\tilde{p}$ and $\bar{p}$ share the diffusion coefficient $b = \sqrt{\gamma}\,\bar{g}$, and,  by the assumption on $\epsilon$, their corresponding drifts $a_1$ and $a_2$ are both continuous, $C^1$ in $x$ and of polynomial growth, inside each interval of continuity of $\gamma(\tau)$, as discussed in the proof of Proposition \ref{prop 1}. 
    Since $a_1 - a_2 = \frac12\bar{g}^2(1+\gamma)\epsilon$, both formulas follow from Lemma \ref{lemma H_deriv} in Appendix \ref{app: proofs sec3} applied to the pairs $(\tilde{p}, \bar{p})$ and $(\bar{p}, \tilde{p})$ in each interval.
\end{proof}

\begin{remark}\label{remark: error direction}
    Observe, in formulas \eqref{H deriv pert} and \eqref{H deriv pert rev}, that the vector $\nabla\log h_\tau(x) = \nabla\log\tilde{p}_\tau(x) - \nabla\log \bar{p}_\tau(x)$ defines a principal direction in which $\epsilon_\tau(x)$ either propagates or corrects accumulated errors. In particular, the exactly aligned direction $\epsilon_\tau(x) = \alpha \nabla\log (\tilde{p}_\tau/\bar{p}_\tau)$, where $\alpha\in\mathbb{R}$, yields the largest increase in the divergences if $\alpha>0$, and the largest decrease if $\alpha<0$. Interestingly, by writing
    \begin{equation}
        E_{\tilde{p}_\tau}\big[\epsilon_\tau\cdot\nabla\log (\tilde{p}_\tau/\bar{p}_\tau)\big] = E_{\tilde{p}_\tau}\big[(s_\theta (\cdot, \tau)-\nabla\log\bar{p}_\tau)\cdot(\nabla\log \tilde{p}_\tau - \nabla\log\bar{p}_\tau)\big],
    \end{equation}
    we can see that the case $\epsilon_\tau(x) = \nabla\log (\tilde{p}_\tau/\bar{p}_\tau)$ (i.e. with $\alpha=1$) corresponds to the learned score $s_\theta (\cdot, \tau)$ matching the implicit score $\nabla\log \tilde{p}_\tau$ of the current sampling distribution. Finally, note that the component of $\epsilon_s(x)$ perpendicular to $\nabla\log h_s(x)$ does not affect the KL divergences.
\end{remark}

\begin{corollary}\label{coro 2}
    If $\gamma(s)>0$ for all $s\in(0,\tau)$, then
    \begin{equation}\label{perturbed bound rev}
        H(\bar{p}_\tau|\tilde{p}_\tau) \leq H(\bar{p}_0|\tilde{p}_0) +\frac{1}{8}\int_0^\tau \bar{g}^2\frac{(1+\gamma)^2}{\gamma}\left(\int|\epsilon|^2 \bar{p}_s \,dx\right)\;ds.
    \end{equation}
\end{corollary}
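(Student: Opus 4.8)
The plan is to derive \eqref{perturbed bound rev} directly from the entropy-dissipation identity \eqref{H deriv pert rev} of Proposition \ref{prop 2}, which is available under the hypotheses assumed there, by absorbing the sign-indefinite cross term into the nonpositive dissipation term via Young's inequality. Writing $u_\tau := \nabla\log(\tilde{p}_\tau/\bar{p}_\tau)$ for brevity, \eqref{H deriv pert rev} reads
\[
    \frac{d}{d\tau}H(\bar{p}_\tau|\tilde{p}_\tau) = \frac{1}{2}\bar{g}^2(1+\gamma)\int \epsilon\cdot u_\tau\,\bar{p}_\tau\,dx - \frac{1}{2}\gamma\bar{g}^2\int|u_\tau|^2\bar{p}_\tau\,dx.
\]
The second term is a nonpositive dissipation, and the idea is to spend part of it to control the first term, at the cost of a remainder depending on $\epsilon$ only through its $L^2(\bar{p}_\tau)$ norm.

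First I would apply the pointwise Young inequality $\epsilon\cdot u \le \tfrac{\delta}{2}|u|^2 + \tfrac{1}{2\delta}|\epsilon|^2$, valid for any $\delta>0$, inside the first integral, obtaining
\[
    \frac{1}{2}\bar{g}^2(1+\gamma)\int \epsilon\cdot u_\tau\,\bar{p}_\tau\,dx \le \frac{\delta}{4}\bar{g}^2(1+\gamma)\int|u_\tau|^2\bar{p}_\tau\,dx + \frac{1}{4\delta}\bar{g}^2(1+\gamma)\int|\epsilon|^2\bar{p}_\tau\,dx.
\]
The crucial step is to choose $\delta$ so that the coefficient of $\int|u_\tau|^2\bar{p}_\tau\,dx$ produced here is exactly cancelled by the dissipative term, i.e. $\tfrac{\delta}{4}\bar{g}^2(1+\gamma) = \tfrac{1}{2}\gamma\bar{g}^2$, which forces $\delta = 2\gamma/(1+\gamma)$. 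This is precisely where the hypothesis $\gamma(s)>0$ enters: a strictly positive $\gamma$ is needed both to have a dissipative term available to absorb the cross term and to keep $\delta$ positive and finite. Substituting this value leaves
\[
    \frac{d}{d\tau}H(\bar{p}_\tau|\tilde{p}_\tau) \le \frac{1}{4\delta}\bar{g}^2(1+\gamma)\int|\epsilon|^2\bar{p}_\tau\,dx = \frac{1}{8}\bar{g}^2\frac{(1+\gamma)^2}{\gamma}\int|\epsilon|^2\bar{p}_\tau\,dx,
\]
and integrating this differential inequality from $0$ to $\tau$ yields \eqref{perturbed bound rev}.

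I do not expect a serious obstacle, since the analytic heavy lifting (the validity and integrability of the derivative identity) is already supplied by Proposition \ref{prop 2}; the only genuinely delicate point is the choice of the Young parameter. Note that $\delta = 2\gamma/(1+\gamma)$ is in fact optimal among the admissible choices: the remainder coefficient $\tfrac{1}{4\delta}\bar{g}^2(1+\gamma)$ is decreasing in $\delta$, while nonpositivity of the net gradient term requires $\delta\le 2\gamma/(1+\gamma)$, so saturating this bound minimizes the resulting error constant and explains the appearance of the factor $(1+\gamma)^2/\gamma$. This factor blows up as $\gamma\to 0^+$, signalling the breakdown of the estimate in the deterministic (probability-flow ODE) limit, consistent with the absence of a dissipation mechanism there.
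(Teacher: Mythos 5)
Your proof is correct and takes essentially the same route as the paper: both apply Young's inequality to the cross term in \eqref{H deriv pert rev} with the parameter tuned to exactly cancel the dissipation term, and then integrate in time. Your choice $\delta = 2\gamma/(1+\gamma)$ is the same as the paper's $\eta(s) = (1+\gamma(s))/4\gamma(s)$ under the reparametrization $\delta = 1/(2\eta)$, yielding the identical constant $\tfrac{1}{8}\bar{g}^2(1+\gamma)^2/\gamma$.
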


A corresponding result for the framework of stochastic interpolants can be found in \cite{stoch_interp}, Lemma 2.22.

\begin{proof}
    Using Young's inequality on equation \eqref{H deriv pert rev}, we have that, for every positive function $\eta=\eta(t)$,
    \begin{align*}
        H(\bar{p}_\tau|\tilde{p}_\tau) \leq H(\bar{p}_0|\tilde{p}_0) + \frac{1}{2}\int_0^\tau \bar{g}^2\int\left((1+\gamma)\eta(s)|\epsilon|^2 +\left(\frac{1+\gamma}{4\eta(s)}-\gamma\right)\left|\nabla\log \frac{\tilde{p}_s}{\bar{p}_s}\right|^2 \right) \bar{p}_s \,dx\;ds.
    \end{align*}
    Taking $\eta(s)=(1+\gamma(s))/4\gamma(s)$, we obtain \eqref{perturbed bound rev}.
\end{proof}

Corollary \ref{coro 2} shows that, for stochastic sampling, it is possible to bound the KL divergence $H(\bar{p}_\tau|\tilde{p}_\tau)$ by a multiple of the score error plus the initial error, as has been observed in the literature \citep{song2021maxlikelihood, stoch_interp}. This is, in fact, a consequence of the divergence decay phenomenon for the SDE, which for $H(\tilde{p}|\bar{p})$ can be better quantified with the use of a log-Sobolev inequality, in the following generalization of Theorem \ref{teo 2}.

\begin{theorem}\label{teo 4}
        Under the hypotheses of Proposition \ref{prop 2}, we have that
    \begin{equation}\label{perturbed bound}
        H(\tilde{p}_\tau|\bar{p}_\tau) \leq e^{-\int_0^\tau \alpha(s)\,ds} H(\tilde{p}_0|\bar{p}_0) + \frac{1}{2}\int_0^\tau \bar{g}(s)^2(1+\gamma(s))E_{\tilde{p}_s}\left[\epsilon_s\cdot\nabla\log \frac{\tilde{p}_s}{\bar{p}_s}\right]e^{-\int_s^\tau \alpha(r)\,dr}ds,
    \end{equation}
    with $\alpha(s):=C(s) \gamma(s) \bar g(s)^2$, where $C(s)=1/C_\text{LSI}(s)$ if $\bar{p}_s$ satisfies an LSI with constant $C_\text{LSI}(s)$ and $C(s)=0$ otherwise, as in Theorem \ref{teo 2}.
    
    Moreover, if $\gamma(s)>0$ for all $s\in (0,\tau)$, then, for any $\delta(s)$ satisfying $0<\delta(s)\leq\gamma(s)$ in $(0,\tau)$, we have
    \begin{equation}\label{perturbed bound 2}
        H(\tilde{p}_\tau|\bar{p}_\tau) \leq e^{-\int_0^\tau \alpha(s, \delta)\,ds} H(\tilde{p}_0|\bar{p}_0) + \frac{1}{8}\int_0^\tau\frac{1}{\delta(s)}\bar{g}(s)^2(1+\gamma(s))^2 E_{\tilde{p}_s}\left[|\epsilon_s|^2\right] e^{-\int_s^\tau \alpha(r, \delta)\,dr}ds,
    \end{equation}
    where $\alpha(s,\delta):=C(s)\bar{g}(s)^2 (\gamma(s) - \delta(s))$.
\end{theorem}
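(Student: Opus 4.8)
The plan is to start from the exact derivative formula \eqref{H deriv pert} of Proposition \ref{prop 2} and turn it into a linear differential inequality for the map $\tau \mapsto H(\tilde{p}_\tau|\bar{p}_\tau)$, which is then integrated by the standard integrating-factor form of Grönwall's inequality. Writing $h_s = \tilde{p}_s/\bar{p}_s$, the right-hand side of \eqref{H deriv pert} is the sum of a cross term $\tfrac{1}{2}\bar{g}^2(1+\gamma)E_{\tilde{p}_\tau}[\epsilon\cdot\nabla\log h]$ and a non-positive dissipation term $-\tfrac{1}{2}\gamma\bar{g}^2\,E_{\tilde{p}_\tau}[|\nabla\log h|^2]$, and the two asserted bounds differ only in how these two terms are treated.

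For \eqref{perturbed bound} I would keep the cross term as is and act only on the dissipation term with the log-Sobolev inequality of Lemma \ref{lemma LSI}. When $\bar{p}_\tau$ satisfies \eqref{HLSI} with constant $C_\text{LSI}(\tau)$, one has $\int|\nabla\log h|^2\tilde{p}_\tau\,dx \geq (2/C_\text{LSI}(\tau))\,H(\tilde{p}_\tau|\bar{p}_\tau)$, so the dissipation term is at most $-\alpha(\tau)H(\tilde{p}_\tau|\bar{p}_\tau)$ with $\alpha=C\gamma\bar{g}^2$; when no LSI is available, $C(\tau)=0$ and the dissipation term is simply discarded (being $\leq 0$), consistently with $\alpha=0$. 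This produces $\tfrac{d}{d\tau}H \leq -\alpha(\tau)H + \tfrac{1}{2}\bar{g}^2(1+\gamma)E_{\tilde{p}_\tau}[\epsilon\cdot\nabla\log h]$; multiplying by $e^{\int_0^\tau\alpha}$ and integrating gives exactly \eqref{perturbed bound}.

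For \eqref{perturbed bound 2} the cross term must instead be absorbed. I would split the dissipation coefficient as $\gamma=\delta+(\gamma-\delta)$, reserving the $\delta$ part to dominate the cross term and the $(\gamma-\delta)$ part to supply the decay. Applying Young's inequality in the form $\epsilon\cdot\nabla\log h \leq \eta|\nabla\log h|^2 + \tfrac{1}{4\eta}|\epsilon|^2$ with the precise choice $\eta=\delta/(1+\gamma)$ makes the $|\nabla\log h|^2$ contribution of the cross term cancel exactly against the $\delta$ portion of the dissipation, leaving the error term $\tfrac{1}{8\delta}\bar{g}^2(1+\gamma)^2\,E_{\tilde{p}_\tau}[|\epsilon|^2]$ together with the residual $-\tfrac{1}{2}(\gamma-\delta)\bar{g}^2\,E_{\tilde{p}_\tau}[|\nabla\log h|^2]$. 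Bounding this residual below by the LSI as above yields $-\alpha(\tau,\delta)H$ with $\alpha(\tau,\delta)=C\bar{g}^2(\gamma-\delta)$, and a second integrating-factor argument produces \eqref{perturbed bound 2}.

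The computations are essentially routine once this structure is fixed; the one genuinely delicate point is the bookkeeping in the Young step, namely choosing $\eta=\delta/(1+\gamma)$ so that the cancellation leaves precisely the factor $(\gamma-\delta)$ for the LSI, and checking that the surviving coefficients match the stated $\alpha(\cdot,\delta)$ and $1/(8\delta)$. A secondary point to verify is the legitimacy of the Grönwall step: one must confirm that $\tau\mapsto H(\tilde{p}_\tau|\bar{p}_\tau)$ is absolutely continuous, which is inherited from the regularity hypotheses underlying Proposition \ref{prop 2} and Lemma \ref{lemma H_deriv}, and that an admissible $\delta(s)\in(0,\gamma(s)]$ exists, which is guaranteed by the assumption $\gamma>0$ on $(0,\tau)$.
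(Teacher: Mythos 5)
Your proof is correct and takes essentially the same route as the paper's: both start from the derivative formula \eqref{H deriv pert}, obtain \eqref{perturbed bound} by applying the LSI to the dissipation term and integrating via Gr\"onwall, and obtain \eqref{perturbed bound 2} by first applying Young's inequality and then the LSI. Your choice $\eta = \delta/(1+\gamma)$ is precisely the paper's parameter $\delta_0$ (the paper writes $\delta = (1+\gamma)\delta_0$), so even the bookkeeping coincides.
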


\begin{proof}
\begin{enumerate}
    \item As in the proof of Theorem \ref{teo 2}, using the log-Sobolev inequality \eqref{HLSI} on the derivative of $H$, given by \eqref{H deriv pert}, we obtain
    \begin{align*}
        \frac{d}{ds}H(\tilde{p}_s|\bar{p}_s) \leq \frac{1}{2}\bar g(s)^2(1+\gamma(s))\int\epsilon_s\cdot (\nabla\log h_s)\tilde{p}_s\; \,dx -C(s)\gamma(s) \bar g(s)^2 H(\tilde{p}_s|\bar{p}_s),
    \end{align*}
    which, by Grönwall's inequality, gives equation \eqref{perturbed bound}.

    \item For the second claim, we wish to eliminate the unknown term $\nabla\log(\tilde{p}_s/\bar{p}_s)$ from \eqref{H deriv pert}. Using Young's inequality, for any $\delta_0(s)>0$, we have
    \begin{align*}
        \int\epsilon_s\cdot\nabla\log h_s\;\tilde{p}_s\,dx \leq \int|\epsilon_s| |\nabla\log h_s|\;\tilde{p}_s\,dx \leq \int \left(\frac{1}{4\delta_0(t)}|\epsilon_s|^2 + \delta_0(t)|\nabla\log h_s|^2\right)\tilde{p}_s\,dx,
    \end{align*}
    where $\epsilon(\cdot, s)$ is square-integrable in $\tilde{p}_s$, since it has polynomial growth and $\tilde{p}_s$ is bounded above by a Gaussian, by hypothesis. Then, if $\delta_0(s)\leq\gamma(s)/(\gamma(s)+1)$, we have
    \begin{align}
        \frac{d}{ds}H(\tilde{p}_s|\bar{p}_s) & \leq \frac{1} {8\delta_0}\bar{g}^2(1+\gamma)\int|\epsilon|^2\tilde{p}\; \,dx -\frac{1}{2}\bar{g}^2\left(\gamma - (\gamma+1)\delta_0\right) \int\left|\nabla\log\frac{\tilde{p}}{\bar{p}}\right|^2 \tilde{p}\,dx \nonumber
        \\ & \leq \frac{1}{8\delta_0}\bar{g}^2(1+\gamma)\int|\epsilon|^2\tilde{p}\; \,dx -\frac{1}{C_\text{LSI}}\bar{g}^2\left(\gamma - (\gamma+1)\delta_0\right) H(\tilde{p}|\bar{p}) \label{H_deriv pert bound 2}
    \end{align}
    by the LSI. Taking $\delta = (\gamma+1)\delta_0$ and using Grönwall's inequality, we obtain \eqref{perturbed bound 2}.
\end{enumerate}
\end{proof}

\begin{remark}\label{remark: approx bound}
 With regard to this result, we observe the following.
    \begin{enumerate}
        \item In this case, an increase in $\gamma$ is not clearly related to a decrease in the final divergence due to the dependence of $\tilde p_s$ on $\gamma$.

        \item The bound in equation \eqref{perturbed bound rev} is identical to the one in \eqref{perturbed bound 2} with $\delta=\gamma$ (that is, ignoring the decay factor), but with the expectation taken in $\bar{p}_\tau$ instead of $\tilde{p}_\tau$.
        
        \item Although equation \eqref{perturbed bound 2} is obtained using an additional inequality, since the inequality is used before the LSI, the bound there can be smaller than the one in equation \eqref{perturbed bound}, depending on the tightness of both inequalities. This can be seen in Figure \ref{fig:analyticKLbounds}, for the fully analytical example of Section \ref{sec: analytical example}.
    \end{enumerate}
\end{remark}

\begin{remark}\label{remark: opt_gamma}
In \cite{ma2024} and \cite{chen2024follmer}, an optimal choice of $\gamma(\tau)$ minimizing \eqref{perturbed bound rev} is obtained, which, in our formulation, corresponds to setting $\gamma(\tau)\equiv 1$. We can see that this also minimizes the bound \eqref{perturbed bound 2} when discarding the divergence decrease factor, i.e. with $\delta(\tau)=\gamma(\tau)$. In line with the comments in \cite[Appendix B.4]{opt_choice}, we observe that the application of Young's inequality in the proof is designed to eliminate the Fisher information term, implicitly penalizing both small and too large $\gamma$ values. Hence this particular bound erases part of the information contained in the time derivative of the KL divergence, which we analyze in the next section.
\end{remark}

\subsubsection{Minimizing the KL divergence derivative at each \texorpdfstring{$\tau$}{tau}}\label{subsec: instantaneous_opt}
Since the formula for the KL divergence time derivative obtained in Proposition \ref{prop 2} is linear in $\gamma$, we can readily obtain the values of $\tau$ for which a positive $\gamma$ can produce a decrease in the divergence. Observe that this instantaneous minimum may not correspond exactly to the perfect-foresight minimum, since, for reverse-times $\tau_1<\tau_2$, the density $\tilde{p}_{\tau_2}$ also depends on $\gamma(\tau_1)$. However, the problem of finding the perfect-foresight minimum is infinite-dimensional and typically intractable, except in very specific settings, such as the analytical example considered in Section \ref{sec: analytical example}. In this section, we explore the information provided by the instantaneous optimum, while keeping in mind its limitations.

More precisely, writing equation \eqref{H deriv pert} as
\begin{equation}
    \frac{d}{d\tau}H(\tilde{p}_\tau|\bar{p}_\tau) = \frac{1}{2}\bar{g}^2\int\epsilon\cdot \left(\nabla\log \frac{\tilde{p}_\tau}{\bar{p}_\tau}\right)\tilde{p}_\tau\; \,dx +\frac{1}{2}\gamma \bar{g}^2 \int\left( \epsilon\cdot \nabla\log \frac{\tilde{p}_\tau}{\bar{p}_\tau} - \left|\nabla\log\frac{\tilde{p}_\tau}{\bar{p}_\tau}\right|^2\right) \tilde{p}_\tau \,dx,
\end{equation}
we can see that $dH/d\tau$ is minimized at each $\tau$ by the bang-bang optimum
\begin{equation}\label{optimal_g_instantaneous}
    \gamma^*(\tau) = \begin{cases}
        \gmax, & \text{when}\quad E_{\tilde{p}_\tau}[\epsilon\cdot \nabla\log (\tilde{p}_\tau/\bar{p}_\tau) - |\nabla\log (\tilde{p}_\tau/\bar{p}_\tau)|^2]<0
        \\ 0, & \text{when}\quad E_{\tilde{p}_\tau}[\epsilon\cdot \nabla\log (\tilde{p}_\tau/\bar{p}_\tau) - |\nabla\log (\tilde{p}_\tau/\bar{p}_\tau)|^2]>0,
    \end{cases}
\end{equation}
where $\gmax$ is the stipulated maximum $\gamma$ value, limited by the discretization issues discussed in \ref{rem: gamma and discretization}. We can identify two main factors at play:
\begin{itemize}
    \item \textbf{Alignment of $\epsilon_\tau$ and $\nabla\log (\tilde{p}_\tau/\bar{p}_\tau)$:} As observed in Remark \ref{remark: error direction}, the alignment between $\epsilon_\tau(x)$ and $\nabla\log (\tilde{p}_\tau/\bar{p}_\tau)(x)$, in terms of the inner product $E_{\tilde{p}_\tau}[\epsilon_\tau\cdot\nabla\log (\tilde{p}_\tau/\bar{p}_\tau)]$, is a major factor in the accumulation of errors. This is also true here, since a low enough alignment, even with large $E[|\epsilon_\tau|^2]$, can produce a negative expectation in \eqref{optimal_g_instantaneous}, favoring the SDE. However, it is not clear how to evaluate $\epsilon_\tau\cdot \nabla\log (\tilde{p}_\tau/\bar{p}_\tau)$ in practice, since it depends on the unknown $\nabla\log \tilde{p}_\tau$.
    \item \textbf{Relative magnitudes of $\epsilon_\tau$ and $\nabla\log (\tilde{p}_\tau/\bar{p}_\tau)$:} If $|\epsilon_\tau|(x)$ is smaller on average than $|\nabla\log (\tilde{p}_\tau/\bar{p}_\tau)|(x)$, the sign of the expectation in \eqref{optimal_g_instantaneous} must be negative. More precisely, we have that
    \[
    \int\left( \epsilon_\tau\cdot \nabla\log \frac{\tilde{p}_\tau}{\bar{p}_\tau} - \left|\nabla\log\frac{\tilde{p}_\tau}{\bar{p}_\tau}\right|^2\right) \tilde{p}_\tau \,dx \leq \int\left( |\epsilon_\tau| - \left|\nabla\log\frac{\tilde{p}_\tau}{\bar{p}_\tau}\right|\right)\left|\nabla\log\frac{\tilde{p}_\tau}{\bar{p}_\tau}\right| \tilde{p}_\tau \,dx <0
    \]
    if $\int|\epsilon_\tau|\tilde \mu_\tau dx<\int|\nabla\log \tilde{p}_\tau/\bar{p}_\tau|\tilde \mu_\tau dx$, where $\tilde \mu_\tau := \tilde{p}_\tau|\nabla\log \tilde{p}_\tau/\bar{p}_\tau|$ \footnote{An analogous condition can be obtained for $H(\bar p_\tau|\tilde p_\tau)$, taking $\tilde \mu_\tau := \bar{p}_\tau(x)|\nabla\log \tilde{p}_\tau(x)/\bar{p}_\tau(x)|$.}. 
    Since $\nabla\log \tilde{p}_\tau/\bar{p}_\tau$ is the difference between the implicit score of $\tilde p_\tau$ and the true score of $\bar{p}_\tau$, it can be thought as a measure of the accumulated error. In short, \textit{if, at time $\tau$, the average magnitude of the score error is lower than a threshold given by the accumulated error, stochasticity is beneficial}.
\end{itemize}

This last point shows that the time-localization of the magnitude of the score error can be a crucial factor for the effect of stochasticity, extending the conclusion obtained in \cite{opt_choice} to the non-asymptotic case, i.e. for intermediate $\tau$ and $\gamma$ values. In particular, score errors concentrated in the beginning of sampling would make stochasticity beneficial, whereas score errors with magnitude increasing too fast near the end of sampling could make it detrimental. These aspects are investigated in the experiments of Section \ref{subsec: time_profile_experiments}.

\section{Numerical experiments}\label{sec: experiments}In this section, we explore the relation between stochasticity at sampling and model performance in several experimental scenarios. We consider two toy data sets for which the exact score functions can be computed and more complex data sets such as MNIST, CIFAR-10, and a three-dimensional porous media data set \citep{frontgeologico}. Sampling with exact score functions, on the toy data sets, is explored in Section \ref{subsection: experiments analytical scores}. Then, Section \ref{subsection: experiments approx scores} explores sampling with learned score models on both toy and complex data sets, covering the setting of Section \ref{subsection: approx scores}.

The experiments were performed using the diffusion model package 'diffsci', with the code available at \url{https://github.com/Lacadame/DiffSci/tree/main/stochasticity_paper}.

\subsection{Experimental details}

\subsubsection{Sampling parameters}

We performed experiments using the three forward processes considered in \cite{karras}: EDM, Variance Exploding (VE) and Variance Preserving (VP), which are given by the forward SDEs
\begin{align}
    dX_t & = \sqrt{2t}\,dW_t \tag{EDM}, \\
    dX_t & = \,dW_t, \tag{VE}  \\ 
    dX_t & = -\frac{1}{2}(\beta_1 t+\beta_2)X_t \,dt + \sqrt{\beta_1 t+\beta_2}\,dW_t \tag{VP}.
\end{align}
Note that the three equations satisfy the linear hypothesis in Theorems \ref{teo 2} and \ref{teo 4}. To avoid confusion, in this section we will refer only to the forward time variable $t$, and consider the densities $p_t$ and $\tilde{p}_t$ in forward time. In this way, diffusion sampling starts at an "initial time" $t=T$ and ends at $t=0$.

In order to maintain discretization errors independent of the choice of $\gamma$, we perform numerical integration (at sampling) with Euler and Euler-Maruyama methods for ODEs and SDEs, respectively. Since the main focus of this work is not discretization, as discussed in the Introduction, by default we use $1000$ discretization steps for CIFAR-10 and $500$ for the remaining data sets, following the EDM choice of steps $\{t_i\}$ in \cite[Table 1]{karras}. In these settings, we observed qualitatively similar results from the three forward processes. This agrees with findings that major differences between the processes arise only at low $n_\text{steps}$, with EDM yielding the lowest discretization error  \cite[Figure 2]{karras}. Therefore, we only show experiments with the EDM process.

\subsubsection{Datasets}

The toy data distributions consist of a mixture of Gaussians or a mixture of uniform distributions, for which exact score functions can be analytically obtained. This allows sampling with exact scores and also computing the score error function $\epsilon_t(x) = s(x, t) - \nabla\log p_t(x)$ considered in Section \ref{subsection: approx scores}. To maximize the estimation accuracy of the KL divergences, we focus on a one-dimensional setting.
More explicitly, we consider the toy data distributions $p_0$ and $q_0$ in $\mathbb{R}$, consisting of mixtures
\begin{align}
 p_0 & \sim \lambda \mathcal{N}(\mu_1, \sigma_1^2) + (1 - \lambda)\mathcal{N}(\mu_2, \sigma_2^2), \label{p0gaussianmixture}
\\ q_0 & \sim \lambda \mathcal{U}(a_1, b_1) + (1 - \lambda)\mathcal{U}(a_2, b_2), \label{q0uniformmixture}
\end{align}
where $\mathcal{N}(\mu, \sigma^2)$ is a Gaussian distribution of mean $\mu$ and variance $\sigma^2$, and $\mathcal{U}(a, b)$ is a Uniform distribution in the interval $[a, b]\subset\mathbb{R}$. For linear forward processes, the score functions $\nabla\log p_t$ can be analytically computed, and are Lipschitz continuous (see Appendix \ref{app: proofs sec4} for the mixture of Gaussians, and Corollary \ref{coro 1} for the mixture of Uniforms), satisfying the conditions on $\bar p_\tau$ in Theorems \ref{teo 2} and \ref{teo 4}; the conditions on $\tilde p_\tau$ are discussed in Remark \ref{remark: hypotheses scope}. For training score models, we consider $4000$ samples of the toy distributions. At sampling, we consider $10^5$ generated samples.

We also consider the more complex data sets MNIST \citep{mnist_lecun1998}, CIFAR-10 \citep{cifar-10_krizhevsky2009}, and a geological data set consisting of 3D volumes of micro-CT scans of porous media \citep{frontgeologico}. Due to the high dimensionality of the data, it is not possible to reliably estimate KL divergences. Instead, we evaluate performance via FID scores, for the benchmark data sets, and KL divergences of geological statistics, for the porous media data set.

\subsubsection{Estimation of KL divergences and bounds}

KL divergences are computed in two ways. The initial divergence $H(q_T|\rho_T)$, where $q$ and $\rho$ correspond either to $\tilde p$ or $\bar p$, is obtained by numerical integration of the densities, which can be calculated analytically for the toy data sets. Intermediate divergences $H(q_t|\rho_t)$ are computed from samples of $q_t$ and $\rho_t$ via the discretization of the empirical densities by a histogram\footnote{This estimator has limited precision, but it was found to be more reliable than the alternatives, such as using a KDE estimator for the densities or directly using a non-parametric estimator for the divergences.}. To reduce the dependence on the parameter number of histogram bins, we consider the mean of the values obtained with bins in the range $\{n_\text{bins}-20, ..., n_\text{bins}\}$. We found this estimator to have a precision on the order of $5\times 10^{-4}$.

The bounds of Section \ref{section: kl evolution and bounds} are computed with a different LSI constant for each toy data set. For the mixture of Uniforms, we use the constant for compactly-supported measures stated in Corollary \ref{coro 1}, which are from \cite{dimension-free}. For the mixture of Gaussians we use the constants obtained in \cite{LSImixtures}, detailed in the following.
The $\chi^2$-divergence of one-dimensional Gaussian distributions $\mu_0 \sim \mathcal{N}(m_0,\sigma_0^2)$ and $\mu_1 \sim \mathcal{N}(m_1,\sigma_1^2)$ has the closed form
\begin{equation}
    \chi^2(\mu_0|\mu_1)=\dfrac{\sigma_1^2}{\sigma_0\sqrt{2\sigma_1^2-\sigma_0^2}}\exp\left(\dfrac{(m_0-m_1)^2}{2\sigma_1^2-\sigma_0^2}\right)-1
\end{equation}
provided that $2\sigma_1^2>\sigma_0^2$. Then, from \cite[Corollary 2]{LSImixtures} we know that the LSI constant $C_p$ for the mixture of Gaussians $\mu_p = p \mu_0 + (1-p) \mu_1
$ satisfies
\begin{equation}\label{GaussianLSB}
    C_p\leq\min\{C_0, C_1\},
\end{equation}
with
\begin{align*}
    C_0 & = \max\{\sigma_0^2 (1 + (1-p)\lambda_p), \; \sigma_1^2 (1+p\lambda_p \chi_0)\},
    \\ C_1 & = \max\{\sigma_1^2 (1 + p\lambda_p), \; \sigma_0^2 (1+(1-p)\lambda_p \chi_1)\},
\end{align*}
where
\begin{align*}
    \lambda_p := \left\{
    \begin{array}{cl}
    2 &, \ p=\frac{1}{2}, \\
    \frac{\log p - \log(1-p)}{ 2p - 1} &,\ p\neq \frac{1}{2},
    \end{array}
    \right.
\end{align*}
\begin{equation*}
    \chi_0 := \chi^2(\mu_0||\mu_1) + 1 \qquad\text{and}\qquad  \chi_1 := \chi^2(\mu_1||\mu_0) + 1.
\end{equation*}
A plot of LSI constants across diffusion time $t$ can be seen in Figure \ref{fig: analytical scores}.

\subsection{Exact score functions}\label{subsection: experiments analytical scores}

First, we generate distributions with exact analytical scores, in the setting of Section \ref{subsection: exact scores}. Figure \ref{fig: analytical scores} shows generated distributions and KL divergence evolution, for sampling from a perturbed prior distribution at time $T=80$. We artificially perturb the mean and the variance of the prior distribution so that the decrease in the divergences is more perceptible and better quantifiable.

\begin{figure}[ht!]
    \centering
    \includegraphics[width=0.32\linewidth]{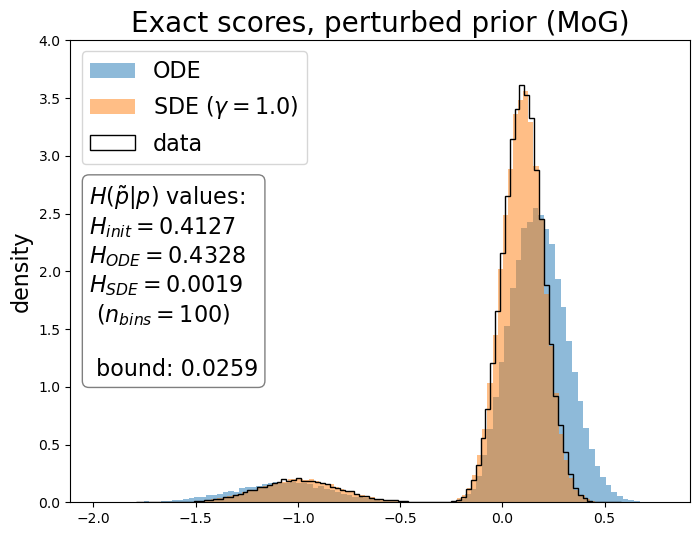}
    \includegraphics[width=0.32\linewidth]{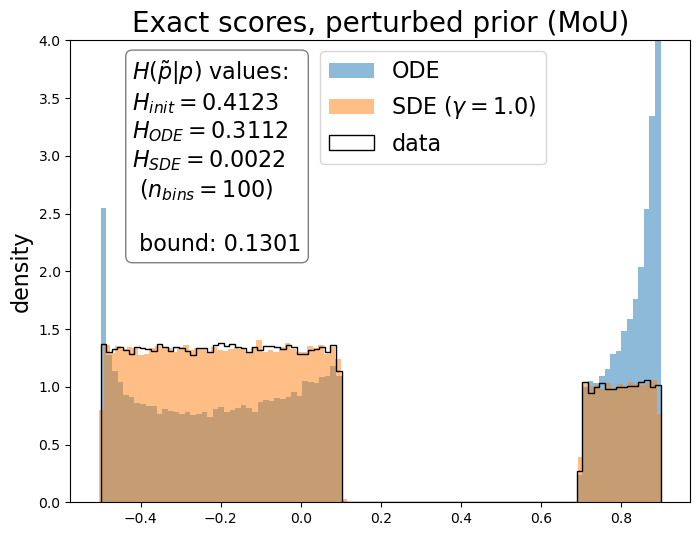} 
    \includegraphics[width=0.32\linewidth]{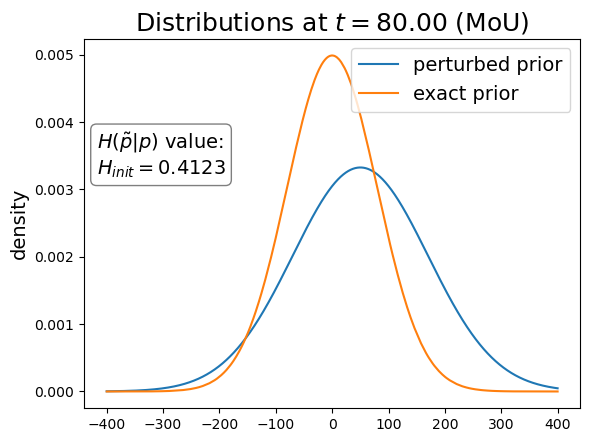}
    \hfill
    \includegraphics[width=0.32\linewidth]{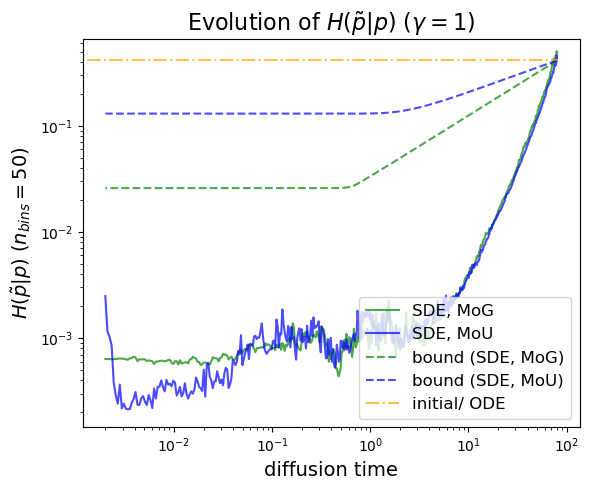}
    \includegraphics[width=0.32\linewidth]{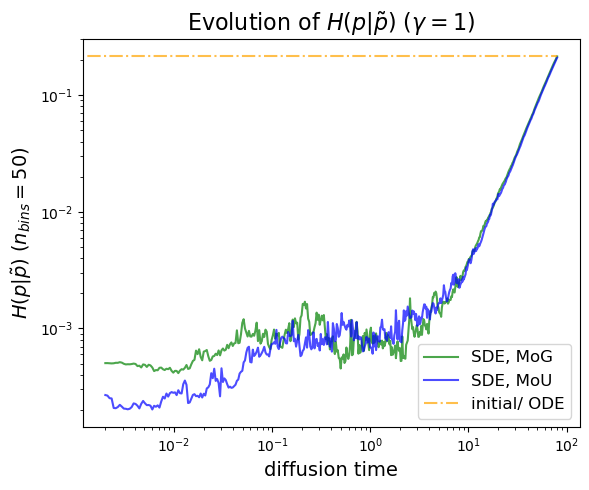}
    \includegraphics[width=0.32\linewidth]{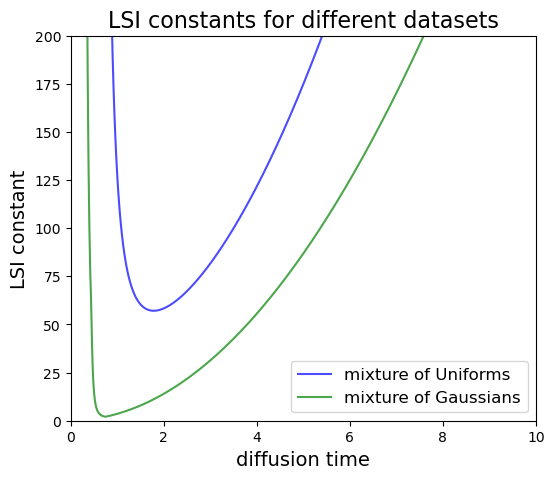}
    \hfill
    \caption{On the top left/middle, data and generated distributions with exact score functions and perturbed priors $\mathcal{N}(50, 120^2I)$ instead of the default $\mathcal{N}(0, 80^2I)$. Initial distributions for the mixture of Uniforms (top right), KL divergence evolution curves (bottom left/ middle) and LSI constants for $p_t$ with different data sets (bottom right).}
    \label{fig: analytical scores}
\end{figure}

We can see a substantial decrease in the divergences for the SDE, higher than that ensured by the bound for $H(\tilde p| p)$, and very similar across both divergences. Moreover, the bound for the mixture of Gaussians (MoG) is tighter than the one for the mixture of Uniforms (MoU), due to the different known LSI constants, whereas the decrease in the divergences is remarkably similar. This might reflect the fact that the LSI constant of $p_t$ is valid for \textit{any} $q_t$, but in the time-derivative formula it is applied to a very specific $q_t=\tilde{p}_t$, which is close to $p_t$.
Observe that the small mismatch between initial and ODE divergence values is due to estimation errors for the latter, since by Proposition \ref{prop 1} they should be the same. Observe also that the final jump in $H(\tilde{p}|p)$ for the MoU can be attributed to the sensitivity of the KL divergence estimator for $t\approx0$, since $\tilde{p}_0$ may not be absolutely continuous with respect to $p_0$ for this data set.

Figure \ref{fig: multiple gamma} illustrates the relation between $\gamma$ and the number of discretization steps, discussed in Section \ref{rem: gamma and discretization}. Both panels show the same final KL divergences with different horizontal axes: on the left, we see an optimal interval for $\gamma$ for each choice of $n_\text{steps}$, on the right, we see that the increase in errors for large $\gamma$ is a function of the "effective step size" $\gamma/n_\text{steps}\propto\gamma\Delta t$ discussed in \ref{rem: gamma and discretization}. More precisely, equation \eqref{discr rev sde} gives two effective step sizes: $\Delta \tau_i$ for the Flow ODE part and $\gamma(\tau_i)\Delta \tau_i$ for the Langevin-like part, but for large $\gamma$ the errors from the latter dominate. In the plot on the right, we see that for smaller $n_\text{steps}$ the Flow ODE effective step size still plays a significant role, with the curves not following exactly the Langevin effective step size scaling.

\begin{figure}[ht!]
    \centering
    \includegraphics[width=0.85\linewidth]{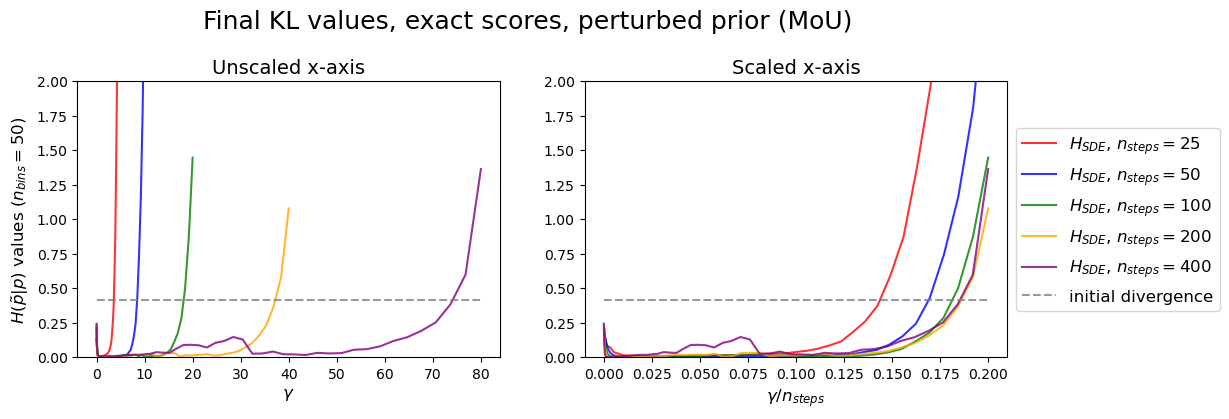}
    \caption{The effect of constant $\gamma$ and discretization step size on the final divergence $H(\tilde{p}|p)$ for the mixture of Uniforms. The horizontal axis on the right is the Langevin 'effective step size' described in \ref{rem: gamma and discretization}.}
    \label{fig: multiple gamma}
\end{figure}

\subsection{Learned score functions}\label{subsection: experiments approx scores}

Here, we consider sampling with learned score functions, in the setting of Section \ref{subsection: approx scores}, for the mixture of Gaussians and the more complex data sets. For the mixture of Gaussians, we train a multilayer perceptron (MLP) of layers $[128, 128, 128]$ with SiLU activations and sinusoidal time-embeddings, with the EDM training configurations \citep{karras}. For MNIST, we train a UNet with 45M parameters, while for CIFAR-10 we use pretrained models from \cite{karras}, and for the porous media example we use pretrained models from \cite{frontgeologico}.

The mixture of Gaussians allows a direct computation of the score error $\epsilon_t(x)$, since the exact score $\nabla\log p_t(x)$ is also available. For the more complex data sets, the true score function is unknown, but we can still compare the score matching (SM) error $E_{\bar{p}_t}[|\epsilon_t|^2]$ between different models. This can be done via the denoising score matching (DSM) errors, since SM and DSM errors differ only by a model-independent constant \citep{denoisingSM}.

\subsubsection{Prior error correction}

First, we evaluate the ability of reverse SDEs with approximate scores to correct errors from the prior distribution. As in Section \ref{subsection: experiments analytical scores}, we consider perturbed prior distributions at time $T=80$, here perturbing only its standard deviation from the default choice $\sigma(T)=T$ matching the forward process. Figure \ref{fig: gamma curves perturbed prior} shows a performance comparison across constant values of $\gamma$, for a model trained on the mixture of Gaussians and a model trained on MNIST. We can see that above a certain $\gamma$ the SDE essentially eliminates the error from the prior, with final performance independent of the prior choice. Figure \ref{fig: entropy evolution perturbed prior} shows, for the mixture of Gaussians, the evolution of the KL divergences and of the bounds from Section \ref{section: kl evolution and bounds}, in the case of $\sigma(T)=96$ and $\gamma=1$. We see an initial correction of the prior distribution error by the SDE, followed by the accumulation of score errors near the end of the sampling process, which is observed for the ODE as well. Observe that the bound for $H(\tilde{p}|p)$, from Theorem \ref{teo 4} (equation \eqref{perturbed bound 2}) tracks the initial decrease in the divergences, unlike the bound for $H(p|\tilde{p})$, from Corollary \ref{coro 2}. The first bound is computed with an instantaneous optimal $\delta=\delta(\tau)$ obtained by minimizing the bound on $dH/d\tau$ \eqref{H_deriv pert bound 2} at each time step $\tau_i$.

\begin{figure}[ht!]
    \begin{subfigure}[b]{0.64\linewidth}
        \centering
        \includegraphics[width=0.48\linewidth]{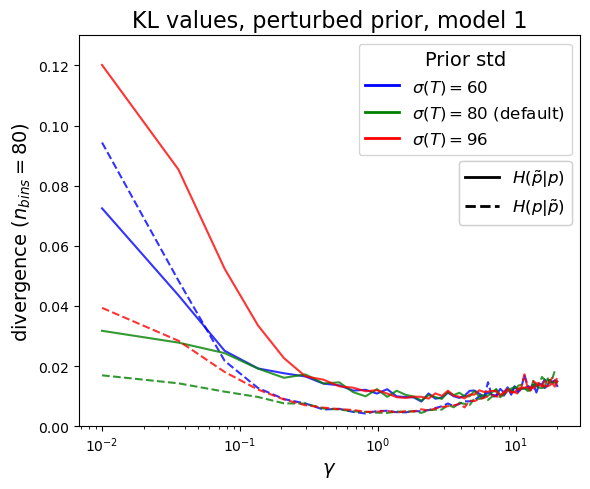}
        \includegraphics[width=0.48\linewidth]{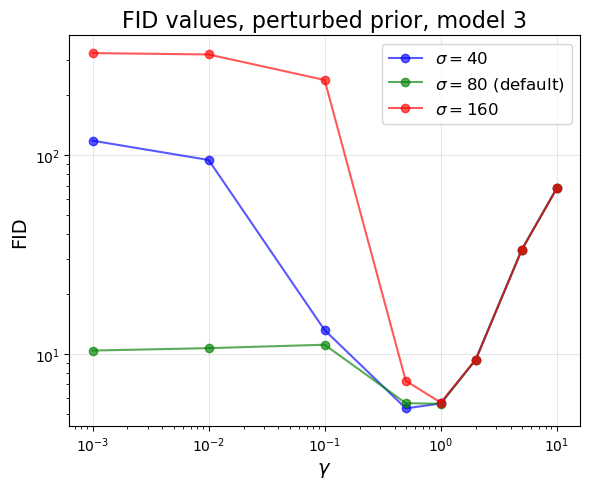}
        \caption{On the left, $\text{KL}\times\gamma$ curves for the mixture of Gaussians, using prior distributions with a perturbed variance. On the right, the equivalent plot for an MNIST model, showing $\text{FID}\times\gamma$ curves.}
        \label{fig: gamma curves perturbed prior}
    \end{subfigure}    
    \hfill
    \begin{subfigure}[b]{0.32\linewidth}
        \centering
        \includegraphics[width=\linewidth]{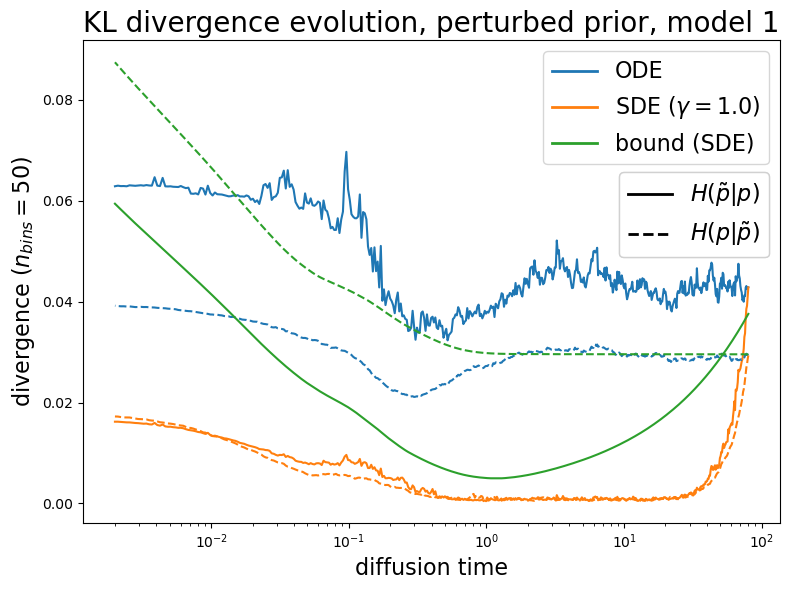}
        \caption{Evolution of KL divergences and bounds with a prior std $\sigma(T)=96$.}
        \label{fig: entropy evolution perturbed prior}
    \end{subfigure}
    \caption{Experiments with learned scores and a perturbed prior.}
    \label{fig: learned scores}
\end{figure}

\subsubsection{Time-profile of error magnitude}\label{subsec: time_profile_experiments}

We now move to a more realistic setting, where the initial error is very small, and focus on score approximation errors. Motivated by the analysis in Section \ref{subsec: instantaneous_opt}, we test how the time-profile of error magnitude interacts with the effect of $\gamma$ on final performance. Figure \ref{fig: gamma curves and error norms} shows three illustrative examples of score models with the same network architecture, trained with the same hyperparameters and having \textit{approximately the same validation loss values}. Comparing the error profiles of model 1 and 2, we see that model 1 has a higher $L^2$ norm of the error for $t>4$ and lower in the intermediate interval $[0.01, 0.5]$. Based on the reasoning from Section \ref{subsec: instantaneous_opt}, we would expect stochastic correction to be more effective for model 1, and this is indeed observed in the $\text{KL}\times\gamma$ plot. For model 3, the error near the end of sampling ($t<0.1$) is smaller than for the others, so from Section \ref{subsec: instantaneous_opt} we would expect it to produce a smaller amplification of errors for large $\gamma$, and this is also observed in the $\text{KL}\times\gamma$ plot. As an illustration of the well-known fact that deterministic sampling is more robust to the reduction of the number of steps, Figure \ref{fig: multisteps} shows the impact of $\gamma$ across several $n_\text{steps}$ values. This happens even for model 3, which benefits from stochasticity at large $n_\text{steps}$, and should be more pronounced when using a fast deterministic sampler instead of Euler's method.

\begin{figure}[ht!]
    \begin{subfigure}[b]{\linewidth}
        \centering
        \includegraphics[width=0.48\linewidth]{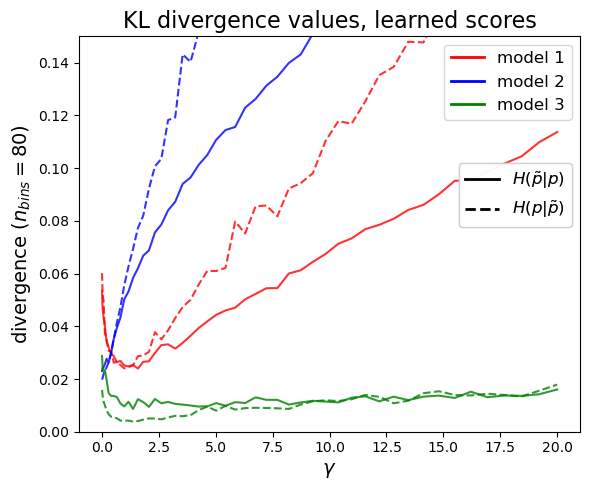}
        \hfill \includegraphics[width=0.48\linewidth]{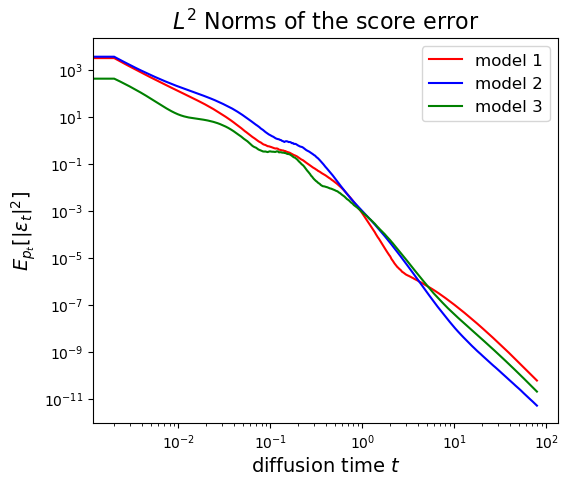}
        \caption{For three illustrative trained models, $\text{KL}\times\gamma$ curves for both divergences (left), and average squared score errors $E_{p_t}[|\epsilon_t|^2]$ (right).}
        \label{fig: gamma curves and error norms}
    \end{subfigure}
    \begin{subfigure}[b]{0.4\linewidth}
        \centering
        \includegraphics[width=\linewidth]{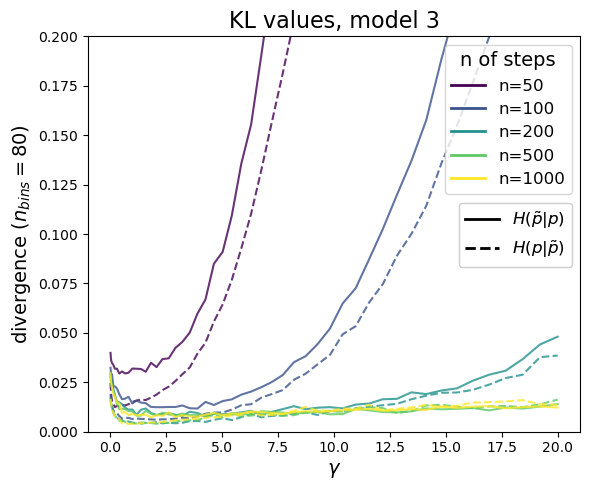}
        \caption{The effect of the number of discretization steps on $\text{KL}\times\gamma$ curves.}
        \label{fig: multisteps}
    \end{subfigure}
    \hfill
    \begin{subfigure}[b]{0.55\linewidth}
        \centering
        \includegraphics[width=\linewidth]{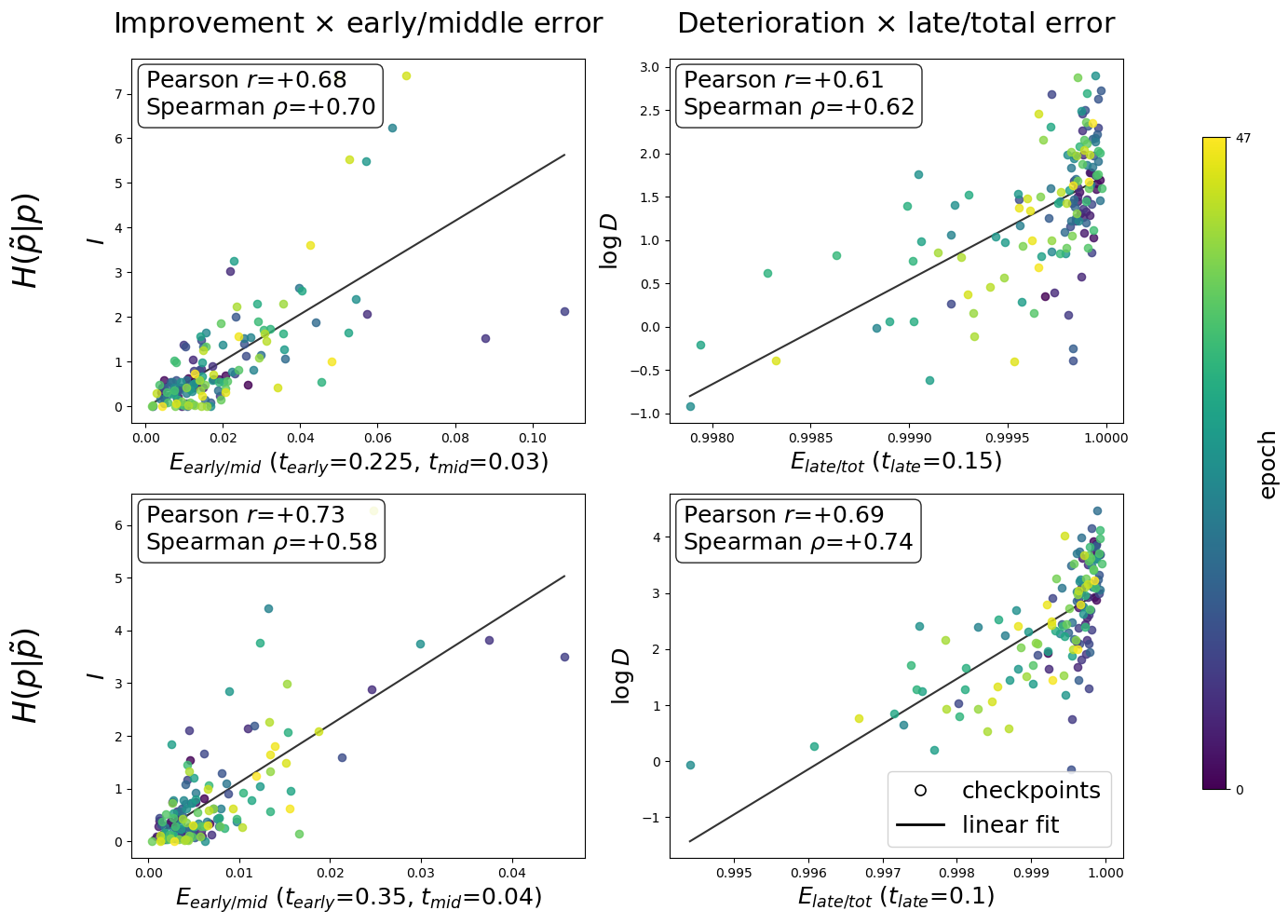}
        \caption{Correlation plots between scalar quantities across $144$ checkpoints.}
        \label{fig:correlation}
    \end{subfigure}
    \caption{Learned models on the mixture of Gaussians data set.}
    \label{fig: MoG learned scores}
\end{figure}

Complementing these examples, we investigate the relationships outlined in the previous paragraph across all the epoch checkpoints for three training runs of the same model.
We extract, from the $\text{KL}\times\gamma$ and error profile curves, scalars capable of quantifying specific aspects of these curves, and measure correlations. For the influence of stochasticity, we consider the relative improvement $I:=(H_0 - H_{\min})/ H_{\min}$ and the relative deterioration $D:=(H_{\gmax} - H_{\min})/ H_{\min}$, where $H_0$ denotes a final KL divergence at $\gamma=0$, $H_{\gmax}$ a final KL divergence at $\gamma=\gmax$, and $H_{\min}$ the lowest across all $\gamma\in[0, \gmax]$. For the error profiles, we consider the ratios of integrated errors
\begin{equation}
    E_\text{early/mid}:=\frac{\int_{t_\text{early}}^T E_{p_t}[|\epsilon_t|^2]dt}{\int_{t_\text{mid}}^{t_\text{early}}E_{p_t}[|\epsilon_t|^2]dt} \qquad \text{and}\qquad E_\text{late/tot}:=\frac{\int_0^{t_\text{late}} E_{p_t}[|\epsilon_t|^2]dt}{\int_0^T E_{p_t}[|\epsilon_t|^2]dt},
\end{equation}
with threshold parameters $t_\text{early}$, $t_\text{mid}$ and $t_\text{late}$. Figure \ref{fig:correlation} shows scatter plots of $I \times E_\text{early/mid}$ and $(\log D) \times E_\text{late/tot}$, for both KL divergences, along with values of Pearson and Spearman coefficients and the obtained regression lines. We see a moderate positive correlation, where the remaining unexplained variance could be associated with more complex interactions of the time-profile of the error. The values of $t_\text{early}$, $t_\text{mid}$ and $t_\text{late}$ were chosen to maximize the sum of Pearson and Spearman correlation coefficients on a different set of checkpoints. A plot of the sensitivity of the correlations on the threshold values, for both the main and the held-out checkpoints, can be seen in Figure \ref{fig: correlation thresholds}.
The correlation values are stable along small perturbations of the thresholds, but the most significant choices still may depend on the training hyperparameters and the data set. Thus, we regard this experiment more as a quantitative proof-of-concept than as providing general scalar metrics for any trained model. A more systematic exploration is left for future work.

Next, we consider examples of models on benchmark data sets. Since the DSM losses only give relative information on the error, we plot the relative error differences to the best model, which is taken as baseline. For the MNIST data set (Figure \ref{fig:mnist_fid_gamma_dsm}), we see the same pattern as before: model 1 has a larger error for $t\in[10, 40]$, and shows performance improvement by positive $\gamma$. For model 2, with a smaller error for large $t$, this effect is not observed, while for model 3, with a smaller error for small $t$, the improvement reappears. The same pattern is found for the CIFAR-10 data set with pretrained models differing only in their training preconditioning \citep{karras} (Figure \ref{fig:cifar10_gamma_dsm}), where the larger initial ($t>1$) error of the VP model is associated with an improvement for positive $\gamma$.

\begin{figure}[ht!]
    \centering
    \begin{subfigure}[b]{0.48\linewidth}
        \centering
        \includegraphics[width=0.48\linewidth]{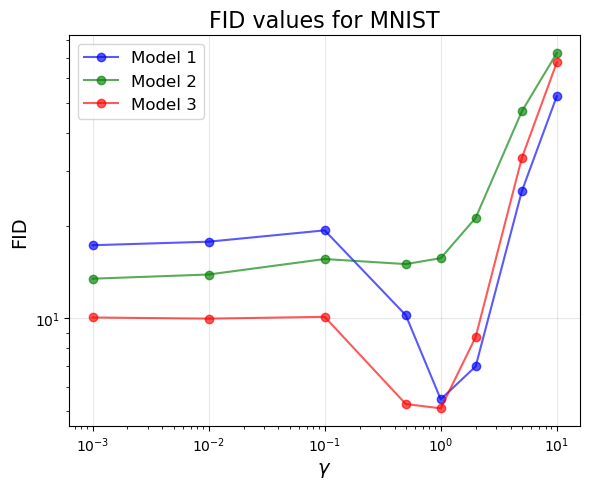}\hfill
        \includegraphics[width=0.48\linewidth]{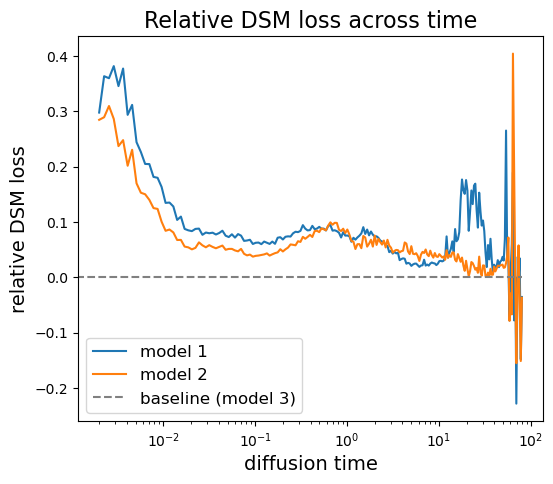}
        \caption{Comparison of three models trained on the MNIST data set, sampling with 500 steps.}
        \label{fig:mnist_fid_gamma_dsm}
    \end{subfigure}
    \hfill
    \begin{subfigure}[b]{0.48\linewidth}
        \centering
        \includegraphics[width=0.48\linewidth]{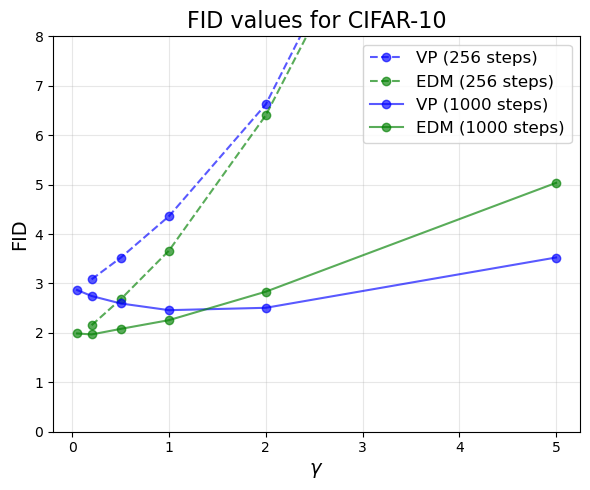}\hfill
        \includegraphics[width=0.48\linewidth]{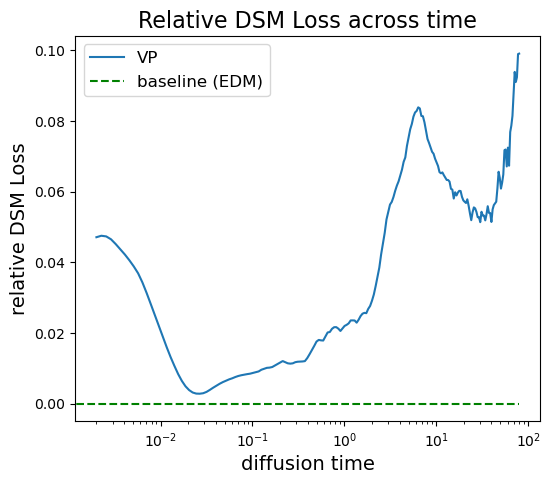}
        \caption{Comparison of pretrained models for the CIFAR-10 data set.}
        \label{fig:cifar10_gamma_dsm}
    \end{subfigure}
    
    \caption{$\text{FID}\times\gamma$ curves and relative DSM errors across time, for models trained on benchmark data sets. The relative DSM error is given by $E_\text{rel}:=\big(E_\text{DSM}(\text{model})-E_\text{DSM}(\text{best model})\big)/ E_\text{DSM}(\text{best model})$.}
    \label{fig:gamma_curves_error_norms_benchmarks}
\end{figure}

\begin{figure}[ht!]
    \begin{subfigure}[b]{0.67\linewidth}
        \centering
        \includegraphics[width=\linewidth]{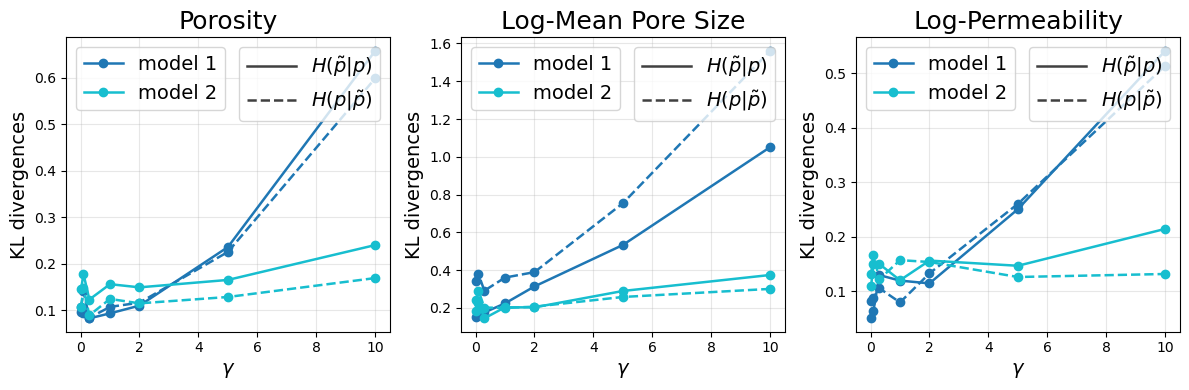}
        \caption{$\text{KL}\times\gamma$ curves for the distribution of geophysical properties.}
    \end{subfigure}
    \hfill
    \begin{subfigure}[b]{0.27\linewidth}
        \centering
        \includegraphics[width=\linewidth]{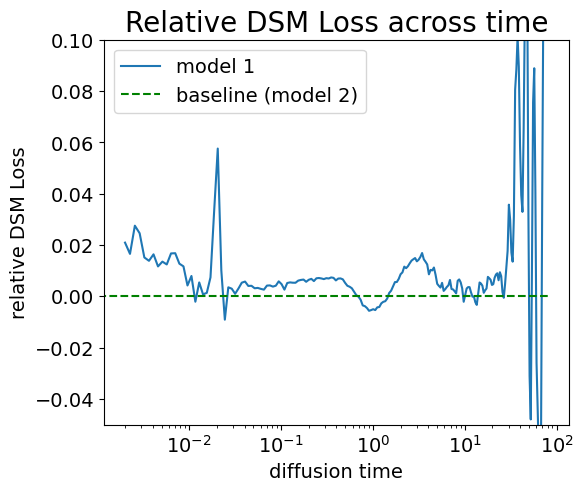}
        \caption{Relative DSM losses.}
    \end{subfigure}

    \caption{Comparison of two latent diffusion models trained on porous media volumes of Bentheimer sandstone, following \citep{frontgeologico}. Each volume comprises $64^3$ voxels at a $3$ $\mu$m resolution.}
    \label{fig: gamma curves and dsm error for rocks}
\end{figure}

Additionally, Figure \ref{fig: gamma curves and dsm error for rocks} shows examples of latent diffusion models trained on 3D volumes of porous media, in an application presented in \cite{frontgeologico}. Performance in this case is measured by the distribution of scalar geophysical properties, for which the KL divergences can be computed. This case illustrates the deterioration aspect: the larger error of model 1 for $t<1$ is associated with the error amplification for large $\gamma$ observed in all the metrics.

\subsubsection{Example: different impacts on the KL divergences}\label{sec: different KLs}

Closer inspection of the $\text{KL}\times\gamma$ curves for model 1, in Figure \ref{fig: gamma curves and error norms}, reveals that, for a certain interval of $\gamma$, the SDE can produce lower $H(\tilde p|p)$ but higher $H(p|\tilde p)$ when compared to the ODE. Figure \ref{fig: different KLs} shows the generated distributions and the evolution of the KL divergences for one such case. We can see a higher presence of outliers (samples in the low-density region of the data distribution) for the ODE, and lower mode coverage (lack of samples in some regions of the data distribution) for the SDE. This agrees with the usual interpretation of the KL divergences, mentioned in the Introduction.

\begin{figure}[ht!]
    \centering
    \includegraphics[width=0.48\linewidth]{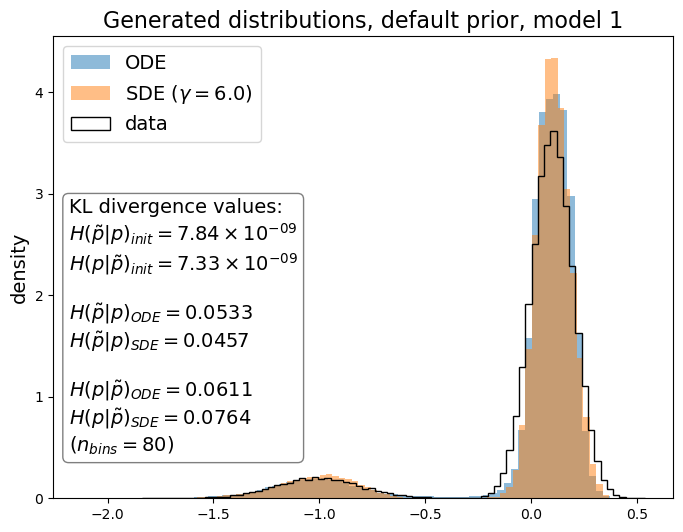}
    \includegraphics[width=0.48\linewidth]{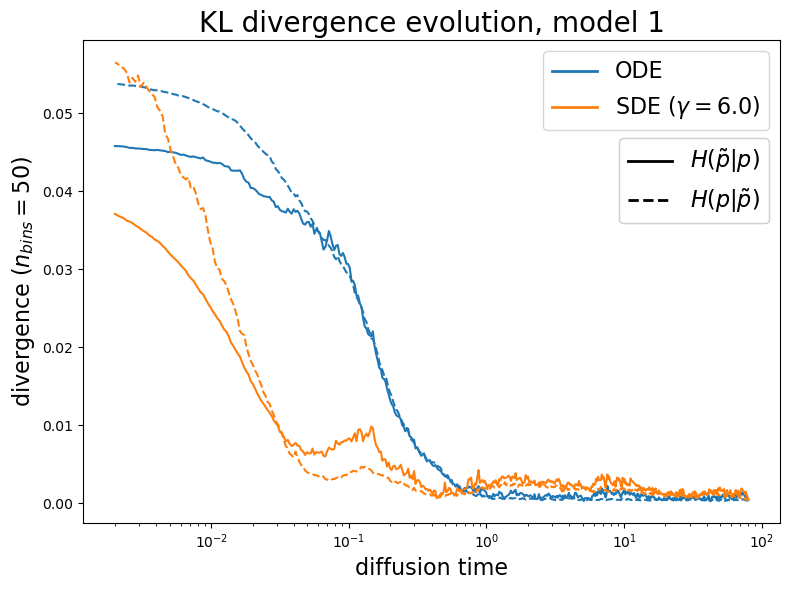}

    \caption{An example where stochasticity is beneficial in one of the divergences and detrimental in the other. Generated distributions (left) and KL divergence evolution (right), for model 1 of Figure \ref{fig: gamma curves and error norms} with the default EDM prior distribution.}
    \label{fig: different KLs}
\end{figure}

\subsubsection{Variable \texorpdfstring{$\gamma(t)$}{gamma(t)}} \label{subsec: variable_gamma}

The dependence of the relative improvement and relative deterioration on the score error within certain intervals suggests that the benefits from stochasticity can be maximized, and the detrimental effects mitigated, by setting a positive $\gamma$ only within certain intervals. Motivated by this and by the bang-bang structure of the instantaneous optimum obtained in Section \ref{subsec: instantaneous_opt}, we perform grid searches for the best interval $[S_{\min}, S_{\max}]$ to set a constant positive $\gamma$.

Figure \ref{fig: colormaps_mnist} shows FID values for the three MNIST models discussed in Section \ref{subsec: time_profile_experiments}, across several intervals $[S_{\min}, S_{\max}]$ and two choices of $\gamma$. First, notice that the minimum FID scores for models 2 and 3 are substantially lower than their minimum FID scores for constant $\gamma$ (Figure \ref{fig:mnist_fid_gamma_dsm}). We also see that stochasticity only in the beginning of sampling is always detrimental, with the largest deterioration for model 1, which has the largest initial score errors (Figure \ref{fig:mnist_fid_gamma_dsm}). This suggests that the SDE is amplifying score errors in the beginning of sampling, but it is not active in the end to correct the accumulated errors. In the case of $\gamma=5$, we also observe a deterioration when stochasticity is used only over the very end of sampling, which is also consistent with the positive correlation between the relative deterioration $D$ and the late/total error ratio $E_\text{late/tot}$ found in the previous section.

\begin{figure}[ht!]
    \centering
    \includegraphics[width=0.32\linewidth]{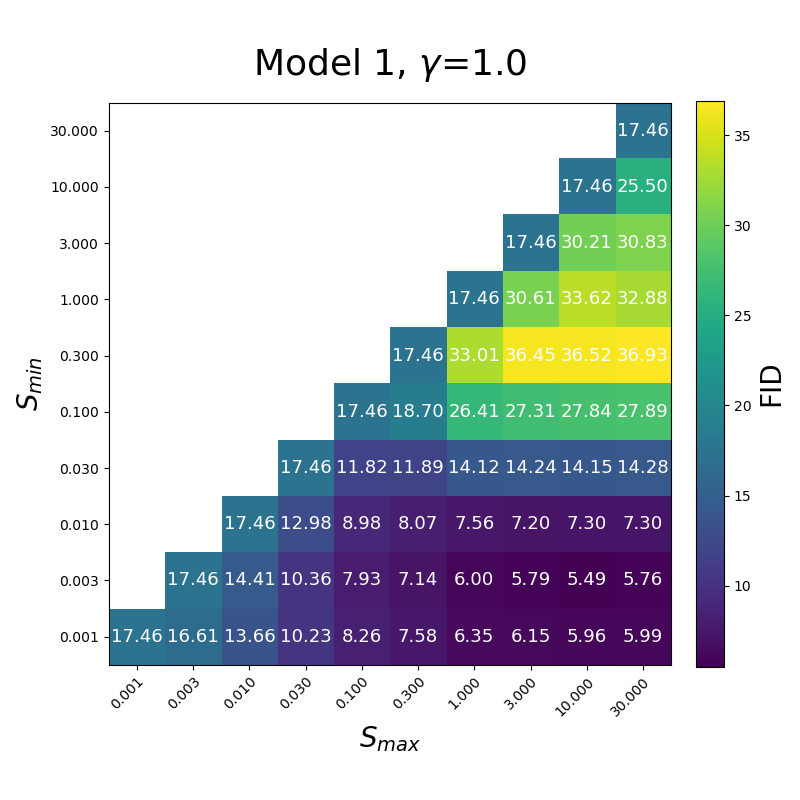}
    \includegraphics[width=0.32\linewidth]{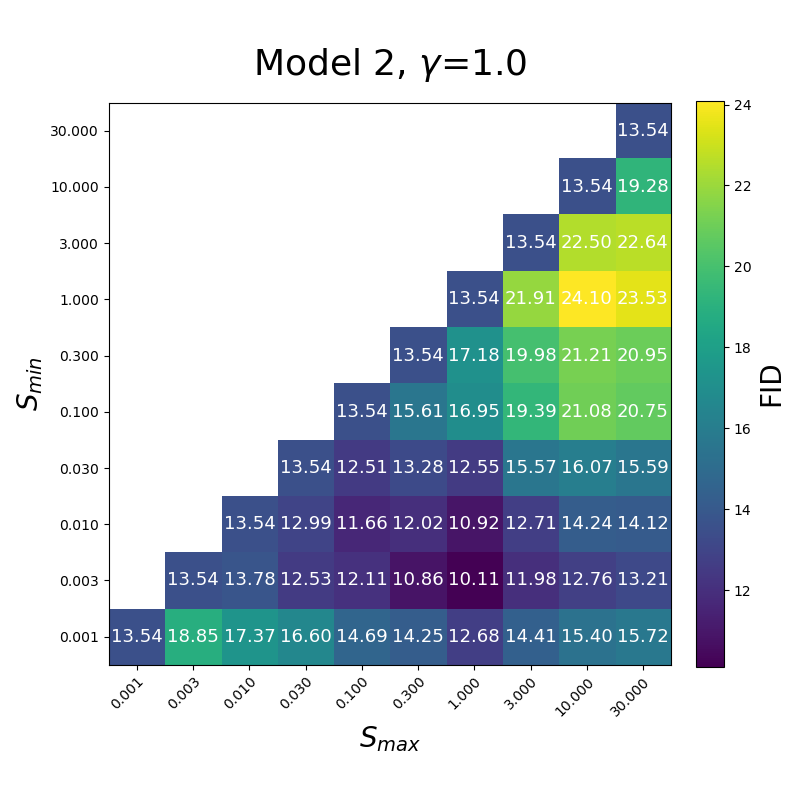}
    \includegraphics[width=0.32\linewidth]{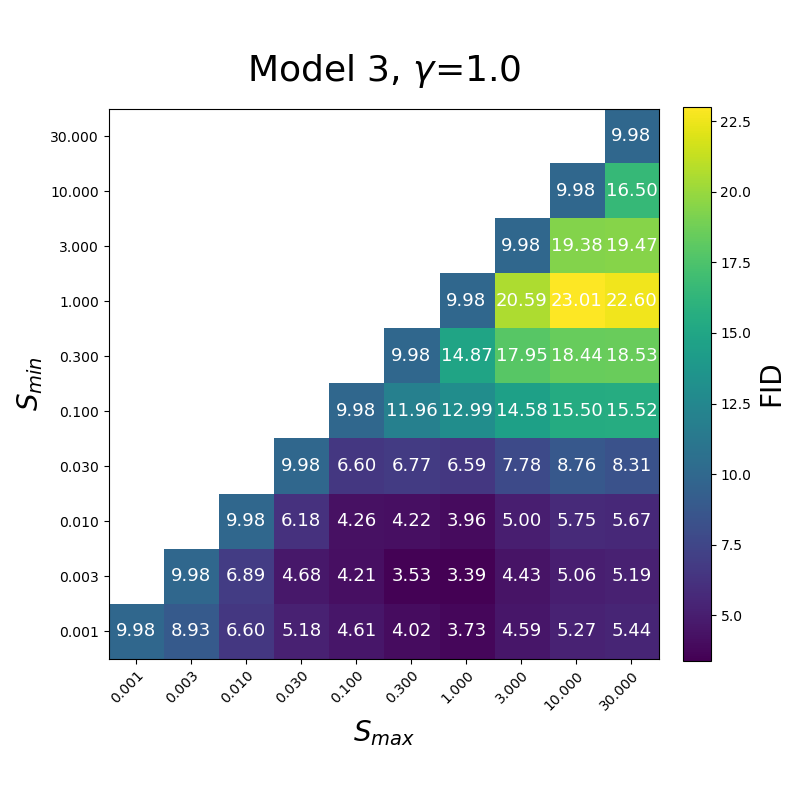}
    \includegraphics[width=0.32\linewidth]{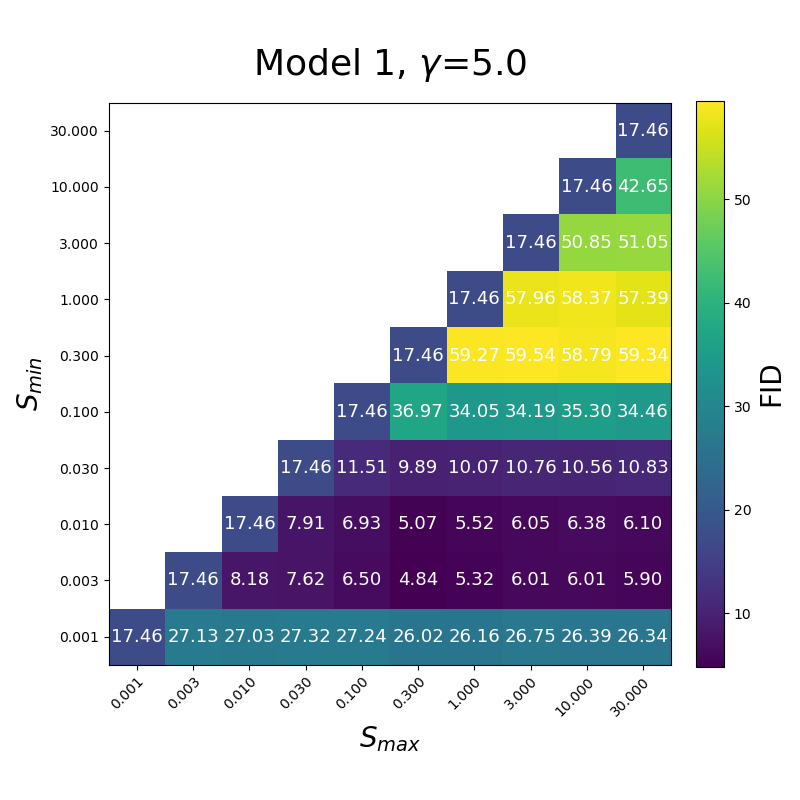}
    \includegraphics[width=0.32\linewidth]{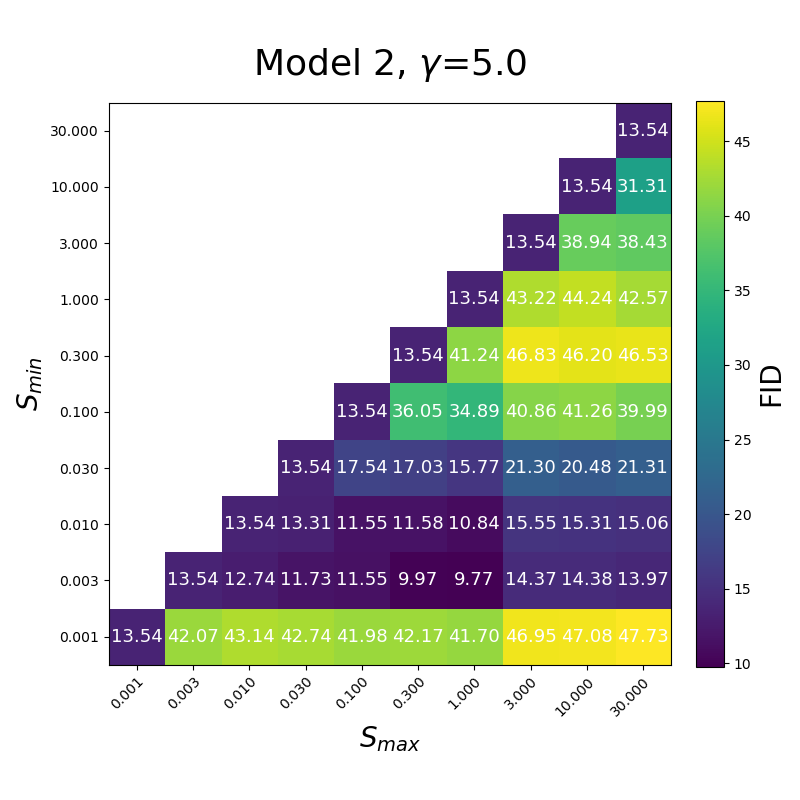}
    \includegraphics[width=0.32\linewidth]{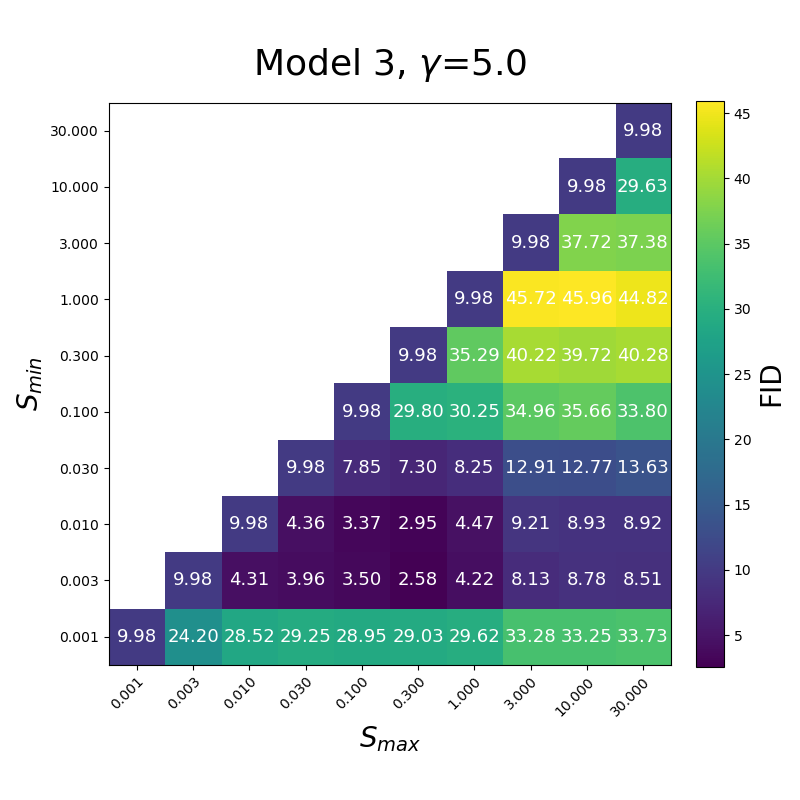}
    
    \caption{FID scores for the three MNIST models shown in Figure \ref{fig:mnist_fid_gamma_dsm}, where $\gamma(t)$ is set to $\gamma=1$, in the first row, and $\gamma=5$, in the second, within the interval $[S_{\min}, S_{\max}]$. Note that diagonal entries correspond to the pure ODE.}
    \label{fig: colormaps_mnist}
\end{figure}

Corresponding experiments for the mixture of Gaussians and CIFAR-10 data sets can be found in Figures \ref{fig: colormaps} and \ref{fig:colormaps_cifar10}. Although the worst stochasticity intervals vary significantly across models and data sets, the pattern for the most beneficial intervals appears to be more stable. For models which benefit significantly from stochasticity, we always see an optimal stochasticity interval located close to (but not at) the end of sampling, with $S_{\max}\leq1$ and $S_{\min}>0$. The idea of grid-searching for optimal stochasticity intervals is not new \citep{karras}, but the existence of an optimal stochasticity interval with $S_{\max}\leq1$ and $S_{\min}>0$ might be a more general feature of diffusion sampling, given that it also aligns with the time-profile analysis explored in Section \ref{subsec: instantaneous_opt}. If true, it would allow grid-searching for an optimal interval across a limited range of intervals, given a trained model. However, the experimental evidence provided here is still preliminary.

\begin{remark}[Conservativity of learned scores]
    In \cite[Section 4]{karras}, it is conjectured that the possible non-conservativity of the learned score could cause the observed detrimental effects of stochasticity.
    The experiments in this section show that even in a 1D setting, where every continuous function is conservative, some choices of $\gamma$ and of the interval $[S_{\min}, S_{\max}]$ make stochasticity detrimental, for certain trained models.
\end{remark}

\section{A fully analytical example}\label{sec: analytical example}

In this section, we consider an analytical example for which we can explicitly compute the sampling distributions, KL divergences, the first bound of Theorem \ref{teo 4}, and all other relevant quantities. This toy setting allows a direct analysis of the effect of the parameter $\gamma$, including an optimal control solution for $\gamma(\tau)$ at the end of the section. We will consider linear perturbations in the score and the prior distribution, and analyze how these errors interact with each other. Although linear perturbations do not accurately reflect the neural network errors found in practice, this is the only setting for which the reverse-time SDE can be analytically solved. In the following analysis, observe that the error attributed to the prior can also be seen as the error arising from inaccurate scores at the early phase of sampling.

\subsection{One-dimensional case}

We start with the one-dimensional case, assuming that the ``data'' distribution is a Gaussian $X_0 \sim \mathcal{N}(\mu_0, \sigma_0^2)$ and considering a forward equation of the form
\begin{equation}
    dX_t = g(t)\,dW_t,
\end{equation}
on a time interval $0\leq t \leq T$. The forward process is a Gauss-Markov process $X_t \sim \mathcal{N}(\mu_0, \sigma(t)^2),$ with score function
\begin{equation}
    \label{scorefullyanalytic}
    \nabla \log p(x, t) = -\frac{x - \mu_0}{\sigma(t)^2}, \qquad \sigma(t)^2 = \sigma_0^2 + \sigma_{\text{diff}}(t)^2, \quad \sigma_{\text{diff}}(t)^2 = \int_0^t g(s)^2 ds.
\end{equation} 
We consider an approximate score of the form
\[ s_\theta(x, t) = -\frac{x - \mu_\theta}{\sigma_\theta(t)^2}, \qquad \sigma_\theta(t)^2 = \alpha_\theta \sigma(t)^2,
\]
with constant parameters $\mu_\theta\in\mathbb{R}$ and $\alpha_\theta > 0.$ The error to the exact score, introduced in Section \ref{subsection: approx scores}, becomes
\begin{equation} 
    \epsilon_t(x) = s_\theta(x, t) - \nabla \log p(x, t) = \frac{\mu_\theta - \mu_0}{\sigma_\theta(t)^2} + \left(1 - \frac{\sigma(t)^2}{\sigma_\theta(t)^2} \right)\frac{x - \mu_0}{\sigma(t)^2}.
\end{equation}

This approximate score yields the family of linear approximate reverse equations
\begin{equation}\label{AE:pertubedequation}
    d\tilde{X}_\tau = -\frac{1}{2}\bar{g}^2(\tau)(1+\gamma(\tau))\left( \frac{\tilde{X}_\tau - \bar\mu_\theta(\tau)}{\bar\sigma_\theta(\tau)^2}\right) \,d\tau + \sqrt{\gamma(\tau)} \bar{g}(\tau) \,dW_\tau,
\end{equation}
in the interval $0\leq\tau\leq T$. Since this is a linear equation, it can be solved explicitly \cite[Section 5.6]{karatzas}, yielding the solution
\begin{equation}
    \tilde X_\tau = e^{-A(\tau)} \tilde X_0 + e^{-A(\tau)} \int_0^\tau \bar\mu_\theta(s) a(s) e^{A(s)} \;\mathrm{d}s + e^{-A(\tau)} \int_0^\tau \sqrt{\gamma(s)}\bar g(s) e^{A(s)} \;\mathrm{d}W_s,
\end{equation}
where
\[ A(\tau) = \int_0^\tau a(s)\;\mathrm{d}s, \qquad a(\tau) = \frac{1}{2} \frac{\bar g(\tau)^2 (1 + \gamma(\tau))}{\bar\sigma_\theta(\tau)^2}.
\]
Finally, we consider a prior, for the reverse process, which is also Gaussian, $\tilde{X}_0 \sim \mathcal{N}(\mu_T, \sigma_T^2),$ with $\sigma_T^2 = \beta_T \sigma(T)^2,$ for a given parameter $\beta_T > 0$. Thus, we obtain
\begin{equation}
    \tilde\mu_\theta(\tau) = e^{-A(\tau)}\mu_T + e^{-A(\tau)} \int_0^\tau \bar\mu_\theta(s) a(s) e^{A(s)} \;\mathrm{d}s
\end{equation}
and
\begin{equation}
    \tilde\sigma_\theta(\tau)^2 = e^{-2A(\tau)}\sigma_T^2 + e^{-2A(\tau)} \int_0^\tau \gamma(s) \bar g(s)^2 e^{2A(s)}\;\mathrm{d}s.
\end{equation}

\subsubsection{Constant \texorpdfstring{$\gamma(t)=\gamma$}{gamma(t)=gamma}}

For more explicit calculations, here we assume
\begin{equation}
    \label{parametersanalytic}
    \gamma(t) = \gamma, \quad \forall t \geq 0,
\end{equation}
with a given constant $\gamma \geq 0.$ Thus, the free (constant) parameters for the approximate reverse process are the shape-changing parameters $\alpha_\theta, \beta_T > 0,$ the translation parameters $\mu_\theta, \mu_T \in \mathbb{R},$ the stochasticity parameter $\gamma \geq 0,$ and the starting time $T > 0.$ Besides these, we also have the parameters $\mu_0, \sigma_0$ defining the data distribution.

Then, the mean-square error in the score becomes
\begin{equation}
        \mathbb{E}_{\bar p_\tau}[\tilde\epsilon_\tau^2] = \frac{(\mu_\theta - \mu_0)^2}{\alpha_\theta^2\bar\sigma(\tau)^4} + \left(1 - \frac{1}{\alpha_\theta} \right)^2 \frac{1}{\bar\sigma(\tau)^2}.
\end{equation}
Since the variance of the forward diffusion increases with time, the error in the score decreases, which coincides with the behavior in the numerical example, where the score is approximated by a neural network trained with denoising score-matching (see e.g. Figure \ref{fig: learned scores}).

With constant $\gamma$, the mean and the variance of the Gauss-Markov approximate reverse process become
\begin{equation}
    \label{meananalytic}
    \tilde\mu_\theta(\tau) = \mu_\theta + \left(\frac{\bar\sigma(\tau)^2}{\bar\sigma(0)^2}\right)^{\frac{1}{2} \frac{(1 + \gamma)}{\alpha_\theta}}(\mu_T - \mu_\theta)
\end{equation}
and
\begin{align}
    \label{varianceanalytic}
    \tilde\sigma_\theta(\tau)^2 & = \begin{cases}
        \bar\sigma(\tau)^2 \left( \frac{\gamma\alpha_\theta}{(1 + \gamma) - \alpha_\theta} + \left(\beta_T - \frac{\gamma\alpha_\theta}{(1 + \gamma) - \alpha_\theta}\right)\left(\frac{\bar\sigma(\tau)^2}{\bar\sigma(0)^2}\right)^{\frac{(1 + \gamma)}{\alpha_\theta} - 1} \right), & 1 + \gamma \neq \alpha_\theta, \\
        \bar\sigma(\tau)^2\left( \beta_T +\gamma\ln\left(\frac{\bar\sigma(0)^2}{\bar\sigma(\tau)^2}\right)\right), & 1 + \gamma = \alpha_\theta,
    \end{cases}
\end{align}
where the apparent singularity as $\gamma \rightarrow \alpha_\theta - 1$ in the first expression is a removable singularity. 

With that, we compute both KL divergences
\begin{align}
    \label{KLQPdivergencescalaranalyticexample}
    H(\tilde{p}_\tau|\bar{p}_\tau) & = \frac{1}{2}\left(\ln {\frac {\bar\sigma(\tau)^2}{\tilde\sigma_\theta(\tau)^2}}+{\frac {\tilde\sigma_\theta(\tau)^{2}+(\mu_{0}-\tilde\mu_\theta(\tau))^{2}}{\bar\sigma(\tau)^{2}}}-1 \right), \\
    \label{KLPQdivergencescalaranalyticexample}
    H(\bar{p}_\tau|\tilde{p}_\tau) & = \frac{1}{2}\left(\ln {\frac{\tilde\sigma_\theta(\tau)^2}{\bar\sigma(\tau)^2}}+{\frac {\bar\sigma(\tau)^{2}+(\mu_{0}-\tilde\mu_\theta(\tau))^{2}}{\tilde\sigma_\theta(\tau)^{2}}}-1 \right).
\end{align}

In Figure \ref{fig:analyticKLbounds}, we can see the evolution of $H(\tilde{p}_\tau|\bar{p}_\tau)$ and of the two bounds of Theorem \ref{teo 4}, for a particular choice of prior and score errors, for different values of $\gamma$. Note that the second bound can, in some cases, be tighter than the first one, even though its derivation uses one further inequality. This is possible since the extra inequality is applied before the LSI, as discussed in Remark \ref{remark: approx bound}.

\begin{figure}[ht!]
    \centering
    \includegraphics[width=0.32\linewidth]{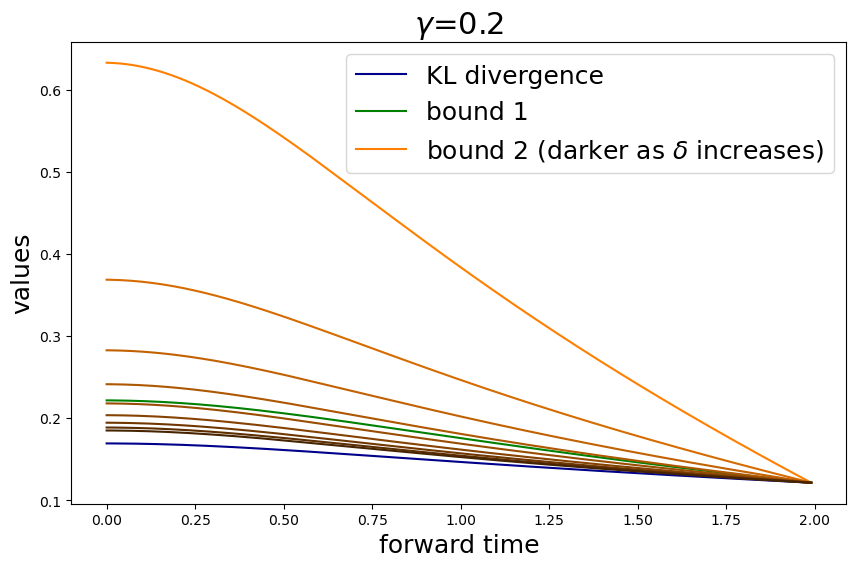}
    \includegraphics[width=0.32\linewidth]{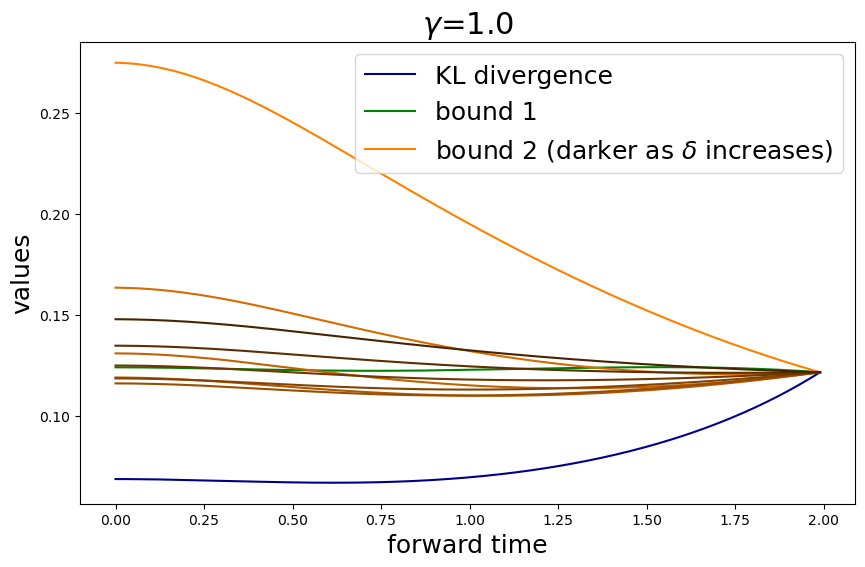}
    \includegraphics[width=0.32\linewidth]{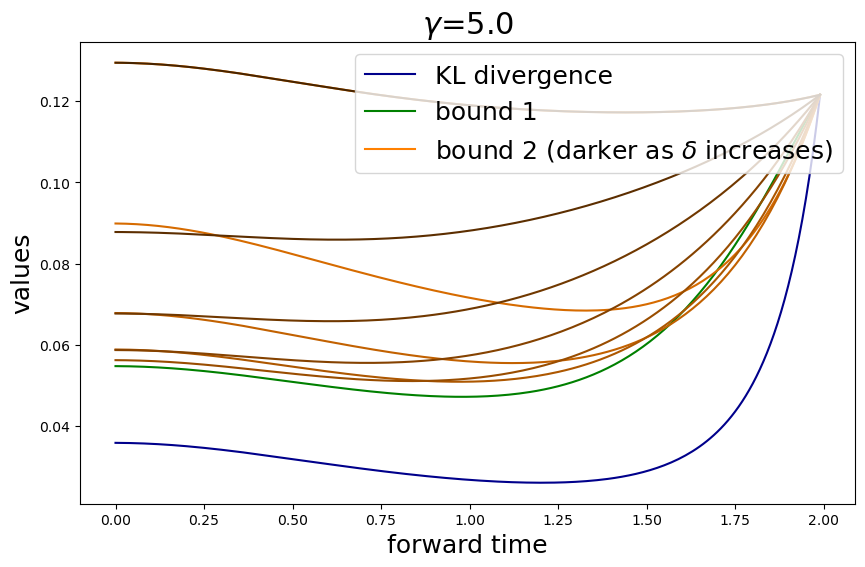}
    \hfill
    \caption{Evolution of $H(\tilde p|p)$ and its bounds for some choices of constant $\gamma$, with the sampling parameters $\mu_\theta=0.2$, $\alpha_\theta=0.8$, $\mu_T=0.5$, $\beta_T=0.5$ and $T=2$.}
    \label{fig:analyticKLbounds}
\end{figure}

We now use formulas \eqref{KLQPdivergencescalaranalyticexample} and \eqref{KLPQdivergencescalaranalyticexample} to better understand the effect of $\gamma$ on the KL divergences. We summarize, below, the findings discussed in more detail in Appendix \ref{app: proofs sec5}. Assuming, for simplicity, that the mean is preserved, i.e. $\tilde\mu_\theta(\tau) = \mu_0$, stemming from the choices $\mu_T = \mu_\theta = \mu_0$, the KL divergences reduce, after sampling (corresponding to $t = 0$ or, equivalently, $\tau = T$), to
\[
    H(\tilde p_T | p_0) = \frac{v(\gamma) - \ln v(\gamma) - 1}{2} \qquad \textrm{and} \qquad H(p_0 | \tilde p_T) = \frac{\frac{1}{v(\gamma)} + \ln v(\gamma) - 1}{2}
\]
where
\[
    v(\gamma) = \frac{\tilde\sigma_\theta(T)^2}{\bar\sigma(T)^2} = \begin{cases}
         \frac{\gamma\alpha_\theta}{(1 + \gamma) - \alpha_\theta} + \left(\beta_T - \frac{\gamma\alpha_\theta}{(1 + \gamma) - \alpha_\theta}\right)\zeta_T^{\frac{(1 + \gamma)}{\alpha_\theta} - 1}, & 1 + \gamma \neq \alpha_\theta, \\
        \beta_T  - \gamma\ln\zeta_T, & 1 + \gamma = \alpha_\theta,
    \end{cases}
\]
with $\zeta_T = \bar\sigma(T)^2/\bar\sigma(0)^2 = \sigma_0^2 / ( \sigma_0^2 + \sigma_{\text{diff}}(T)^2 ) < 1.$

If the score is exact and only the prior is off, meaning $\alpha_\theta = 1$ and $\beta_T \neq 1,$ we have $v(\gamma) = 1 + (\beta_T - 1)\zeta_T^{\gamma} \neq 1$, and $v(\gamma)$ converges monotonically and exponentially towards $1$ as $\gamma \rightarrow \infty$. In this case, both KL divergences decay exponentially to zero, as $\gamma$ increases, as expected. This means stochastic sampling is always better than ODE sampling when the score is exact, as discussed before.

With $\alpha_\theta \neq 1,$ the behavior is more involved. For $\gamma = 0,$ we simply have $v_0:=v(0) = \beta_T \zeta_T^{(1 - \alpha_\theta)/\alpha_\theta},$ while, asymptotically in $\gamma,$ we have $v_\infty := \lim_{\gamma\rightarrow\infty} v(\gamma) = \alpha_\theta.$ Looking at the ratio $v_0/v_\infty = (\beta_T/\alpha_\theta)\zeta_T^{(1 - \alpha_\theta)/\alpha_\theta}$, we see that $v_0 < v_\infty$ if, and only if, either $\zeta_T < (\alpha_\theta/\beta_T)^{\alpha_\theta / (1 - \alpha_\theta)},$ when $\alpha_\theta < 1$, or $\zeta_T > (\beta_T/\alpha_\theta)^{\alpha_\theta / (\alpha_\theta - 1)},$ when $\alpha_\theta > 1$. Taking into account that $\zeta_T < 1$ and looking at $v'(\gamma)$, we see that, when $\beta_T \leq \alpha_\theta < 1,$ the variance ratio $v(\gamma)$ increases monotonically from $v_0$ to $v_\infty,$ approaching $v_\infty=\alpha_\theta$ from below, so both KL divergences decrease monotonically with $\gamma.$ Similarly, when $\beta_T \geq \alpha_\theta > 1,$ the variance ratio $v(\gamma)$ decreases monotonically from $v_0$ toward $v_\infty = \alpha_\theta,$ so both KL divergences decrease monotonically toward $h_\infty.$ In both cases, increasing $\gamma$ is beneficial, and $\gamma \rightarrow \infty$ is optimal. These two regimes occur when the training and the prior errors are aligned, i.e. either both underestimate or both overestimate the exact variances, provided also that the error in the prior is relatively more pronounced. 

Analogous behaviors occur when considering only errors in the mean, and they are also aligned, i.e. when either $\mu_T \leq\mu_\theta\leq \mu_0$ or $\mu_T \geq\mu_\theta\geq \mu_0$. Then, increasing $\gamma$ is beneficial, and $\gamma \rightarrow \infty$ is optimal (see Appendix \ref{app: analytical_exact_variances}). In more general scenarios, the behavior is more nuanced, and the KL divergences may not be monotonic, and may achieve local or global maxima or minima at finite values of the diffusion parameter. In some cases, the error in the prior may compensate the error in the score, and sampling with the reverse ODE may be better. In other cases, the best sampling occurs at an intermediate value of the stochasticity parameter. Figure \ref{fig:analyticKLongammaregimes} illustrates a few different scenarios, where the 2 leftmost plots of the first row are the aligned cases mentioned above.
Interestingly, we can also see that the performance of the SDEs seems to be more stable to variations of the parameters, as they converge quickly in $\gamma$ to the value of $\alpha_\theta$, independently of the other parameters.

With the same scenarios, Figure \ref{fig:analyticmusigmaevolution} shows the evolution of the mean $\tilde\mu_\theta(t)$ and the standard deviation $\tilde\sigma_\theta(t)$ in the phase-space plane $(\mu, \sigma)$ of the analytical solution, given by \eqref{meananalytic} and \eqref{varianceanalytic}. In the first two columns, variance error from the prior and the score are aligned, either underestimating or overestimating the true value, and the error accumulation makes stochastic sampling better. In the remaining plots, the two errors are in different directions, yielding non-monotonic curves in $\gamma$, but only in the last one the prior error and score errors balance out in a way that the reverse ODE is always better.

Interestingly, the alignment of the prior error with the score error is related to the alignment of $\epsilon$ and $\nabla\log\tilde{p}_\tau/\bar{p}_\tau$, discussed in Remark \ref{remark: error direction} and also Section \ref{subsec: instantaneous_opt}. For $\mu_0=0$, at the initial time $\tau=0$ we have
\begin{equation}
     E_{\tilde{p}_0}\big[\epsilon_0\cdot\nabla\log(\tilde{p}_0/\bar{p}_0)\big] = \frac{1}{\alpha_\theta\sigma(T)^4}\left(\mu_\theta \mu_T + \mu_T^2(\alpha_\theta-1) + (\alpha_\theta-1)(\beta_T-1)\sigma(T)^2\right).
\end{equation}
The first term is negative if the means are in opposite directions, i.e. $\mu_\theta\mu_T<0$, and the third term is negative if the variances are in opposite directions, i.e. $(\alpha_\theta-1)(\beta_T-1)<0$. The second term gives the more subtle interaction between score variance and prior mean, and it can become large when both the error in the prior mean and the error in the score variance are large, which intuitively corresponds to the under-correction of the prior error on the mean by too wide score fields. Nevertheless, for typical values of $\sigma(T)$ and $\mu_T$ the third term becomes dominant, making the variance alignment the dominant factor. This is exemplified in the insets of Figure \ref{fig:analyticmusigmaevolution} showing the sign of $\epsilon\nabla\log\tilde{p}_\tau/\bar{p}_\tau$. When the variances of the errors are misaligned, we see that the score error initially reduces the KL (negative $\epsilon\nabla\log\tilde{p}_\tau/\bar{p}_\tau$), until the accumulated error $\nabla\log\tilde{p}_\tau/\bar{p}_\tau$ switches to the direction of the score error.

\begin{figure}[ht!]
    \centering
    \includegraphics[width=0.96\linewidth]{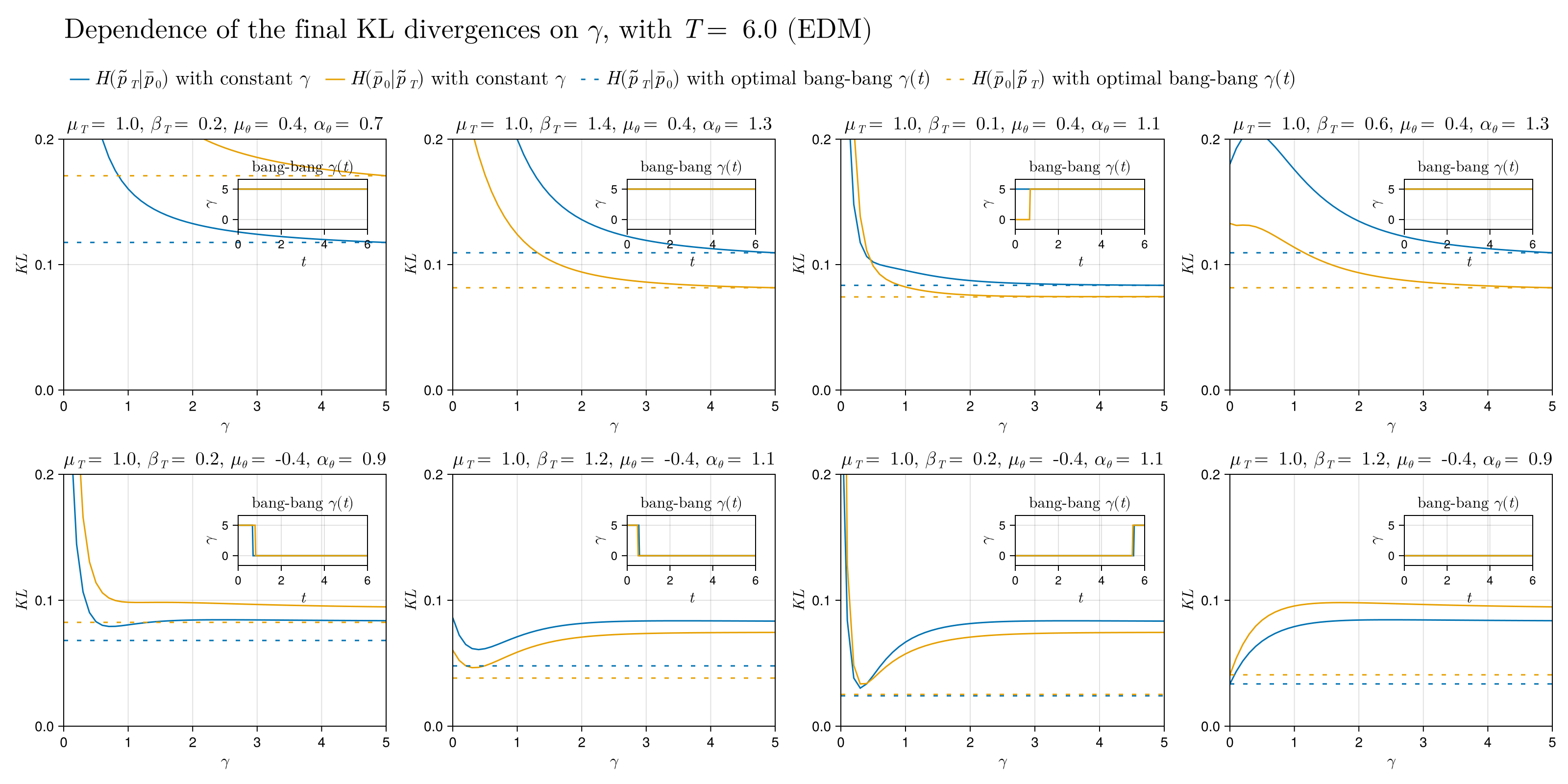}
    \caption{Final KL divergences $H(\tilde p_T | p_0)$ and $H(p_0 | \tilde p_T)$ as functions of $\gamma$ (solid lines); the final divergences with optimal bang-bang diffusion with maximum stochasticity $\gamma_{\textrm{max}} = 5.0$ (dashed lines); for $\mu_T=1.0$ and various choices of the sampling parameters $\mu_\theta,$ $\alpha_\theta,$ and $\beta_T,$ with data parameters $\mu_0=0,$ $\sigma_0=1.0,$ and final time $T=6.0.$ In the insets, we show the $\gamma(\tau)$ profiles for the optimal bang-bang diffusions.}
    \label{fig:analyticKLongammaregimes}
\end{figure}

\begin{figure}[ht!]
    \centering
    \includegraphics[width=0.96\linewidth]{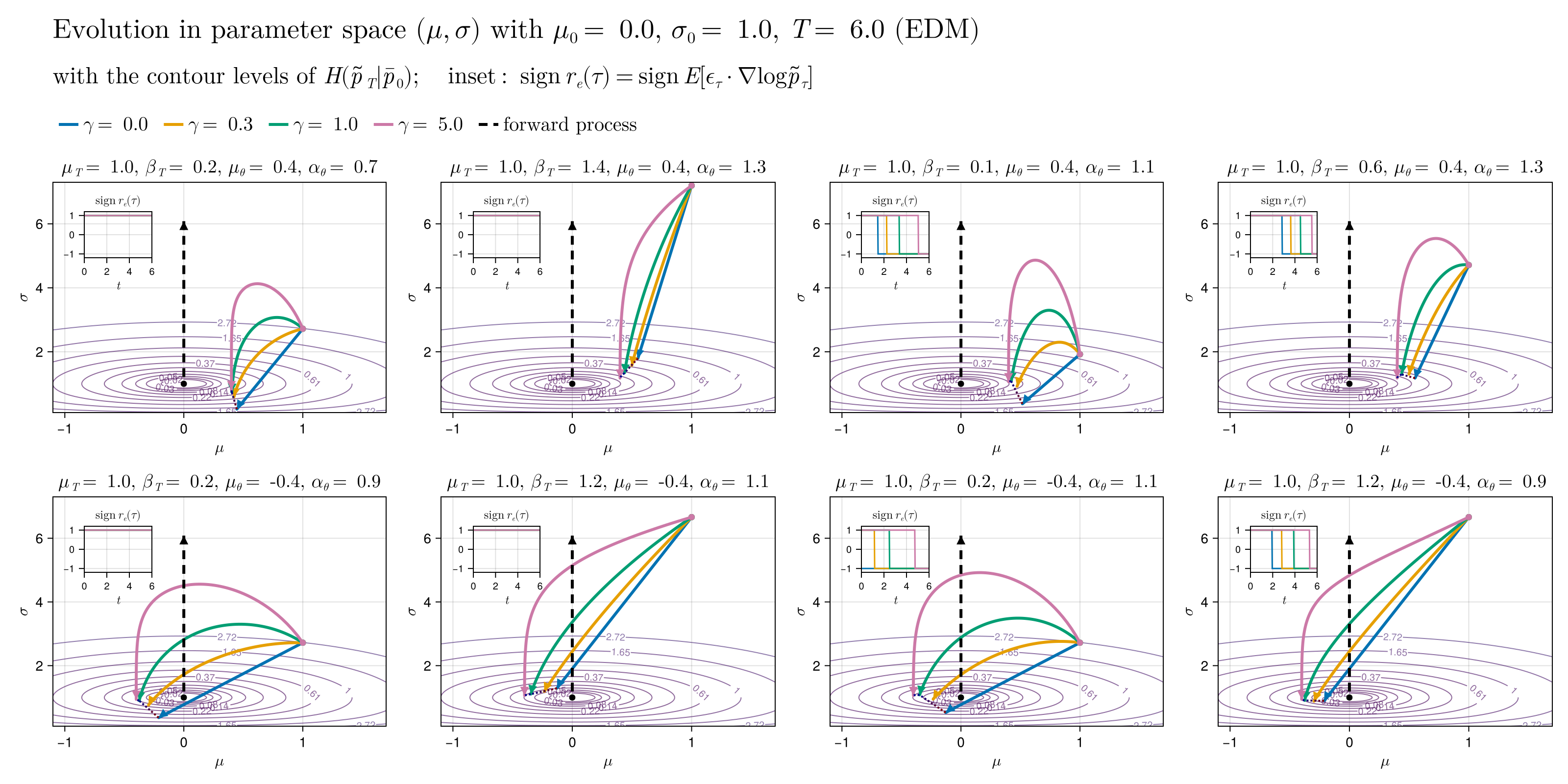}
    \caption{Evolution of the mean and standard deviation $\tau \mapsto (\tilde\mu_\theta(\tau),\tilde\sigma_\theta(\tau))$ of the analytical solution, for selected values of the stochasticity parameter ($\gamma=0.0, 0.3, 1.0, 5.0$), with the same choices of parameters as in Figure \ref{fig:analyticKLongammaregimes}. The background curves are the level curves of the final sample-quality KL divergence $H(\tilde p_T | p_0)$, and the insets show the dynamic alignment measure $\operatorname{sgn}(\epsilon\nabla\log\tilde{p}_\tau/\bar{p}_\tau)$ along sampling.}
    \label{fig:analyticmusigmaevolution}
\end{figure}

\subsubsection{Variable \texorpdfstring{$\gamma(t)$}{gamma(t)}}\label{sec: analytical_opt_control}

We may also consider a time-varying $\gamma = \gamma(t)$ and look for the function that minimizes each terminal KL divergence. This can be formulated as an optimal control problem with terminal cost, imposing an upper bound $\gamma(t)\in [0, \gamma_{\textrm{max}}]$ on the control variable. As detailed in Appendix \ref{app: optimal control}, the control problem in this specific analytical example turns out to be linear in $\gamma$, and applying Pontryagin's maximum principle reveals that the optimal solution is a ``bang-bang'' control: the optimal $\gamma(t)$ takes only two possible values, alternating between the probability flow ODE and stochastic sampling with maximal stochasticity parameter. Moreover, the solution admits \textit{at most one switch}. Remarkably, this structure is independent of the specific terminal cost considered, being a consequence of the underlying linear dynamics of the state space arising from the linear perturbation on the score.

Figure \ref{fig:analyticKLongammaregimes} also shows, in the dashed lines, KL divergences for the optimal $\gamma^*(t)$, where the switching time $\tau^*$ is obtained by numerically searching in the interval $[0, T]$. We can see that the optimal $\gamma^*(t)$ yields lower KL divergences than any constant $\gamma$ when the mean errors are in different directions but the variance errors are not. This global bang-bang optimum can also be compared with the instantaneous optimum discussed in Section 3.2.1, which results from minimizing the KL divergence time derivative at each instant $\tau$. This is done in Figure \ref{fig:instant_global_optima} for the previous choices of parameters, where we see that in some cases the final divergences of the instantaneous optimum agree with those of the global optimum, but that is not always the case. This indicates some limitations of the reasoning developed in Section \ref{subsec: instantaneous_opt}, at least in this simplified setting.

\begin{figure}[ht!]
    \centering
    \includegraphics[width=0.96\linewidth]{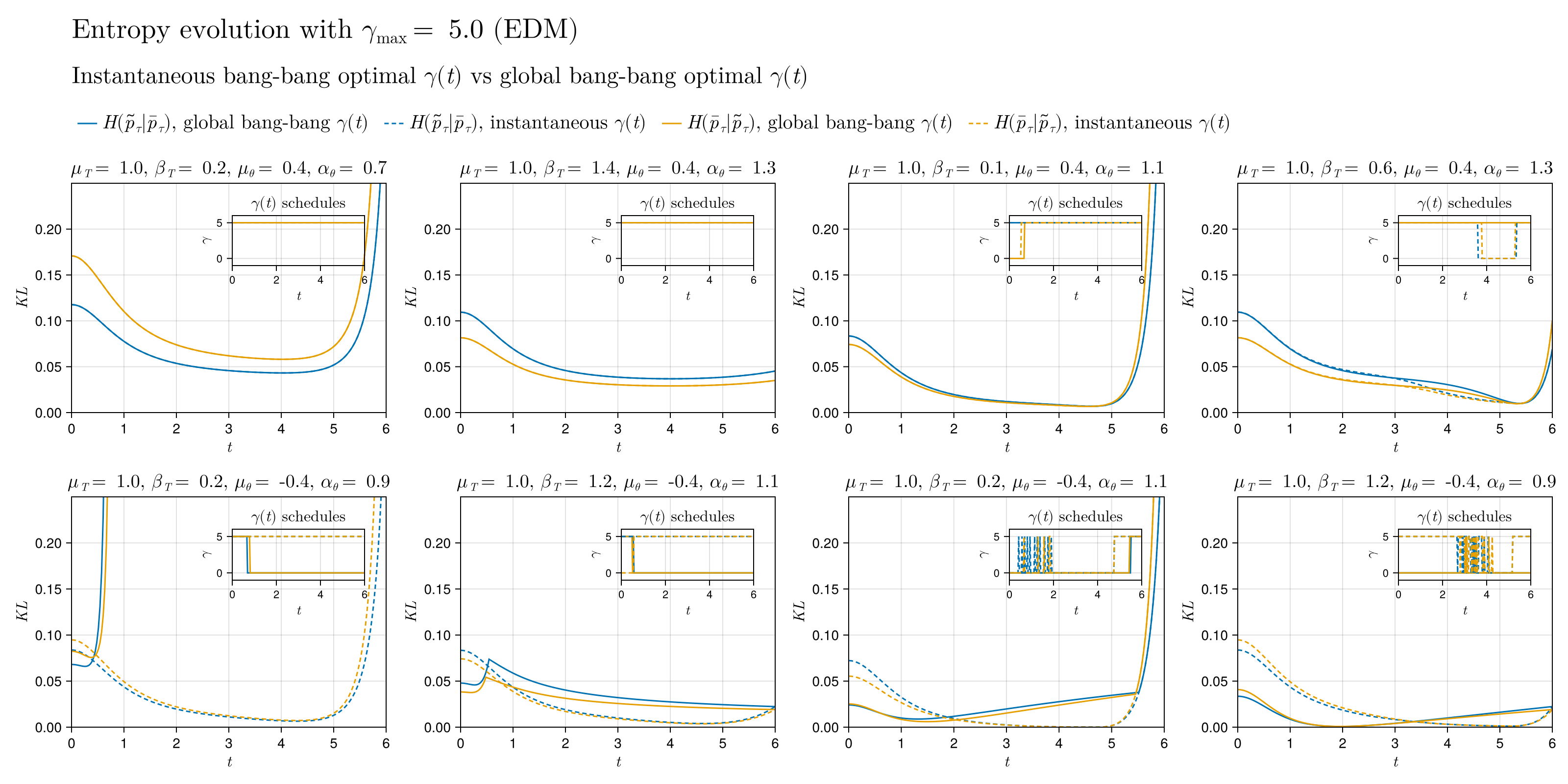}
    \caption{Evolution of KL divergences for both the instantaneous and the global optimal $\gamma(t)$, with the same choices of parameters as in Figure \ref{fig:analyticKLongammaregimes}. In the insets, we show the $\gamma(t)$ profiles for the optima.}
    \label{fig:instant_global_optima}
\end{figure}

\subsection{Multidimensional case}\label{subsec:AEmulti}

We now extend this example to the multivariate case, with the ``data'' distribution being a Gaussian $X_0 \sim \mathcal{N}(\mu_0, \Sigma_0)$ in $\mathbb{R}^n$. To simulate a $d$-dimensional ``data'' manifold, we let the covariance be block-diagonal:
\begin{equation}
    \Sigma_0 = \begin{pmatrix} {\sigma_0'}^2 I_d & 0 \\ 0 & {\sigma_0''}^2 I_{n-d} \end{pmatrix},
\end{equation}
with the variances satisfying ${\sigma_0''}^2 \ll {\sigma_0'}^2$. The first $d$ components represent the data manifold, embedded on a high-dimensional ambient space with dimension $n \gg d.$

The forward equation is essentially the same, $dX_t = g(t)\,dW_t,$ so the forward process remains Gaussian with covariance $\Sigma(t) = \Sigma_0 + \sigma_{\text{diff}}(t)^2 I_n$, where $\sigma_{\text{diff}}(t)^2$ is as given in \eqref{scorefullyanalytic}.

The exact score function is linear and splits into two independent blocks:
\[ \nabla \log p(x, t) = -\Sigma(t)^{-1}(x - \mu_0) = \begin{pmatrix} -\frac{x' - \mu'_0}{{\sigma'}(t)^2} \\ -\frac{x'' - \mu''_0}{{\sigma''}(t)^2} \end{pmatrix}, \]
where ${\sigma'}(t)^2 = {\sigma_0'}^2 + \sigma_{\text{diff}}(t)^2$ and ${\sigma''}(t)^2 = {\sigma_0''}^2 + \sigma_{\text{diff}}(t)^2$.

We consider an approximate score $s_\theta(x, t)$ modeled as a linear perturbation of the exact score. We introduce a diagonal error matrix $\Lambda_\theta = \text{diag}(\alpha' I_d, \alpha'' I_{n-d})$ and approximate parameters $\mu_\theta$, yielding
\begin{equation}
    s_\theta(x, t) = - \Lambda_\theta^{-1} \Sigma(t)^{-1} (x - \mu_\theta)  = \begin{pmatrix} -\frac{x' - \mu'_\theta}{\alpha' {\sigma'}(t)^2} \\ -\frac{x'' - \mu''_\theta}{\alpha'' {\sigma''}(t)^2} \end{pmatrix}.
\end{equation}
This formulation allows for different score accuracies on the manifold versus the ambient space, depending on $\mu_\theta', \alpha'$ and $\mu_\theta'', \alpha''$. This modeling allows for simulating and investigating the situation in which learned scores first align to the manifold and later learn finer details of the distribution, as suggested in a recent work \citep{shen2026manifold}.

The form of the perturbed score yields a linear reverse SDE. The prior is taken without knowledge of the manifold and is an isotropic Gaussian $\tilde{X}_0 \sim \mathcal{N}(\mu_T, \sigma_T^2 I_n)$, and we write $\sigma_T^2 = \beta'_T\sigma'(T)^2,$ for a parameter $\beta'_T$.

Since the matrices $\Sigma_0$ and $\Lambda_\theta$ commute (they are both diagonal), the system decouples into two independent blocks: the manifold block $\tilde{X}'$ and the ambient block $\tilde{X}''$. The corresponding variances evolve according to the scalar formulas derived for the 1D case, shown in equation \eqref{varianceanalytic}. For the manifold block (assuming $1+\gamma \neq \alpha'$), we have
\begin{equation}
    \tilde{\sigma}'_\theta(\tau)^2 = \bar{\sigma}'(\tau)^2 \left( \frac{\gamma\alpha'}{(1 + \gamma) - \alpha'} + \left(\beta'_T - \frac{\gamma\alpha'}{(1 + \gamma) - \alpha'}\right)\left(\frac{\bar{\sigma}'(\tau)^2}{\bar{\sigma}'(0)^2}\right)^{\frac{(1 + \gamma)}{\alpha'} - 1} \right).
\end{equation}
An analogous formula holds for the ambient variance $\tilde{\sigma}''_\theta(\tau)^2$ using parameter $\alpha''$ and the dependent parameter $\beta''_T:=\beta'_T\sigma'(T)^2/\sigma''(T)^2$.

Since the marginals are independent, the total KL divergence is the sum of the divergences of the two orthogonal subspaces, which are then multiples of the scalar components:
\begin{equation}
    H(\tilde{p}_\tau|\bar{p}_\tau) = d h'(\tau) + (n-d) h''(\tau),
\end{equation}
where $h'(\tau)$ and $h''(\tau)$ are the corresponding scalar Gaussian KL divergence contribution, given by \eqref{KLQPdivergencescalaranalyticexample}, with the appropriate parameters.

\begin{figure}
    \centering
    \includegraphics[width=\linewidth]{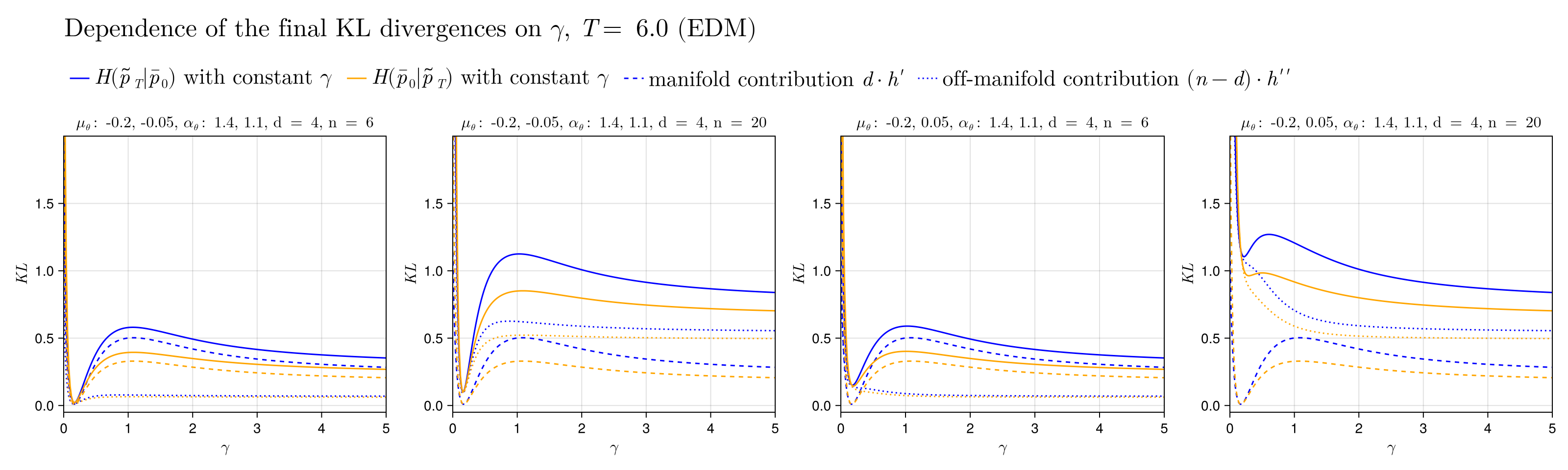}
    \caption{Final KL divergences $H(\tilde p_T | p_0)$ and $H(p_0 | \tilde p_T)$ as functions of $\gamma$ (solid lines) for the multidimensional case; for $\mu_T=1.0, \beta'_T=0.1$ and some choices of the score parameters $\mu_\theta,$ $\alpha_\theta,$ with data parameters $\mu_0=0,$ $\sigma_0=1.0,$ and final time $T=6.0.$}
    \label{fig:multidim}
\end{figure}

Since $n-d \gg d$ in typical generative modeling scenarios, the global KL divergence is heavily weighted on the ambient term, but that can be counteracted by a relatively smaller error $h''(\tau)$, on the direction normal to the manifold. Nevertheless, this reveals a fundamental trade-off: tune $\gamma$ to $\alpha'$ to improve the distribution within the manifold, as characterized here by $\sigma_0'$, or tune $\gamma$ to $\alpha''$ to improve the location of the manifold, as characterized here by $\sigma_0''$. Figure \ref{fig:multidim} exemplifies this trade-off: in the first two plots, the two optimal constant-$\gamma$ values are close, whereas in the last two plots they are not.

As for the optimal time-dependent stochasticity parameter $\gamma=\gamma(t)$, the control problem has the same essential structure, with the same bang-bang solution. However, since the switching function is now a sum of two terms, one for each block, with different exponential rates, it can change sign at most twice, yielding an optimal schedule with at most two switches, e.g., deterministic, then stochastic, then deterministic again, reflecting the competing requirements of the two groups. If we extend the analysis allowing the approximate score to have $d$ different score error factors $\alpha_i$, the same structure persists, with at most $d$ switches, as detailed in Appendix \ref{app: optimal control multi}.

\section{Conclusions}

In this work, we investigated the effect of stochasticity on the generated distribution through the lens of KL divergence evolution along sampling. At each instant, the time derivative of the divergences in stochastic sampling (Proposition \ref{prop 2}) reveals a fundamental trade-off between the correction of accumulated errors and the amplification of current score errors. From this general property, more specific conclusions are derived. When sampling with exact scores, only the error correction part is present, and log-Sobolev inequalities can be used to bound the divergence decay from above (Theorem \ref{teo 2}). The same technique produces bounds in the case of approximate scores (Theorem \ref{teo 4}), but the additional term associated with the score error prevents the derivation of an explicit divergence decay.

Minimizing the time derivative of the divergences at each instant with respect to $\gamma$ might not correspond to a minimum for the final KL divergence but provides a complementary perspective to the bounds. In particular, we show that stochasticity reduces the divergence at a given instant when the norm of the score error at that instant is smaller than a threshold depending on the magnitude of the accumulated errors (Section \ref{subsec: instantaneous_opt}). We further investigate this point in numerical experiments, and evaluate correlations between the time-profile of the model's score error and the impact of stochasticity on its performance (Section \ref{subsec: time_profile_experiments}). Moreover, grid searches over intervals that incorporate stochasticity, with models trained on different data sets, indicate that a family of intervals near the end of sampling produces the best results (Section \ref{subsec: variable_gamma}). This is consistent with the analysis in Section \ref{subsec: instantaneous_opt}, and we conjecture that it might be a more general property of diffusion models, although the exact optimal interval could depend on the data set and the architecture considered.


%

\section*{Acknowledgment}

The authors acknowledge the support of ExxonMobil Exploração Brasil Ltda. and Agência Nacional de Petróleo, Gás Natural e Biocombustíveis (ANP) through the grant no. 23789-1.

\bibliographystyle{abbrv}
\bibliography{references}

\appendix

\section{Additional proofs}

We include, in this appendix, the details of a number of results mentioned in the main part of the paper.

\subsection{Section 2}\label{app: proofs sec2}

We start with the proofs of the results mentioned in Section 2, which are stated in slightly different forms than what is found in the literature.

\begin{proof}[of Lemma \ref{lemma rev time}]
    We will show that $\bar{p}_\tau$ and $\tilde{p}^\gamma_\tau$ solve the same Fokker-Planck equation, which admits a unique solution for each initial condition. The FP equation of the forward SDE is given by
    \begin{align}
        \frac{\partial p}{\partial t} &= -\nabla\cdot(fp) +\frac{1}{2}g^2\Delta p \nonumber
        \\ & = -\nabla\cdot(fp) +\frac{1}{2}g^2((1+\gamma)\Delta p -\gamma\Delta p) \nonumber
        \\ & = -\nabla\cdot(fp) +\frac{1}{2}g^2((1+\gamma)(\nabla\cdot\nabla p) -\gamma\Delta p) \nonumber
        \\ & = -\nabla\cdot(fp) +\frac{1}{2}g^2((1+\gamma)(\nabla\cdot(p\nabla\log p) -\gamma\Delta p) \nonumber \qquad\qquad\text{since $\nabla\log p = \frac{1}{p}\nabla p$}
        \\ & = \nabla\cdot\left(\left(-f+\frac{1}{2}g^2(1+\gamma)\nabla\log p\right)p\right)  -\frac{1}{2}g^2\gamma\Delta p,\label{rev fp}
    \end{align}
    for an arbitrary function $\gamma=\gamma(t)$. Under a change of variables $\tau=T-t$, this is equivalent to
    \begin{equation}
        \frac{\partial p}{\partial \tau}= -\nabla\cdot\left(\left(-f+\frac{1}{2}g^2(1+\gamma)\nabla\log p\right)p\right)  +\frac{1}{2}g^2\gamma\Delta p,
    \end{equation}
    which is the FP equation for the SDE
    \begin{equation}
        dY_\tau = \left(-\bar{f}(Y_\tau, \tau) + \frac{1}{2}\bar{g}^2(\tau)(1+\gamma(\tau))\nabla\log \bar{p}_\tau(Y_\tau)\right)\,d\tau + \sqrt{\gamma(\tau)} \bar{g}(\tau) \,dW_\tau
    \end{equation}
    in reverse-time $\tau$.
    Since $\bar{p}$ solves the forward FP equation \eqref{fp}, the computation above shows that it also solves this reverse-time FP equation; and $\tilde{p}^\gamma$ solves it by definition. By the hypothesis on $\nabla\log\bar{p}$ and the Lipschitz continuity of $f$, the drift of \eqref{rev sde 2} is Lipschitz in the space variable, so this FP equation admits a unique solution for each initial condition \citep{uniqueness_figalli2008}. Therefore, from $\tilde{p}^\gamma_0 = \bar{p}_0$ we conclude that $\tilde{p}^\gamma_\tau = \bar{p}_\tau$ for all $\tau\in[0,T)$.
\end{proof}

\bigskip

\begin{proof}[of Lemma \ref{lemma LSI}]
    Since $\tilde{p}\ll p$, we can write $\tilde{p}=f^2\;p$. Then, we have
\begin{align*}
    \text{Ent}_p(f^2) = \text{Ent}_p\left(\frac{\tilde{p}}{p}\right) & = \int \frac{\tilde{p}}{p}\log \frac{\tilde{p}}{p}\;p\,dx - \left(\int\frac{\tilde{p}}{p}\;p\,dx\right)\log \left(\int\frac{\tilde{p}}{p}\;p\,dx\right)
    \\ & = \int \log\frac{\tilde{p}}{p}\tilde{p} - 1\log 1= H(\tilde{p}|p)
\end{align*}
and
\begin{align*}
    \int \left\|\nabla\log\frac{\tilde{p}}{p}\right\|^2\tilde{p}\,dx = \int \left\|\nabla\log f^2\right\|^2f^2\;p\,dx & = \int \left\|\frac{1}{f^2}2f \nabla f\right\|^2f^2\;p\,dx 
    \\ & = 4\int \left\|\nabla f\right\|^2\;p\,dx.
\end{align*}
Therefore, if the LSI holds for $p$, the second inequality also holds for all such $\tilde{p}$.
\end{proof}

\subsection{Section 3} \label{app: proofs sec3}

The proofs of Propositions \ref{prop 1} and \ref{prop 2} use the following lemma, whose formula appears in
\cite[Lemma C.1]{Chen2022Improved}. We restate it at the level of solutions
of the Fokker-Planck equations, with explicit conditions under which the
formula holds.

\begin{lemma}[Derivative of the KL divergence]\label{lemma H_deriv}
    Let $b \in C((0,T))$ and let $a_1, a_2 : \mathbb{R}^n \times (0,T) \to \mathbb{R}^n$ be continuous and $C^1$ in $x$. Let $p, q \in C^{2,1}(\mathbb{R}^n \times (0,T))$ be positive probability densities solving
    \[
     \partial_t p = -\nabla\cdot(a_1\, p) + \tfrac12 b^2 \Delta p, \qquad \partial_t q = -\nabla\cdot(a_2\, q) + \tfrac12 b^2 \Delta q .
    \]
    If the densities $p_t, q_t$ are bounded above and below by Gaussians, locally uniformly in $t \in (0,T)$, and $a_1$, $a_2$, $\nabla \log p_t$, $\nabla \log q_t$ have at most polynomial growth in $x$, then, for every $t \in (0,T)$, the evolution of the KL divergence $H(p_t|q_t)$ is given by
    \begin{equation}
        \frac{d}{dt}H(p_t|q_t)
        = -\frac{1}{2}b^2(t)
          \int\left|\nabla\log\frac{p_t}{q_t}\right|^2 p_t\,dx
        + \int(a_1 - a_2)\cdot
          \left(\nabla\log\frac{p_t}{q_t}\right)p_t\,dx.
    \end{equation}
\end{lemma}

\bigskip

\begin{proof}
    Fix $[t_1, t_2] \subset (0,T)$. By the hypotheses, there are constants $C_0, c_0, \tilde C_0, \tilde c_0, C_1 > 0$ and $m \geq 1$ such that, on $\mathbb{R}^n \times [t_1,t_2]$,
    \begin{equation}\label{lemma2 standing bounds}
        c_0\, e^{-\tilde C_0|x|^2} \leq p,\, q \leq C_0\, e^{-\tilde c_0|x|^2}, \qquad |a_1| + |a_2| + |\nabla\log p| + |\nabla\log q| \leq C_1 (1+|x|)^m.
    \end{equation}
    In particular, $|\log(p/q)| \leq C(1+|x|^2)$, and, writing the fluxes $J_p := a_1 p - \frac12 b^2 \nabla p$ and $J_q := a_2 q - \frac12 b^2 \nabla q$, so that the equations read $\partial_t p = -\nabla\cdot J_p$ and $\partial_t q = -\nabla\cdot J_q$, we have, using $\nabla p = p\,\nabla\log p$ and $\nabla q = q\,\nabla\log q$,
    \begin{equation}\label{lemma2 flux bounds}
        |J_p| + \frac{p}{q}\,|J_q| \leq C (1+|x|)^m e^{-c_0|x|^2}.
    \end{equation}
    Hence all the integrands appearing below are bounded, uniformly in $t \in [t_1,t_2]$, by the integrable function $\Phi(x) := C(1+|x|)^{2m+2} e^{-c_0|x|^2}$.
 
    Let $\chi \in C_c^\infty(\mathbb{R}^n)$ with $0 \leq \chi \leq 1$, $\chi \equiv 1$ on $B_1$, $\supp\chi \subset B_2$ and $|\nabla\chi| \leq 2$, where $B_R:=\{x\in\mathbb{R}^n; |x|< R\}$ is the ball of radius $R$ centered in the origin. Set $\chi_R := \chi(\cdot/R)$, and consider the localized KL divergence
    \[
        H_R(t) := \int \chi_R\, p \log\frac{p}{q}\,dx.
    \]
    Since $p, q \in C^{2,1}$ are positive, the integrand is continuously differentiable in $t$ and supported in $B_{2R}$, so we may differentiate under the integral sign:
    \[
        H_R'(t) = \int \chi_R \left[\partial_t p \left(\log\frac{p}{q} + 1\right) - \frac{p}{q}\,\partial_t q\right] dx.
    \]
    Substituting the equations and integrating by parts, there are no boundary terms since $\chi_R$ is compactly supported. Then, using $\nabla(p/q) = (p/q)\nabla\log(p/q)$ together with
    \[
        J_p - \frac{p}{q}J_q = (a_1 - a_2)p - \frac12 b^2\left(\nabla p - p\,\nabla\log q\right) = (a_1 - a_2)p - \frac12 b^2\, p\,\nabla\log\frac{p}{q},
    \]
    we obtain
    \begin{equation}\label{lemma2 HR prime}
        H_R'(t) = \int \chi_R\, (a_1 - a_2)\cdot\nabla\log\frac{p}{q}\;p\,dx - \frac{b^2}{2}\int \chi_R \left|\nabla\log\frac{p}{q}\right|^2 p\,dx + E_R(t),
    \end{equation}
    with the cutoff error
    \[
        E_R(t) := \int \nabla\chi_R \cdot \left[\left(\log\frac{p}{q}+1\right)J_p - \frac{p}{q}\,J_q\right] dx.
    \]
    Since $|\nabla\chi_R| \leq 2/R \leq 2$ is supported in $\{R \leq |x| \leq 2R\}$ and the bracket is bounded by $\Phi$ by \eqref{lemma2 standing bounds}--\eqref{lemma2 flux bounds}, we have $|E_R(t)| \leq 2\int_{|x|\geq R}\Phi\,dx \to 0$ uniformly in $t$. Integrating \eqref{lemma2 HR prime} over $[t_1, t_2]$ and letting $R \to \infty$, with dominated convergence in $x$ (as $\chi_R \uparrow 1$) and in $t$ (all terms bounded by $\int\Phi$), and using $H_R(t_i) \to H(t_i)$ (the KL divergence integrand is dominated by $C(1+|x|^2)e^{-c_0|x|^2}$), we obtain
    \[
        H(p_{t_2}|q_{t_2}) - H(p_{t_1}|q_{t_1}) = \int_{t_1}^{t_2}\left(-\frac{b^2}{2}\int\left|\nabla\log\frac{p}{q}\right|^2 p\,dx + \int (a_1-a_2)\cdot\nabla\log\frac{p}{q}\;p\,dx\right) dt.
    \]
    Finally, the time integrand is continuous on $(0,T)$ (by dominated convergence, using the continuity of $p, \nabla p, q, \nabla q$ and of $a_1, a_2, b$ in $t$, with the domination by $\Phi$ on compact subintervals), so the fundamental theorem of calculus yields the derivative formula at every $t \in (0,T)$.
\end{proof}

The verification of the hypotheses of Propositions \ref{prop 1} and \ref{prop 2} for the density of the (approximate) reverse-time SDE is discussed in Remark \ref{remark: hypotheses scope} and deferred to future work. In the remainder of this section, we collect the verifications used for $\bar p$ in the linear-drift case.

\bigskip

\begin{proof}[of the two-sided Gaussian bounds for $\bar p$ in Theorem \ref{teo 2}]
    Since $p_t$ can be written as the Gaussian convolution $p_t=p_0^t\ast \mathcal{N}(0, s(t)^2\sigma(t)^2I)$, with $p_0^t(x):=s(t)^{-n}p_0(x/s(t))$, we have
    \begin{align*}
        p_t(x) & = C(t)\int p_0^t(y) e^{-c(t)|x-y|^2} dy
        \\ &  \geq C(t)\int p_0^t(y) e^{-2c(t)(|x|^2+|y|^2)} dy 
        \\ & = C(t)e^{-2c(t)|x|^2}\int p_0^t(y)e^{-2c(t)|y|^2} dy = C_2(t)e^{-2c(t)|x|^2}
    \end{align*}
    for $c(t), C(t)$ constants in $x$ and $C_2(t)=C(t)\int p_0^t(y)e^{-2c(t)|y|^2} dy<\infty$ since $p_0^t(x)$ is a probability density and $e^{-2c(t)|y|^2}$ is bounded. For the upper bound, since $p_0$ has sub-Gaussian tails, we have that
    \begin{align*}
        p_t(x) & = C(t)\int p_0^t(y) e^{-c(t)|x-y|^2} dy 
        \\ & \le C(t)\int C_3(t)e^{-c_3(t)|y|^2} e^{-c(t)|x-y|^2} dy = C_4(t)\int e^{-c_3(t)|y|^2-c(t)|x-y|^2} dy.
    \end{align*}
    For any $0<\epsilon<1$,
    \begin{align*}
        c_3|y|^2 +c|x-y|^2 & \geq \epsilon c (|x|^2 - 2\langle x, y\rangle + |y|^2) +c_3|y|^2
        \\ & \geq \epsilon c\left(|x|^2 - \frac{1}{2}|x|^2 -2|y|^2 + |y|^2\right) +c_3|y|^2
        \\ & = \epsilon c\left(\frac{1}{2}|x|^2 -|y|^2\right) +c_3|y|^2 = \epsilon c\frac{1}{2}|x|^2 +(c_3-\epsilon c)|y|^2.
    \end{align*}
    Taking $\epsilon<\min\left\{1, c_3/c\right\}$, we have $c_6:=c_3-\epsilon c>0$, and thus
    \begin{align*}
        p_t(x) \leq C_4 \int e^{-c_5|x|^2 -c_6 |y|^2}dy
        = C e^{-c|x|^2}
    \end{align*}
    for some constants $C, c>0$.
\end{proof}

\begin{proof}[of Corollary \ref{coro 1}]
    To show that $\nabla\log p_t$ is Lipschitz in this case, denote the Gaussian pdf by $$g_t(x):=\frac{1}{(2\pi t^2)^{\frac{n}{2}}}e^{-\frac{|x|^2}{2t^2}},$$ where $x\in\mathbb{R}^n$, and note that
    \begin{equation*}
        \nabla\log p_t(x) = \frac{\nabla (p_0 \ast g_t)(x)}{p_t(x)} = \frac{p_0 \ast\nabla g_t(x)}{p_t(x)} = \frac{\int p_0(y)\frac{y-x}{t^2}g_t(x-y)\,dy}{p_t(x)} = \frac{1}{t^2}\left(-x + \frac{m_t(x)}{p_t(x)}\right),
    \end{equation*}
    where $m_t(x):= \int p_0(y)yg_t(x-y)\,dy$. 

    To see that $m_t(x)/p_t(x)$ is Lipschitz, note that $\nabla\left(m_t(x)/p_t(x)\right)$ is bounded, since
    \begin{align*}
        \nabla\left(\frac{m_t(x)}{p_t(x)}\right) & = \frac{\nabla m_t(x)}{p_t(x)} - m_t(x)\frac{\nabla p_t(x)}{p_t(x)^2}
        \\ & = \frac{\frac{1}{t^2}\int p_0(y)y(y-x)g_t(x-y)\,dy}{p_t(x)} - m_t(x)\frac{\frac{1}{t^2}\int p_0(y)(y-x)g_t(x-y)\,dy}{p_t(x)^2}
        \\ & = \frac{1}{t^2}\left(\frac{\int p_0(y)y^2g_t(x-y)\,dy}{p_t(x)} -\frac{x m_t(x)}{p_t(x)} -\frac{m_t(x)^2}{p_t(x)^2} +m_t(x)\frac{x}{p_t(x)}\right)
        \\ & = \frac{1}{t^2}\left(\frac{\int p_0(y)y^2g_t(x-y)\,dy}{p_t(x)} -\frac{m_t(x)^2}{p_t(x)^2}\right)
        \\ & \leq \frac{1}{t^2}\left(\frac{\int p_0(y)R^2g_t(x-y)\,dy}{p_t(x)} +\left(\frac{\int p_0(y)Rg_t(x-y)\,dy}{p_t(x)}\right)^2\right) = \frac{2R^2}{t^2},
    \end{align*}
    using that $\supp{p_0}\subset B_R(0)$.
\end{proof}

\subsection{Section 4} \label{app: proofs sec4}

As discussed in the last section (proof of Theorem \ref{teo 2}), $p_t$ can be written as the Gaussian convolution $p_t=p_0^t\ast \mathcal{N}(0, s(t)^2\sigma(t)^2I)$, where $p_0^t(x):=s(t)^{-n}p_0(x/s(t))$. If $p_0$ is the mixture of Gaussians $p_0=\sum_{l=1}^n w_l\mathcal{N}(\mu_l, \sigma_l^2I)$, we have $p_0^t=\sum_{l=1}^n w_l\mathcal{N}(s(t)\mu_l, s(t)^2\sigma_l^2I)$ and thus
\begin{align*}
    p_t & = p_0^t\ast \mathcal{N}(0, s(t)^2\sigma(t)^2I)
    \\ & = \sum_{l=1}^n w_l\mathcal{N}(s(t)\mu_l, s(t)^2\sigma_l^2I) \ast \mathcal{N}(0, s(t)^2\sigma(t)^2I)
    \\ & = \sum_{l=1}^n w_l\mathcal{N}(s(t)\mu_l, s(t)^2(\sigma_l^2+ \sigma(t)^2)I),
\end{align*}
is also a mixture of spherical Gaussians. Therefore, the following lemma ensures that $\nabla\log p_t$ is Lipschitz for all $t\in[0,T]$.

\begin{lemma}
    The score function of a mixture of finitely many spherical Gaussians is globally Lipschitz.
\end{lemma}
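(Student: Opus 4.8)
The plan is to reduce the statement to a uniform bound on the Hessian $\nabla^2\log p$, since a $C^1$ map with uniformly bounded derivative is globally Lipschitz, and here $p=\sum_{l=1}^n w_l\phi_l$ (with $\phi_l$ the density of $\mathcal{N}(\mu_l,\sigma_l^2 I)$, $w_l>0$, $\sum_l w_l=1$) is smooth and strictly positive, so $\log p$ is smooth. First I would write the score as a posterior-weighted average of the component scores: setting $\pi_l(x):=w_l\phi_l(x)/p(x)$, so that $\pi_l\geq 0$ and $\sum_l\pi_l=1$, and $s_l(x):=\nabla\log\phi_l(x)=-(x-\mu_l)/\sigma_l^2$, a direct computation gives $\nabla\log p=\sum_l\pi_l s_l$. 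The component scores $s_l$ are individually linear (hence Lipschitz), so the only difficulty is the $x$-dependence of the weights $\pi_l$.

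Next I would establish the mixture Hessian identity
\[
    \nabla^2\log p \;=\; \sum_{l}\pi_l\,\nabla^2\log\phi_l \;+\; \mathrm{Cov}_{\pi}[s_Z],
\]
where $\mathrm{Cov}_\pi[s_Z]=\sum_l\pi_l s_l s_l^{\top}-\big(\sum_l\pi_l s_l\big)\big(\sum_l\pi_l s_l\big)^{\top}$ is the covariance of the component scores under the discrete law $\pi=(\pi_l)$. This follows by differentiating $\nabla\log p=\nabla p/p$ once more and using $\nabla^2\phi_l=\phi_l\big(\nabla^2\log\phi_l+s_l s_l^{\top}\big)$. The first term equals $-\big(\sum_l\pi_l\sigma_l^{-2}\big)I$, whose operator norm is at most $\max_l\sigma_l^{-2}$, so it is harmless, and everything reduces to bounding the covariance term.

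For the covariance I would use the pairwise identity $\mathrm{Cov}_\pi[s_Z]=\tfrac12\sum_{l,m}\pi_l\pi_m (s_l-s_m)(s_l-s_m)^{\top}$, which gives
\[
    \big\|\mathrm{Cov}_\pi[s_Z]\big\| \;\leq\; \tfrac12\sum_{l,m}\pi_l(x)\pi_m(x)\,|s_l(x)-s_m(x)|^2 .
\]
The key observation is that $s_l(x)-s_m(x)=(\sigma_m^{-2}-\sigma_l^{-2})x+(\mu_l\sigma_l^{-2}-\mu_m\sigma_m^{-2})$ is bounded precisely when $\sigma_l=\sigma_m$; for equal-variance pairs each summand is therefore a constant times $\pi_l\pi_m\leq\tfrac14$, hence uniformly bounded. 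The genuine obstacle is the unequal-variance pairs, where $|s_l-s_m|^2$ grows like $|x|^2$.

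I would resolve this — the heart of the argument — by showing that the offending responsibility product decays like a Gaussian, which beats the polynomial growth of $|s_l-s_m|^2$. For a pair with $\sigma_l<\sigma_m$ one has $\pi_l\pi_m\leq\pi_l\leq w_l\phi_l/(w_m\phi_m)$ (using $p\geq w_m\phi_m$), and
\[
    \frac{\phi_l(x)}{\phi_m(x)} \;=\; \Big(\tfrac{\sigma_m}{\sigma_l}\Big)^{n}\exp\!\Big(-\tfrac{|x-\mu_l|^2}{2\sigma_l^2}+\tfrac{|x-\mu_m|^2}{2\sigma_m^2}\Big),
\]
whose exponent is a downward parabola in $x$, since its quadratic coefficient $\tfrac1{2\sigma_m^2}-\tfrac1{2\sigma_l^2}$ is strictly negative. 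Hence $\pi_l(x)\pi_m(x)\leq C_{lm}\,e^{-c_{lm}|x|^2+b_{lm}\cdot x}$ with $c_{lm}>0$, so $\pi_l\pi_m|s_l-s_m|^2$ is a polynomial times a decaying Gaussian and is uniformly bounded in $x$. Combining the equal- and unequal-variance cases over the finitely many pairs bounds $\|\mathrm{Cov}_\pi[s_Z]\|$, hence $\|\nabla^2\log p\|$, uniformly in $x$, and the supremum of this operator norm furnishes a global Lipschitz constant for $\nabla\log p$.
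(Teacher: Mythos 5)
Your proposal is correct, and it reaches the same goal as the paper -- a uniform bound on $\nabla^2\log p$ -- through the same two mechanisms: algebraic cancellation for variance-matched pairs and Gaussian-beats-polynomial decay of the weight products for variance-mismatched pairs. The packaging, however, is genuinely different. The paper expands $\partial^2_{x_ix_j}\log p$ entrywise into a double sum $\sum_{l,k}A_{lk}$, bounds the denominator below by the square of the \emph{largest-variance} component, classifies pairs by whether $\sigma_l=\sigma_k=\sigma_*$ (both equal to the maximum), and for those pairs must symmetrize, using the identity $ab-ad+cd-cb=(a-c)(b-d)$ applied to $A_{lk}+A_{kl}$ to expose the cancellation of the quadratic terms. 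Your covariance decomposition $\nabla^2\log p=\sum_l\pi_l\nabla^2\log\phi_l+\mathrm{Cov}_\pi[s_Z]$, combined with the pairwise identity $\mathrm{Cov}_\pi[s_Z]=\tfrac12\sum_{l,m}\pi_l\pi_m(s_l-s_m)(s_l-s_m)^{\top}$, builds that cancellation into the structure: for $\sigma_l=\sigma_m$ the difference $s_l-s_m=(\mu_l-\mu_m)/\sigma_l^2$ is constant with no further manipulation, and your classification only requires the two variances in a pair to equal \emph{each other}, not the maximum. Your bound $\pi_l\pi_m\leq w_l\phi_l/(w_m\phi_m)$ for $\sigma_l<\sigma_m$ plays exactly the role of the paper's ratio $\mathcal{N}_l\mathcal{N}_k/\mathcal{N}_{l_m}^2$. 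What your route buys is a coordinate-free, operator-norm argument with the cancellation made structural rather than ad hoc, and an explicit Lipschitz constant of the form $\max_l\sigma_l^{-2}$ plus the covariance bound; what the paper's buys is self-contained entrywise elementarity, at the cost of heavier bookkeeping.
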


\begin{proof}
    Let $p$ be a mixture of $M$ spherical Gaussians in $\mathbb{R}^n$ given by
    \begin{equation}
        p(x)=\sum_{l=1}^M w_l\mathcal{N}(x; \mu_l, \sigma_l^2I),
    \end{equation}
    where $\mathcal{N}(x; \mu, \sigma^2I)$ is the Gaussian pdf with mean $\mu$ and covariance matrix $\sigma^2I$, and $\sum w_l=1$ with $w_l>0$. Then the score function is given by
    \begin{equation}
        \nabla\log p(x)= \frac{\nabla p(x)}{p(x)} = \frac{\sum_{l=1}^M w_l \left(\frac{\mu_l-x}{\sigma_l^2}\right)\mathcal{N}(x; \mu_l, \sigma_l^2I)}{\sum_{l=1}^M w_l\mathcal{N}(x; \mu_l, \sigma_l^2I)},
    \end{equation}
    and, writing $\mathcal{N}_l := \mathcal{N}(x; \mu_l,\sigma_l^2I)$, we have
    \begin{align*}
        \partial^2_{x_i x_j}(\log p(x)) = & \partial_{x_j}\left(\frac{\partial_{x_i} p(x)}{p(x)}\right)= \frac{p(x)\partial^2_{x_i x_j} p(x) - \partial_{x_i} p(x)\partial_{x_j}p(x)}{p(x)^2}
        \\ = &\frac{\left(\sum_{l=1}^M w_l \left(\frac{(\mu_l^{(i)}-x_i)(\mu_l^{(j)}-x_j)}{\sigma_l^4}-\frac{\delta_{ij}}{\sigma_l^2}\right)\mathcal{N}_l\right)\sum_{l=1}^M w_l\mathcal{N}_l}{\left(\sum_{l=1}^M w_l\mathcal{N}_l\right)^2}
        \\ & - \frac{\left(\sum_{l=1}^M w_l\left(\frac{\mu_l^{(i)}-x_i}{\sigma_l^2}\right)\mathcal{N}_l\right)\left(\sum_{l=1}^M w_l\left(\frac{\mu_l^{(j)}-x_j}{\sigma_l^2}\right)\mathcal{N}_l\right)}{\left(\sum_{l=1}^M w_l\mathcal{N}_l\right)^2}
        \\ = & \frac{\sum_{l,k} w_l w_k \mathcal{N}_l \mathcal{N}_k \left(\frac{(\mu_l^{(i)}-x_i)(\mu_l^{(j)}-x_j)}{\sigma_l^4}-\frac{\delta_{ij}}{\sigma_l^2}\right)}{\sum_{l,k} w_l w_k \mathcal{N}_l \mathcal{N}_k}
        \\ & - \frac{\sum_{l,k} w_l w_k \mathcal{N}_l \mathcal{N}_k\left( \frac{(\mu_l^{(i)}-x_i)(\mu_k^{(j)}-x_j)}{\sigma_l^2 \sigma_k^2}\right)}{\sum_{l,k} w_l w_k \mathcal{N}_l \mathcal{N}_k}.
    \end{align*}
    Thus
    \begin{align*}
        \partial^2_{x_i x_j}(\log p(x)) &= \sum_{l,k}\frac{ w_l w_k \mathcal{N}_l \mathcal{N}_k }{\sum_{l',k'} w_{l'} w_{k'} \mathcal{N}_{l'} \mathcal{N}_{k'}}\left(\frac{(\mu_l^{(i)}-x_i)(\mu_l^{(j)}-x_j)}{\sigma_l^4} - \frac{(\mu_l^{(i)}-x_i)(\mu_k^{(j)}-x_j)}{\sigma_l^2\sigma_k^2} - \frac{\delta_{ij}}{\sigma_l^2}\right) 
        \\ &= : \sum_{l,k} A_{lk} (x)
    \end{align*}
    To show that this function is bounded, consider the indices $l_m$ such that $\sigma_{l_m}=\sigma_*$ is the largest variance. For all $m$, we have
    \begin{equation}\label{A_lk}
        |A_{l, k}| \leq \frac{ w_l w_k \mathcal{N}_l \mathcal{N}_k }{w_{l_m}^2\mathcal{N}_{l_m}^2}\left|\frac{(\mu_l^{(i)}-x_i)(\mu_l^{(j)}-x_j)}{\sigma_l^4} - \frac{(\mu_l^{(i)}-x_i)(\mu_k^{(j)}-x_j)}{\sigma_l^2\sigma_k^2} - \frac{\delta_{ij}}{\sigma_l^2}\right|,
    \end{equation}
    which vanishes with $|x|\to\infty$ for all pair $(l,k)$ not satisfying $\sigma_l=\sigma_k= \sigma_*$.

    If $\sigma_l=\sigma_k= \sigma_*$, we consider two cases. If $l=k$, we have that
    \begin{align*}
          |A_{l l}| & =  \frac{w_l^2\mathcal{N}_l^2}{\sum_{l,k} w_l w_k \mathcal{N}_l \mathcal{N}_k}\left|\frac{(\mu_l^{(i)}-x_i)(\mu_l^{(j)}-x_j) - (\mu_l^{(i)}-x_i)(\mu_l^{(j)}-x_j)}{\sigma_*^4} - \frac{1}{\sigma_*^2}\right|
          \\ & = \frac{w_l^2\mathcal{N}_l^2}{\sum_{l,k} w_l w_k \mathcal{N}_l \mathcal{N}_k}\left(\frac{1}{\sigma_*^2}\right)\leq \frac{1}{\sigma_*^2}.
    \end{align*}
    
    If $l\neq k$, denoting $a=(\mu_l^{(i)}-x_i)$, $b=(\mu_l^{(j)}-x_j)$, $c=(\mu_k^{(i)}-x_i)$ and $d=(\mu_k^{(j)}-x_j)$, we have that
    \begin{align*}
          A_{lk}+A_{kl} & = \frac{ w_l w_k \mathcal{N}_l \mathcal{N}_k }{\sum_{l',k'} w_{l'} w_{k'} \mathcal{N}_{l'} \mathcal{N}_{k'}}\left(\frac{ab - ad + cd - cb}{\sigma_*^4} - \frac{2\delta_{ij}}{\sigma_*^2}\right).
    \end{align*}
    Using that $ab - ad + cd - cb = (a-c)(b-d)$, we have
    \begin{align*}
          |A_{lk}+A_{kl}| & = \frac{ w_l w_k \mathcal{N}_l \mathcal{N}_k }{\sum_{l',k'} w_{l'} w_{k'} \mathcal{N}_{l'} \mathcal{N}_{k'}}\left|\frac{(\mu_l^{(i)} - \mu_k^{(i)})(\mu_l^{(j)} - \mu_k^{(j)})}{\sigma_*^4} - \frac{2\delta_{ij}}{\sigma_*^2}\right|
          \\ & \leq \left|\frac{(\mu_l^{(i)} - \mu_k^{(i)})(\mu_l^{(j)} - \mu_k^{(j)})}{\sigma_*^4} - \frac{2\delta_{ij}}{\sigma_*^2}\right|,
    \end{align*}
    which is constant  in $x$. Therefore
    \begin{equation*}
        \left|\partial^2_{x_i x_j}(\log p(x))\right| = \left|\sum_{l,k} A_{lk} (x)\right| \leq \sum_{l=k} \left|A_{lk} (x)\right| + \sum_{l<k} \left|A_{lk} (x) + A_{kl} (x)\right|
    \end{equation*}
    is bounded for all $i,j$, and thus $\nabla\log p(x)$ is Lipschitz.
\end{proof}

\subsection{Section 5: constant \texorpdfstring{$\gamma$}{gamma}} \label{app: proofs sec5}

The sample quality and sample coverage KL divergences \eqref{KLQPdivergencescalaranalyticexample} and \eqref{KLPQdivergencescalaranalyticexample} can be written more succinctly as
\[
    h_q(\gamma) = H(\tilde p_T | p_0) = \frac{1}{2} \left( v(\gamma) - \ln v(\gamma) - 1  +  m(\gamma)^2 \right),
\] 
and
\[
    h_c(\gamma) = H(p_0 | \tilde p_T) = \frac{1}{2} \left(\frac{1}{v(\gamma)} + \ln v(\gamma) - 1 + \frac{m(\gamma)^2}{v(\gamma)}\right),
\]
where
\[
    v(\gamma) = \frac{\tilde\sigma_\theta(T)^2}{\bar\sigma(T)^2} = \begin{cases}
         \frac{\gamma\alpha_\theta}{1 + \gamma - \alpha_\theta} + \left(\beta_T - \frac{\gamma\alpha_\theta}{1 + \gamma - \alpha_\theta}\right)\zeta_T^{\frac{1 + \gamma}{\alpha_\theta} - 1}, & 1 + \gamma \neq \alpha_\theta, \\
        \beta_T  - \gamma\ln\zeta_T, & 1 + \gamma = \alpha_\theta,
    \end{cases}
\]
and
\[
    m(\gamma) := \frac{\mu_0 - \tilde\mu_\theta(T)}{\bar\sigma(T)} = \frac{\mu_0 - \mu_\theta - (\mu_T - \mu_\theta)\zeta_T^{\frac{1}{2}\frac{1+\gamma}{\alpha_\theta}}}{\sigma_0}
\]
with $\zeta_T = \bar\sigma(T)^2/\bar\sigma(0)^2 = \sigma_0^2 / ( \sigma_0^2 + \sigma_{\text{diff}}(T)^2 ) < 1.$

\subsubsection{Exact score}

When the score for the approximate reverse problem is exact, i.e. when $\alpha_\theta = 1$ and $\mu_\theta = \mu_0,$ we have
\[
    v(\gamma) = 1 + \left(\beta_T - 1\right)\zeta_T^\gamma, \qquad
    m(\gamma) = \frac{\mu_0 - \mu_T}{\sigma_0}\zeta_T^{\frac{1 + \gamma}{2}},
\]
for any $\gamma \geq 0.$ Since $\zeta_T < 1,$ they both converge monotonically to $1$ and $0$, respectively. Moreover, the ratio $m(\gamma)^2/v(\gamma)$ is either zero for all $\gamma\geq 0$, which happens when $\mu_T = \mu_0$, or decreases monotonically to $0,$ as $\gamma \rightarrow \infty$ (indeed, with $\mu_T \neq \mu_0$, just note that the reciprocal is of the form $v(\gamma)/m(\gamma)^2 = A\zeta^{-\gamma} + B$, with $A$ and $B$ independent of $\gamma$ and $A$ strictly positive). Thus, both KL divergences $h_q(\gamma)$ and $h_c(\gamma)$ are monotonically decreasing to $0$ as $\gamma \rightarrow \infty.$ This means, as expected, that, when the score is exact, stochastic sampling is better than reverse ODE sampling.

\subsubsection{Exact mean with aligned variances}

When $\alpha_\theta \neq 1$ but $\mu_T = \mu_\theta = \mu_0,$ the mean of the approximate reverse problem is exact, $\tilde\mu(\tau) = \mu_0,$ so that $m(\gamma) = 0,$ for all $\gamma > 0.$ In this case, $h_q(\gamma)$ and $h_c(\gamma)$ depend only on $v(\gamma),$ and their critical points occur when either $v'(\gamma) = 0$ or $v(\gamma) = 1.$ For the analysis of the critical points, we change the variable to $u = (1 - \alpha_\theta)/\alpha_\theta + \gamma/\alpha_\theta,$ which increases linearly with $\gamma = \gamma(u) = \alpha_\theta u + \alpha_\theta - 1$, and write
\begin{align*}
   \tilde v(u) = v(\gamma(u)) & = \frac{\gamma(u)}{u} + (\beta_T - \frac{\gamma(u)}{u}) \zeta_T^u \\ 
   & = \alpha_\theta  + \frac{\alpha_\theta - 1}{u} + (\beta_T - \alpha_\theta - \frac{\alpha_\theta - 1}{u}) \zeta_T^u \\
   & = \alpha_\theta + (\beta_T - \alpha_\theta)\zeta_T^u + (\alpha_\theta - 1)\frac{1 - \zeta_T^u}{u} \\
   & = \alpha_\theta + (\beta_T - \alpha_\theta)\zeta_T^u + (\alpha_\theta - 1)\psi(u),
\end{align*}
where
\[ \psi(u) = \frac{1 - \zeta_T^u}{u}.
\]
Notice that both $\zeta_T^u$ and $\psi(u)$ are positive, decreasing, and convex in $u$.

When $0 < \beta_T \leq \alpha_\theta < 1,$ the second term is non-positive and the third term is negative, so that $v(\gamma) = \tilde v(u(\gamma)) < \alpha_\theta < 1,$ for all $\gamma \geq 0,$ and increasing in $\gamma.$ Thus, both $h_q(\gamma)$ and $h_c(\gamma)$ are decreasing in $\gamma,$ and we have
\begin{equation}\label{kl_limits_analytical}
    \begin{aligned}
        h_q(\gamma)\to & \left( \alpha_\theta - \ln \alpha_\theta - 1\right)/2 > 0,
    \\ h_c(\gamma)\to & \left( \alpha_\theta^{-1} + \ln \alpha_\theta - 1\right)/2 > 0, 
    \end{aligned}
\end{equation}
as $\gamma \rightarrow \infty.$

If $\beta_T \geq \alpha_\theta > 1,$ the second term is nonnegative and the third term is positive, so that $v(\gamma) = \tilde v(u(\gamma)) > \alpha_\theta > 1,$ for all $\gamma \geq 0,$ and decreasing in $\gamma.$ Thus, both $h_q(\gamma)$ and $h_c(\gamma)$ are decreasing in $\gamma$ and have the same limits \eqref{kl_limits_analytical}.

\subsubsection{Exact variances}\label{app: analytical_exact_variances}

When $\beta_T = \alpha_\theta = 1$, we have $v(\gamma) = 1,$ for all $\gamma \geq 0,$ and thus $h_q(\gamma) = h_c(\gamma) = m(\gamma)^2/2.$ In this case, we have a trichotomy depending on the values of $A:=\mu_0-\mu_\theta$ and $B:=\mu_T-\mu_\theta$. If $AB\leq0$, corresponding to the orderings $\mu_T\leq\mu_\theta\leq\mu_0$ or $\mu_T\geq\mu_\theta\geq\mu_0$, then $m(\gamma)^2$ is decreasing, hence both $h_q(\gamma)$ and $h_c(\gamma)$ decrease monotonically to $(\mu_0 - \mu_\theta)^2/2\sigma_0^2$, in which case SDE sampling is always better than ODE sampling and improves with increased $\gamma$. If $0<A/B<\zeta_T^{1/2}$, then $m(\gamma)^2$ has a unique zero at a finite nonzero value of $\gamma,$ and thus perfect generation (zero values for both KL divergences) is achieved at this value of $\gamma.$ Stochastic sampling with the stochasticity parameter near this optimal value outperforms both ODE sampling and highly stochastic sampling, while some values of $\gamma$ away from the optimum may perform worse than ODE sampling. Finally, if $\zeta_T^{1/2}\leq A/B$, then $m(\gamma)^2$ is increasing, in which case ODE sampling is always better than SDE sampling.

\subsubsection{Aligned variances and means}

When we have $0 < \beta_T \leq \alpha_\theta < 1$ together with $\mu_T \leq \mu_\theta  \leq \mu_0$, then $v(\gamma)$ increases monotonically to $1$ while $\mu(\gamma)$ decreases monotonically to $0$ and then $m(\gamma)^2/v(\gamma)$ also decreases monotonically to zero. In this case, both KL divergences decrease monotonically as $\gamma \rightarrow \infty$, meaning that stochastic sampling is always better than reverse ODE sampling, improving with increased $\gamma.$ If we have different combinations of either $0 < \beta_T \leq \alpha_\theta < 1$ or $\beta_T \geq \alpha_\theta > 1,$ together with either $\mu_T \leq \mu_\theta  \leq \mu_0$ or $\mu_T \geq \mu_\theta \geq \mu_0,$ then we do not have control of the term $m(\gamma)^2/v(\gamma)$, and we can only guarantee that the KL divergence $h_q(\gamma)$ decreases monotonically as $\gamma \rightarrow \infty.$

\subsubsection{Other cases}

In other cases, however, the different terms have different and competing behaviors and the KL divergences may have local or global minima or maxima at finite nonzero values of $\gamma,$ as illustrated in Figure \ref{fig:analyticKLongammaregimes}.

\subsection{Section 5: optimal control}\label{app: optimal control}

Following the setup of Section \ref{sec: analytical example}, we now consider
a stochasticity parameter $\gamma(\tau) \in [0, \gmax]$. From equations \eqref{KLQPdivergencescalaranalyticexample} and \eqref{KLPQdivergencescalaranalyticexample}, the final KL divergences depend on $\gamma$ through the terminal mean $\tilde{\mu}_\theta(T)$ and variance $\sapprox(T)^2$. To formulate an optimal control problem with terminal cost, we consider the state space $(m,u)$, where $m(\tau):=\tilde{\mu}_\theta(\tau)$ and $u(\tau):=\sapprox(\tau)^2$.

\subsubsection{Reduction to clock time}

Since the approximate reverse SDE \eqref{AE:pertubedequation} is linear, the evolution of the mean and variance of its solution $\tilde{p}_\tau$ is given by the ODEs
\begin{align}
 \dot{m}(\tau) &= -\frac{\bar g(\tau)^2(1+\gamma(\tau))}{2\alpha_\theta\smean(\tau)^2}\left(m - \mu_\theta\right), &m(0) &= \mu_T,\\
  \dot u(\tau) &= -\frac{\bar g(\tau)^2(1+\gamma(\tau))}{\alpha_\theta\smean(\tau)^2}\,u(\tau) + \gamma(\tau)\,\bar g(\tau)^2, \qquad &u(0) &= \beta_T\,\smean(0)^2.
\end{align}

We consider the clock time
\[
s(\tau):=\smean(0)^2 -\smean(\tau)^2,
\]
and the variance (in the new variable) $\phi(s) := \smean(0)^2 - s$. Since $ds/d\tau = -d(\smean(\tau)^2)/d\tau = \bar g(\tau)^2$, in clock time the dynamics become
\begin{align}
  m'(s) &= -\frac{1+\gamma(s)}{2\alpha_\theta\,\phi(s)}\left(m - \mu_\theta\right), &m(0) &= \mu_T, \label{eq:delta_s} \\
  u'(s) &= -\frac{(1+\gamma(s))\,u(s)}{\alpha_\theta\,\phi(s)} + \gamma(s), \qquad &u(0) &= \beta_T\,\phi(0), \label{eq:var_s}   
\end{align}
making it \textit{independent of the specific choice of $g$}.

It is convenient to work with the normalized variance ratio
\begin{equation}\label{eq:vdef}
  v(s) := \frac{u(s)}{\phi(s)},
\end{equation}
which satisfies
\begin{equation}\label{eq:v_s}
  v'(s) = \frac{1}{\phi(s)}\left[\,v(s)\left(1 - \frac{1+\gamma(s)}{\alpha_\theta}\right) + \gamma(s)\,\right].
\end{equation}

\subsubsection{Optimal control formulation}

Let $\Phi(m(s_T), u(s_T))$ denote either of the terminal KL divergences. We formulate the problem of minimizing the terminal cost as a standard optimal control problem for $s\in[0, s_T]$, with states $m$ and $u$, and control $\gamma \in [0, \gmax]$.

The control Hamiltonian is given by
\begin{equation}\label{eq:ham}
  H(s, m, u, \gamma, p_m, p_u) = - p_m\frac{1+\gamma}{2\alpha_\theta\phi(s)}\left(m - \mu_\theta\right) + p_u\left(-\frac{(1+\gamma)u}{\alpha_\theta\phi(s)} + \gamma\right),
\end{equation}
with costates $p_m(s)$ and $p_u(s)$ satisfying
\begin{equation}\label{eq:costate}
  p_m'(s) = \frac{(1+\gamma(s))\,p_m(s)}{2\alpha_\theta\phi(s)}, \qquad p_u'(s) = \frac{(1+\gamma(s))\,p_u(s)}{\alpha_\theta\phi(s)},
\end{equation}
and terminal transversality conditions
\begin{equation}\label{eq:trans}
  p_m(s_T) = \partial_m \Phi(m(s_T), u(s_T)), \qquad p_u(s_T) = \partial_u \Phi(m(s_T), u(s_T)).
\end{equation}
Observe that, by \eqref{eq:delta_s} and \eqref{eq:costate}, we have that
\begin{equation}\label{eq:conserved_C}
  \frac{d}{ds}\left[p_m(s)\big(m(s) - \mu_\theta\big)\right] = p_m'(s)\big(m(s) - \mu_\theta\big) + p_m(s)m'(s) = 0.
\end{equation}
We therefore denote $C := p_m(s)\big(m(s) - \mu_\theta\big)$.

The Hamiltonian is linear in $\gamma$, so by Pontryagin's minimum principle the optimal control is bang-bang with switching function
\begin{equation}\label{eq:switch}
  \psi(s) := p_u(s)\left(1 - \frac{u(s)}{\alpha_\theta\phi(s)}\right) - \frac{p_m(s)\big(m(s) - \mu_\theta\big)}{2\alpha_\theta\phi(s)} = p_u(s)\left(1 - \frac{v(s)}{\alpha_\theta}\right) - \frac{C}{2\alpha_\theta\phi(s)},
\end{equation}
and optimal control
\[
  \gamma^*(s) = \begin{cases}
    0 & \text{if } \psi(s) > 0, \\
    \gmax & \text{if } \psi(s) < 0, \\
    \text{singular} & \text{if } \psi(s) = 0 \text{ on a nontrivial interval}.
  \end{cases}
\]

Observe that the costate ODE \eqref{eq:costate} is linear and homogeneous in $p_u$, so $p_u$ never vanishes unless it is identically zero. In particular, it never changes sign.

\subsubsection{At most one switch}

Interestingly, this property is not restricted to KL divergences as a terminal cost.

\begin{theorem}\label{thm:main}
  Let $\alpha_\theta \neq 1$. For any $\mathcal{C}^1$ terminal cost $\Phi(m(s_T), u(s_T))$ such that $$\partial_u\Phi(m(s_T), u(s_T))\neq0$$ at optimality, the optimal control $\gamma^* : [0, s_T] \to [0, \gmax]$ is bang-bang with at most one switch between $\gamma = 0$ and $\gamma = \gmax$. When a switch occurs at $s^* \in (0, s_T)$, it takes place at the variance threshold
  \[
    v(s^*) = \alpha_\theta - \frac{C}{2p_u(s^*)\phi(s^*)}.
  \]
\end{theorem}

\begin{proof}
    \textbf{1) Derivative of the switching function $\psi$}

    Differentiating \eqref{eq:switch} and using that $\phi'(s) = -1$, we have
  \[
    \psi'(s) = p_u'(s)\left(1 - \frac{v(s)}{\alpha_\theta}\right) - \frac{p_u(s)v'(s)}{\alpha_\theta} - \frac{C}{2\alpha_\theta\phi(s)^2}.
  \]
  Substituting $p_u'$ from \eqref{eq:costate} and $v'$ from \eqref{eq:v_s}, the terms involving $\gamma(s)$ cancel, yielding
  \begin{equation}\label{eq:psi_prime_general}
    \psi'(s) = \frac{p_u(s)(1 - v(s))}{\alpha_\theta\phi(s)} - \frac{C}{2\alpha_\theta\phi(s)^2}.
  \end{equation}

  \textbf{2) Derivative of $\psi$ at its zeros}
  
  When $\psi(s)=0$, from \eqref{eq:switch} we have
  \[
    v(s) = \alpha_\theta - \frac{C}{2p_u(s)\phi(s)},
  \]
  which, substituted in \eqref{eq:psi_prime_general}, yields
  \begin{equation}\label{eq:psi_prime_zero}
      \psi'(s) = \frac{p_u(s)}{\alpha_\theta\phi(s)}\left(1 - \alpha_\theta + \frac{C}{2p_u(s)\phi(s)}\right) - \frac{C}{2\alpha_\theta\phi(s)^2} = \frac{p_u(s)(1 - \alpha_\theta)}{\alpha_\theta\phi(s)}.
  \end{equation}
  Since $\phi(s) > 0$ and $\mathrm{sign}(p_u(s))$ is constant, $\psi'$ has the same sign at any zero of $\psi$.

  \textbf{3) At most one switch}

  If $\alpha_\theta\neq1$, $\psi'$ is either strictly positive or strictly negative at any zero of $\psi$. Therefore there is at most one zero, and $\gamma*$ has at most one switch.
\end{proof}

\begin{remark}
    The sign of $\psi'(s^*)$ determines the direction of the switch: if $\psi'(s^*) > 0$, $\gamma^*$ jumps from $\gmax$ to $0$ (stochastic then deterministic), and vice-versa. By equation \eqref{eq:psi_prime_zero}, it is determined by $\mathrm{sign}\big(p_u(s)(1-\alpha_\theta)\big)$. Since the sign of $p_u$ is constant, it can be obtained from the terminal conditions \eqref{eq:trans}.
\end{remark}

\subsubsection{Multidimensional case (at most \texorpdfstring{$N$}{N} switches)}\label{app: optimal control multi}

In a general $N$-dimensional model, the state space expands to $(\mathbf{m}, \mathbf{u}) = (m_1, \dots, m_N, u_1, \dots, u_N)$ with decoupled dynamics across dimensions, and the control Hamiltonian is given by
\begin{equation}
    H(s, \mathbf{m}, \mathbf{u}, \gamma, \mathbf{p_m}, \mathbf{p_u}) = \sum_{i=1}^N \Bigg[ - p_{m_i}\frac{1+\gamma}{2\alpha_{\theta,i}\phi(s)}\left(m_i - \mu_{\theta,i}\right) + p_{u_i}\left(-\frac{(1+\gamma)u_i}{\alpha_{\theta,i}\phi(s)} + \gamma\right) \Bigg].
\end{equation}
By the same costate dynamics as in the unidimensional case, the quantities $C_i = p_{m_i}(s)\big(m_i(s) - \mu_{\theta,i}\big)$ are conserved for each dimension $i \in \{1, \dots, N\}$. The switching function therefore becomes a sum of the unidimensional switching components:
\begin{equation}\label{eq:switch_multi_N}
  \psi(s) = \sum_{i=1}^N \underbrace{\left[ p_{u_i}(s)\left(1 - \frac{v_i(s)}{\alpha_{\theta, i}}\right) - \frac{C_i}{2\alpha_{\theta, i}\phi(s)} \right]}_{:= \psi_i(s)}.
\end{equation}

Then we have the following extension of the unidimensional one-switch result.

\begin{theorem}\label{thm:multi_N}
  Let $\alpha_{\theta,i}\neq1$ for $1\leq i\leq N$, and the terminal cost satisfy the condition in Theorem \ref{thm:main}. Then the optimal control $\gamma^* : [0, s_T] \to [0, \gmax]$ for the $N$-dimensional problem is bang-bang with at most $m\leq N$ switches, where $m$ is the number of distinct score scaling errors $\alpha_{\theta,1},\dots,\alpha_{\theta,N}$.
\end{theorem}

\begin{proof}
Differentiating $\psi_i(s)$ and substituting the state and costate ODEs, we obtain an analogous formula as \eqref{eq:psi_prime_general} for the derivative of each component
  \begin{equation}\label{eq:psi_prime__N}
      \psi_i'(s) = \frac{p_{u_i}(s)(1-v_i(s))}{\alpha_{\theta,i}\phi(s)} - \frac{C_i}{2\alpha_{\theta,i}\phi(s)^2}.
  \end{equation}
  Summing and subtracting $p_{u_i}(1- 1/\alpha_{\theta,i})$, this can be rewritten as
  \begin{equation}\label{eq:psi_i_identity_N}
      \psi_i'(s) = \frac{1}{\phi(s)}\psi_i(s) + \frac{p_{u_i}(s)(1-\alpha_{\theta,i})}{\alpha_{\theta,i}\phi(s)},
  \end{equation}
  and summing over $1\leq i\leq N$ yields
  \begin{equation}
      \psi'(s) = \frac{1}{\phi(s)}\psi(s) + \frac{1}{\phi(s)}\underbrace{\sum_{i=1}^N \frac{p_{u_i}(s)(1-\alpha_{\theta,i})}{\alpha_{\theta,i}}}_{:= \Gamma(s)}.
  \end{equation}
  Therefore, at any zero of $\psi$ we have
  \begin{equation}
      \psi'(s^*)\Big|_{\psi=0} = \frac{1}{\phi(s^*)}\Gamma(s^*).
  \end{equation}
  Since the linear costate ODEs for $p_{u_i}$ \eqref{eq:costate} have the closed form solution
  \[
  p_{u_i}(s) = p_{u_i}(s_T) \exp\left(-\int_s^{s_T} \frac{1+\gamma(s')}{\alpha_{\theta,i}\phi(s')} ds'\right),
  \]
  the function $\Gamma(s)$ is a linear combination of $m\leq N$ real exponentials with monotonic arguments, where $m$ is the number of distinct score scaling errors $\alpha_{\theta,1},\dots,\alpha_{\theta,N}$. By properties of exponential polynomials, $\Gamma(s)$ can change sign at most $m-1$ times, allowing at most $m$ sign changes (switches) for $\psi$.
\end{proof}

\section{Scalar counterexample to the LSI}\label{app: counterexample_lsi}

For a probability measure on $\mathbb{R}$, the LSI is equivalent to the following integrability characterization, given by \cite{BOBKOV1999}.

\begin{theorem}[Bobkov \& Götze]
    A measure $\mu$ in $\mathbb{R}$ with cumulative distribution function $F$, density $p$ (of its absolutely continuous part) and median $m$ satisfies a log-Sobolev inequality if and only if both
    \begin{align}
        D_0 &:= \sup_{x<m} \left(F(x)\log\frac{1}{F(x)}\right)\int_x^m\frac{1}{p(t)}dt
        \\ D_1 &:= \sup_{x>m} \left((1-F(x)) \log\frac{1}{1-F(x)}\right)\int_m^x\frac{1}{p(t)}dt
    \end{align}
    are finite.
\end{theorem}

\begin{example}[A density in $\mathbb{R}$ bounded by Gaussians which does not satisfy an LSI]\label{ex: counterexample}
    Let
    \begin{equation}
        p(x) := \begin{cases}
            c_1e^{-c_2 x^2}, & \exists n\in\mathbb{N}, \; 2n\leq |x| < 2n+1
            \\ c_3e^{-c_4 x^2}, & \exists n\in\mathbb{N}, \; 2n+1\leq |x| <2n+2
        \end{cases}
    \end{equation}
    where $c_2<c_4$ and $c_1>c_3$ are taken such that $p$ is a normalized probability density. A plot can be seen in Figure \ref{fig: counterexample}.
    \begin{figure}
        \centering
        \includegraphics[width=0.8\linewidth]{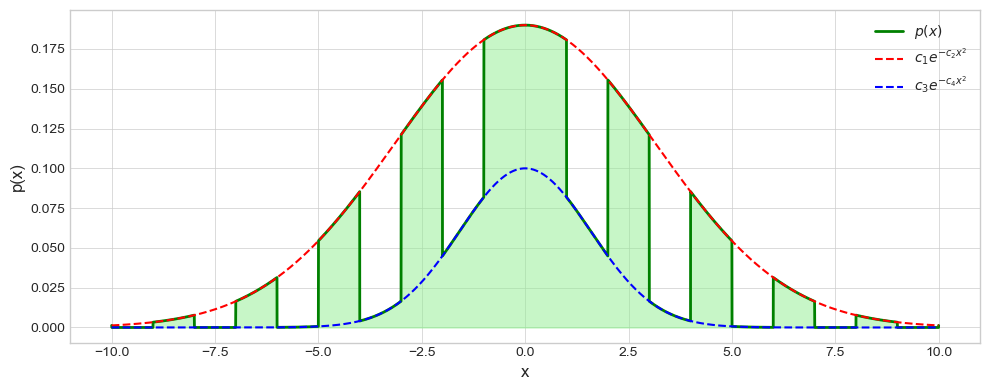}
        \caption{A plot of the density of Example \ref{ex: counterexample}, for specific choices of constants.}
        \label{fig: counterexample}
    \end{figure}

    For a sequence $t_n = 2n$, we have, since the integrand is decreasing,
    \begin{align*}
        \int_{t_n}^\infty p(s)ds &= \sum_{k=n}^{\infty} \left(\int_{2k}^{2k+1} c_1e^{-c_2 s^2}ds + \int_{2k+1}^{2k+2} c_3e^{-c_4 s^2}ds\right)
        \\ & \geq \sum_{k=n}^{\infty} \left(\int_{2k}^{2k+1} c_1e^{-c_2 s^2}ds\right) \geq \frac{1}{2}\int_{t_n}^\infty c_1e^{-c_2 s^2}ds,
    \end{align*}
    and also that
    \begin{equation*}
        \int_{t_n}^\infty p(s)ds \leq \int_{t_n}^\infty c_1e^{-c_2 s^2}ds.
    \end{equation*}
    Conversely, since the integrand is increasing,
    \begin{align*}
        \int_0^{t_n} \frac{1}{p(s)}ds &= \sum_{k=1}^{n} \left(\int_{2k-2}^{2k-1} c_1^{-1}e^{c_2 s^2}ds + \int_{2k-1}^{2k} c_3^{-1}e^{c_4 s^2}ds\right)
        \\ & \geq \frac{1}{2} \int_0^{t_n} c_3^{-1}e^{c_4 s^2}ds.
    \end{align*}
    Then, from the well known bounds for large $t$
    \begin{equation}
        \frac{t}{2ct^2+1} e^{-ct^2}\leq \int_t^\infty e^{-cs^2}ds\leq \frac{1}{2ct}e^{-ct^2}, \qquad \frac{e^{ct^2}}{2ct} \leq \int_0^te^{cs^2}ds \leq \frac{e^{ct^2}}{ct},
    \end{equation}
    we have, since $m=0$,
    \begin{align*}
        (1-F(t_n)) \log\frac{1}{1-F(t_n)}\int_m^{t_n}\frac{1}{p(s)}ds & = \left(\int_{t_n}^\infty p(s)ds\right)\log\frac{1}{\int_{t_n}^\infty p(s)ds} \int_0^{t_n}\frac{1}{p(s)}ds
        \\ & \geq \frac{1}{4}\left(\int_{t_n}^\infty c_1e^{-c_2 s^2}ds\right)\log\frac{1}{\int_{t_n}^\infty c_1e^{-c_2 s^2}ds}\int_0^{t_n} c_3^{-1}e^{c_4 s^2}ds
        \\  & \geq \frac{1}{4}\left( \frac{c_1 t_n e^{-c_2 t_n^2}}{2c_2{t_n}^2+1} \right) \left(\frac{c_3^{-1} e^{c_4t_n^2}}{2c_4t_n}\right) \log\frac{2c_2t_n}{c_1 e^{-c_2t_n^2}} \to \infty,
    \end{align*}
    since $c_2<c_4$, and thus $D_1=\infty$.
\end{example}

Finally, note that a mollification of this $p(x)$ would have the same properties while being smooth (as a solution of the reverse-time SDE should be).

\section{Additional figures}\label{app:additional_figures}

We conclude with additional figures mentioned in the main text.

\begin{figure}[ht!]
    \begin{subfigure}[b]{\linewidth}
        \centering
        \includegraphics[width=0.9\linewidth]{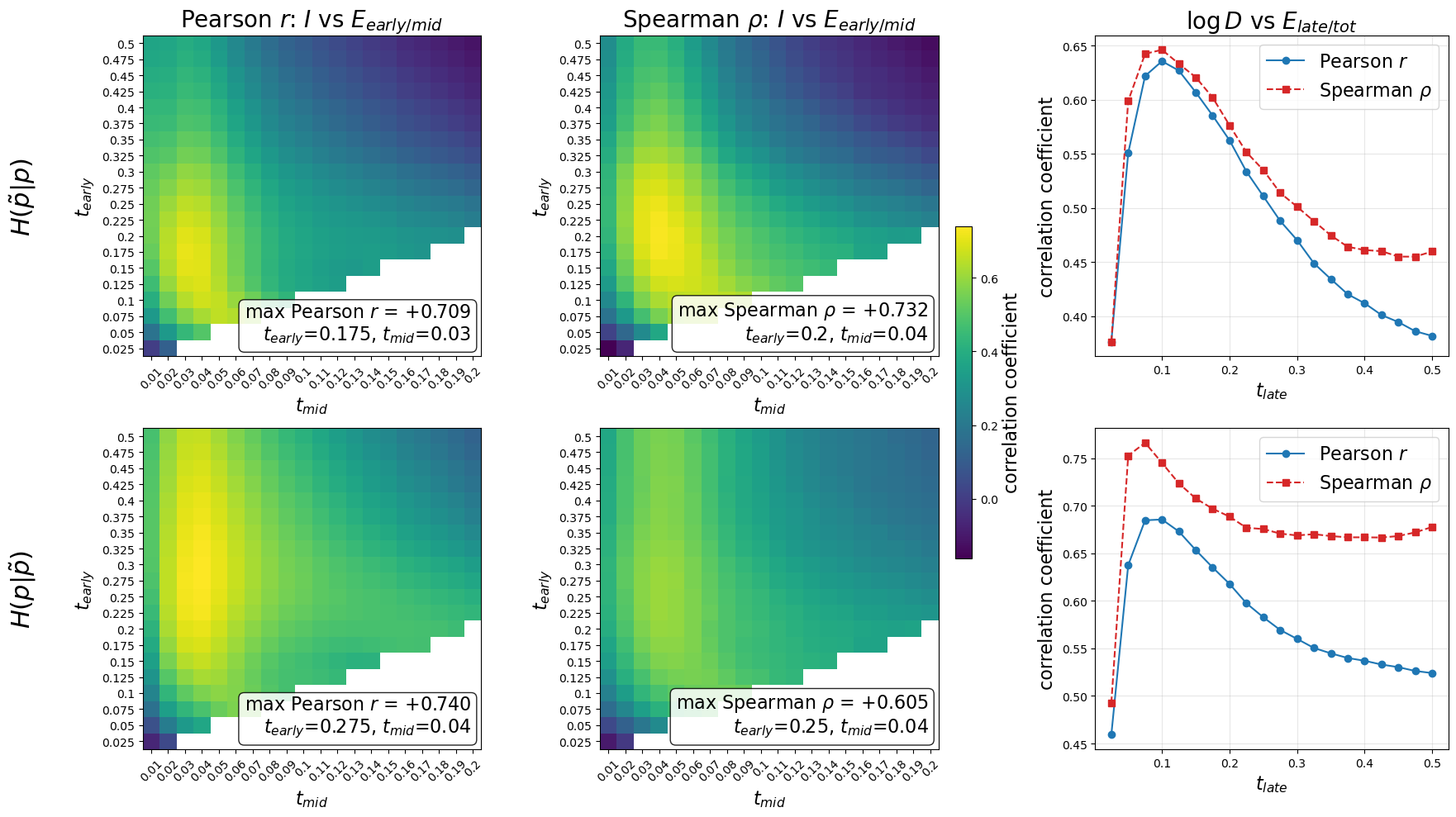}
        \caption{Correlations on the test set of checkpoints, used by Figure \ref{fig:correlation}.}
    \end{subfigure}
    \begin{subfigure}[b]{\linewidth}
        \centering
        \includegraphics[width=0.9\linewidth]{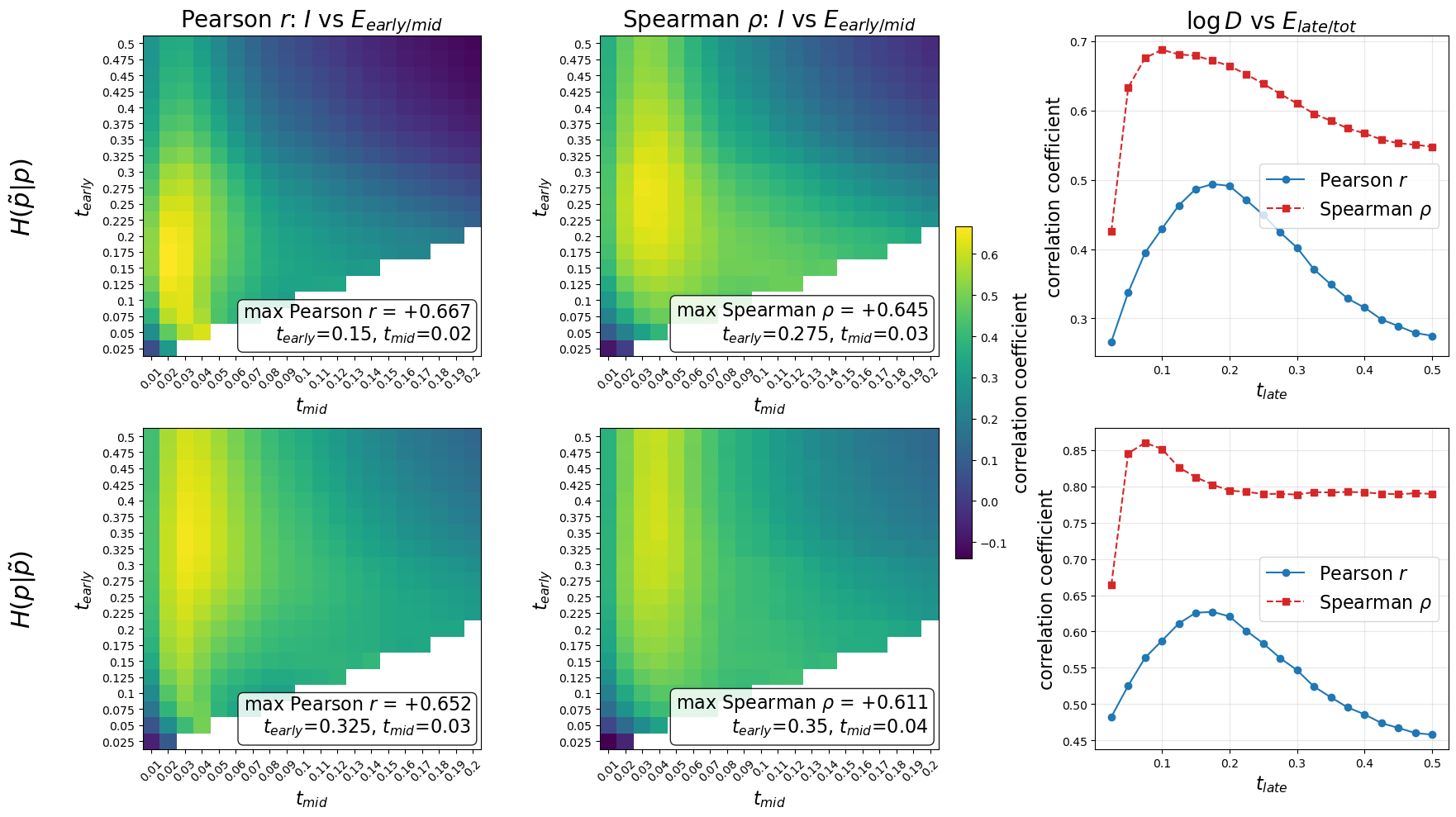}
        \caption{Correlations on the held-out set of checkpoints.}
    \end{subfigure}
    \caption{Plots for the sensitivity of the correlations shown in Figure \ref{fig:correlation} on the choices of the score error thresholds $t_\text{early}$, $t_\text{mid}$ and $t_\text{late}$. Each of the sets consist of $144$ epoch checkpoints from $3$ independent training runs.}
    \label{fig: correlation thresholds}
\end{figure}

\begin{figure}[ht!]
    \begin{subfigure}[b]{\linewidth}
        \centering
        \includegraphics[width=0.32\linewidth]{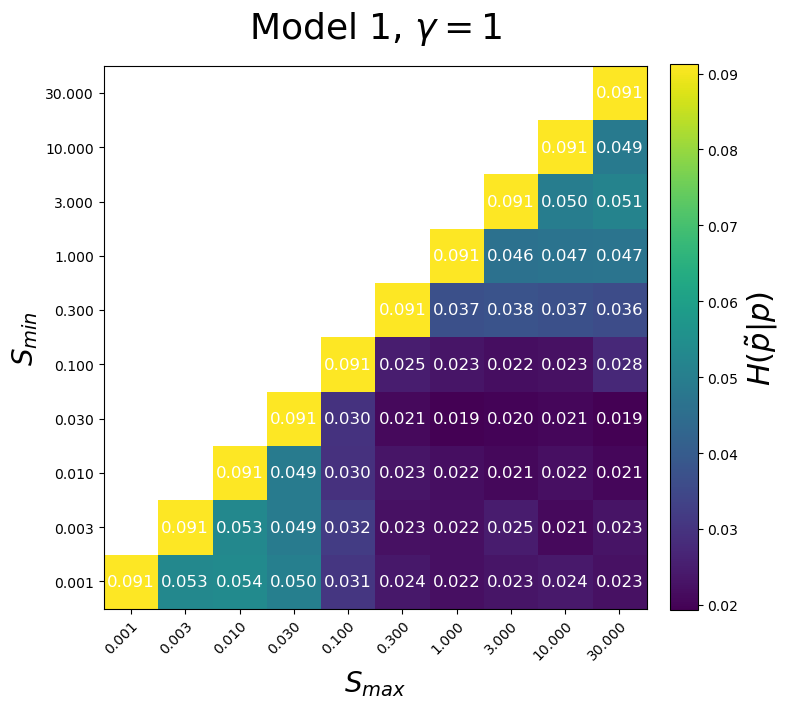}
        \includegraphics[width=0.32\linewidth]{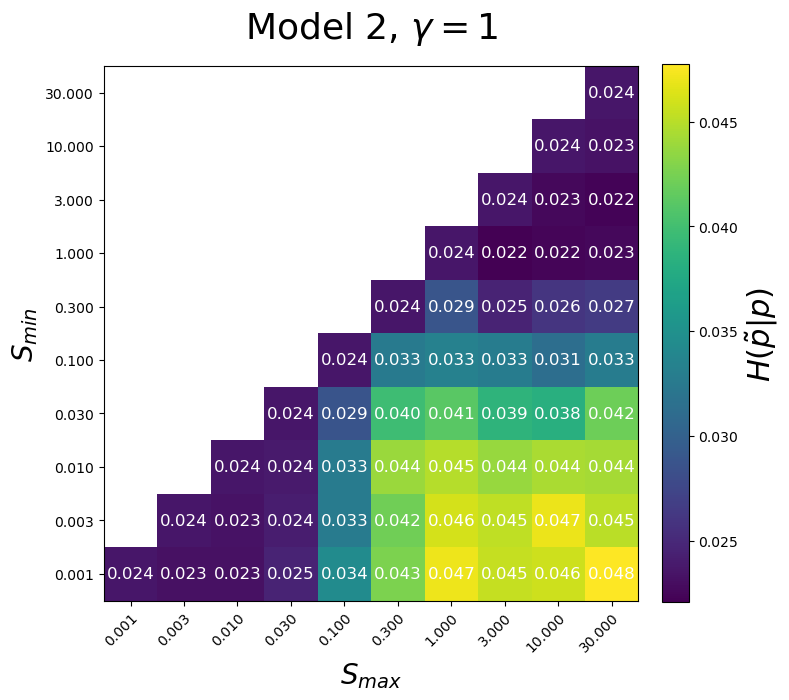}
        \includegraphics[width=0.32\linewidth]{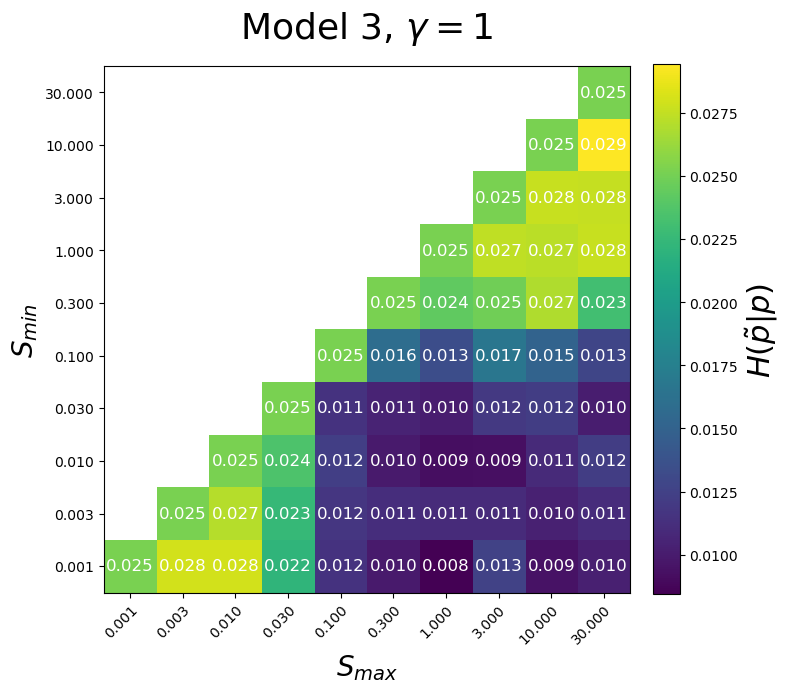}
        \includegraphics[width=0.32\linewidth]{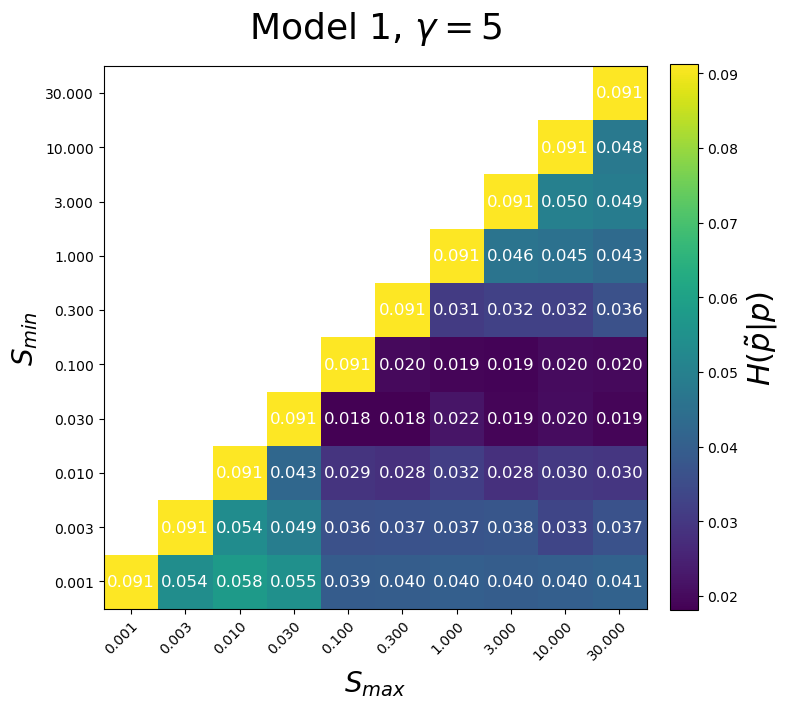}
        \includegraphics[width=0.32\linewidth]{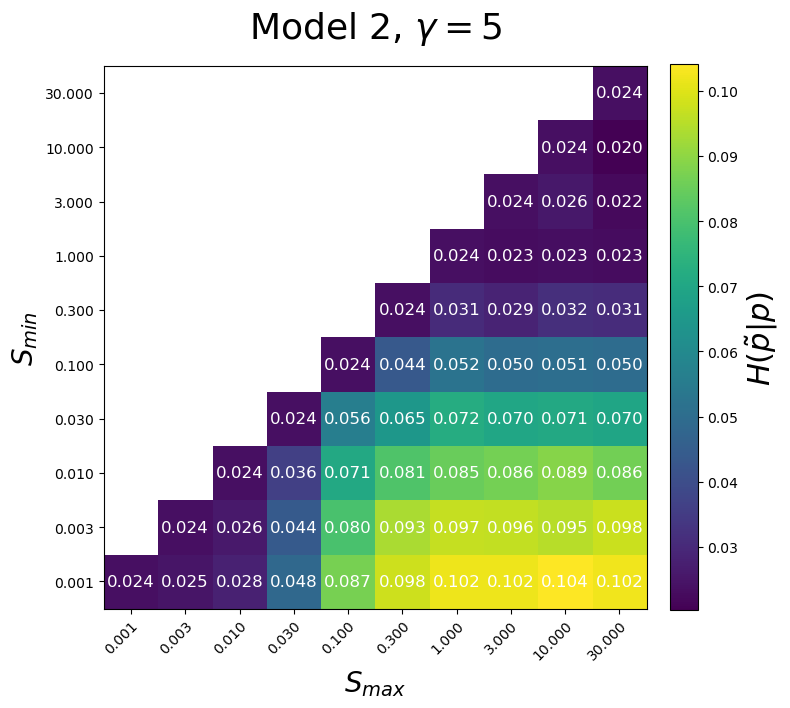}
        \includegraphics[width=0.32\linewidth]{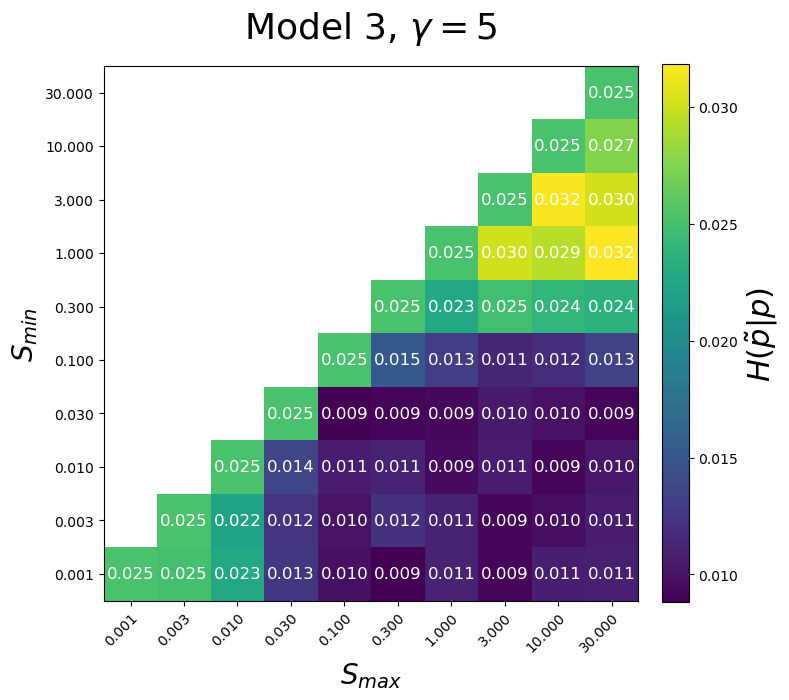}
        \caption{Colormaps for $H(\tilde{p}|p)$.}
        \label{fig:colormap_kl1}
    \end{subfigure}
    \begin{subfigure}[b]{\linewidth}
        \centering
        \includegraphics[width=0.32\linewidth]{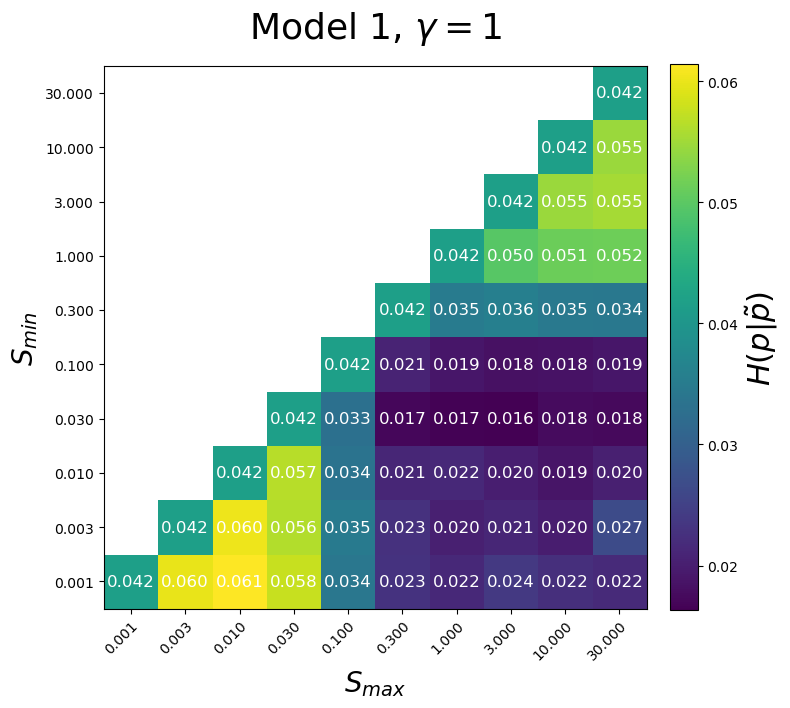}
        \includegraphics[width=0.32\linewidth]{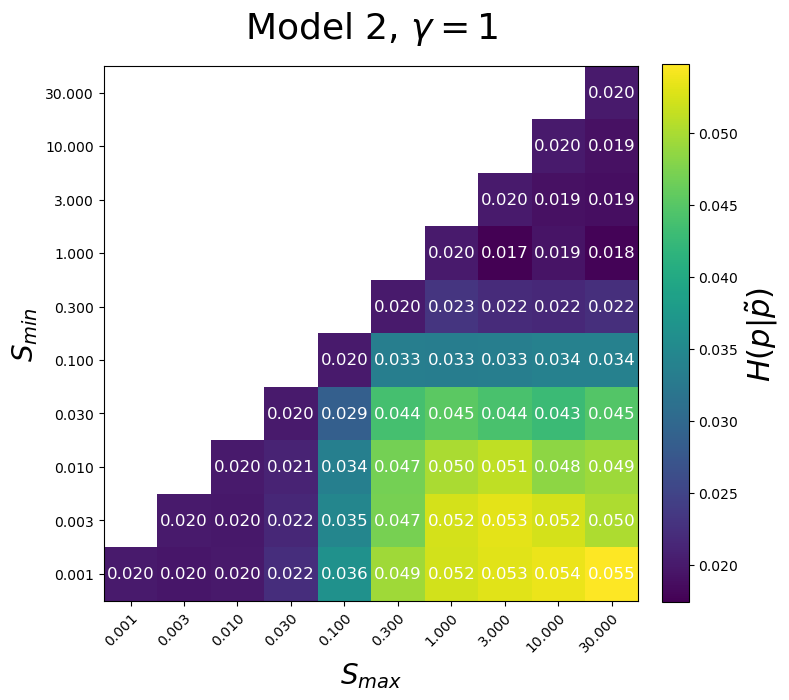}
        \includegraphics[width=0.32\linewidth]{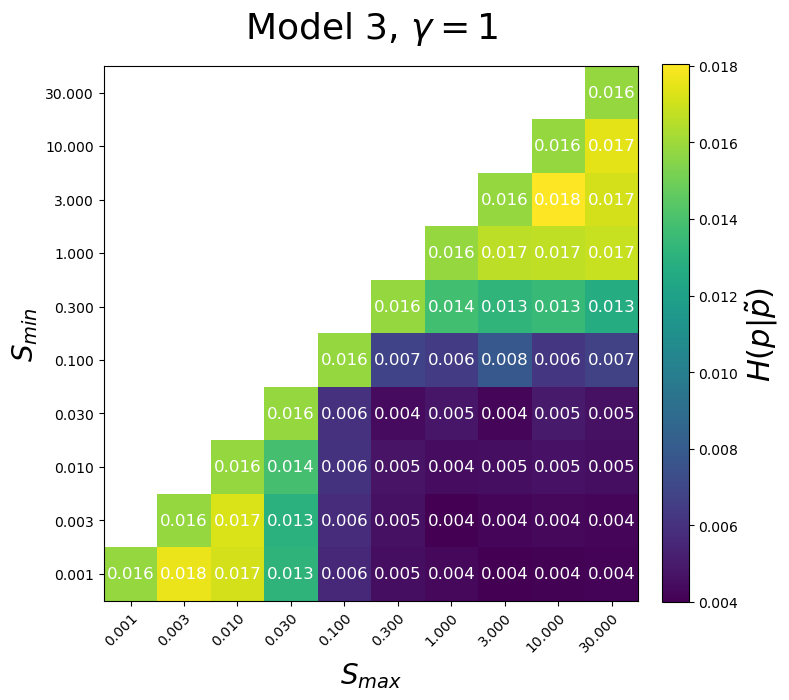}
        \includegraphics[width=0.32\linewidth]{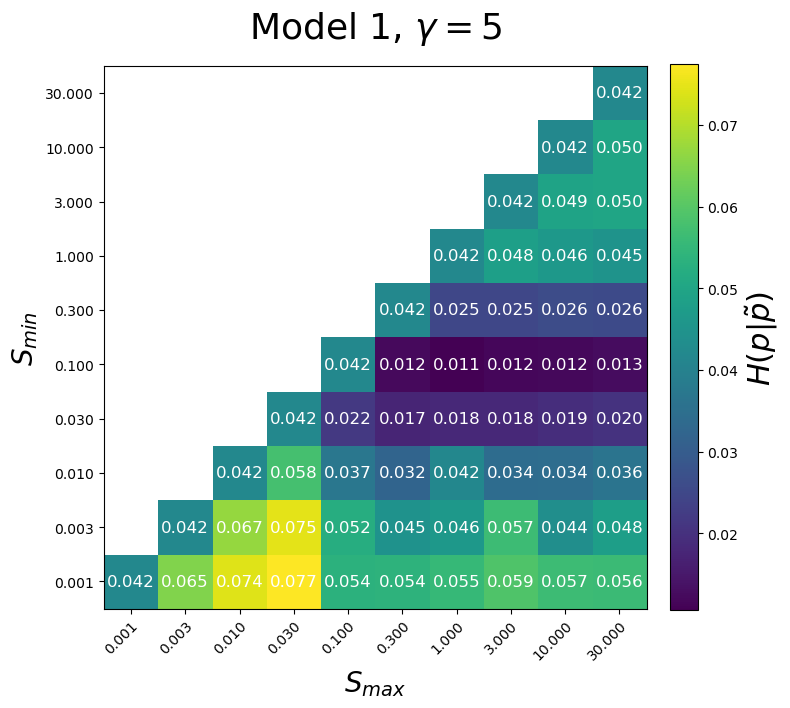}
        \includegraphics[width=0.32\linewidth]{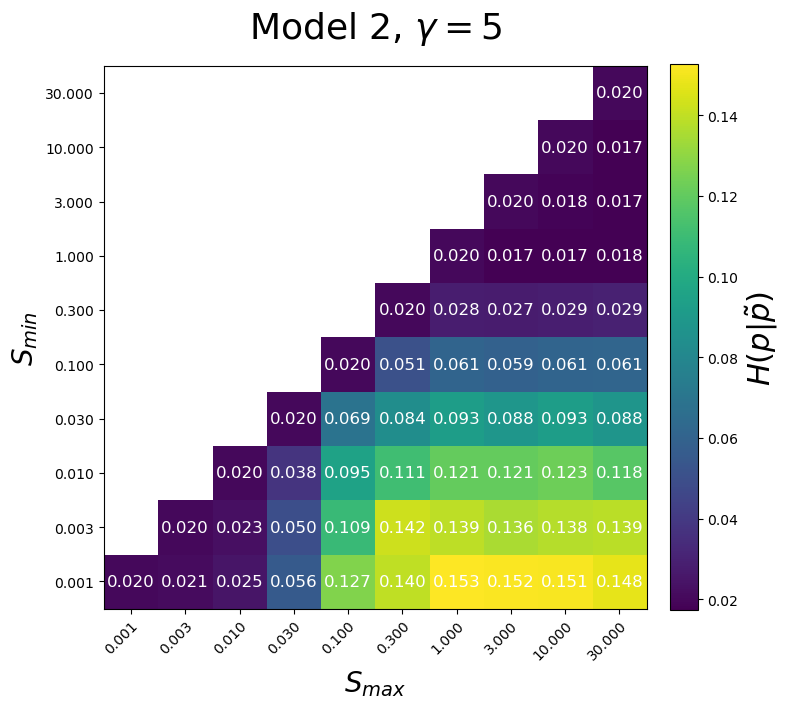}
        \includegraphics[width=0.32\linewidth]{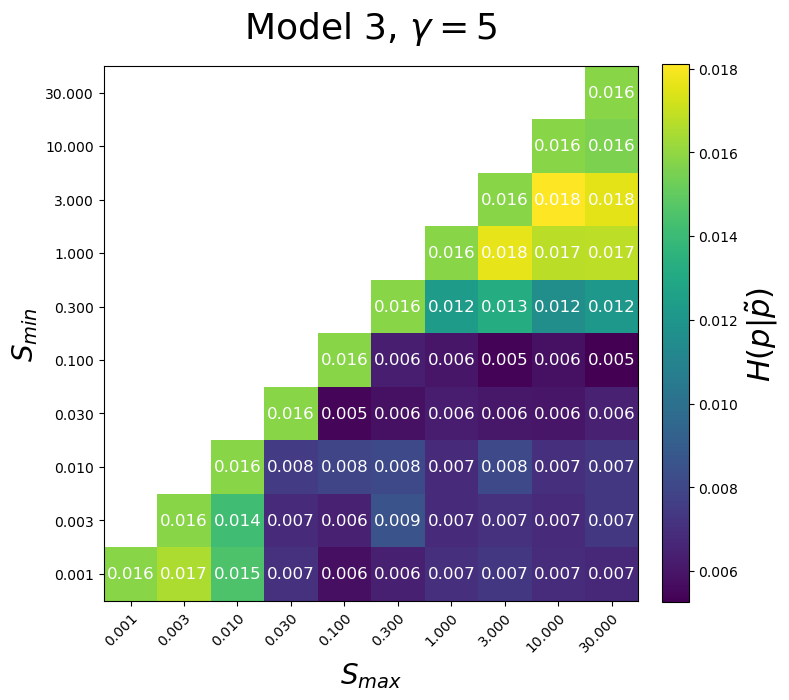}
        \caption{Colormaps for $H(p|\tilde{p})$.}
        \label{fig:colormap_kl2}
    \end{subfigure}
    \caption{Final KL divergences of the three models of Figure \ref{fig: gamma curves and error norms}. Although the values of both divergences (Figures \ref{fig:colormap_kl1} and \ref{fig:colormap_kl2}) can differ, the basic beneficial/detrimental regions are similar within each model.}
    \label{fig: colormaps}
\end{figure}

\begin{figure}[ht!]
    \centering
    \includegraphics[width=0.48\linewidth]{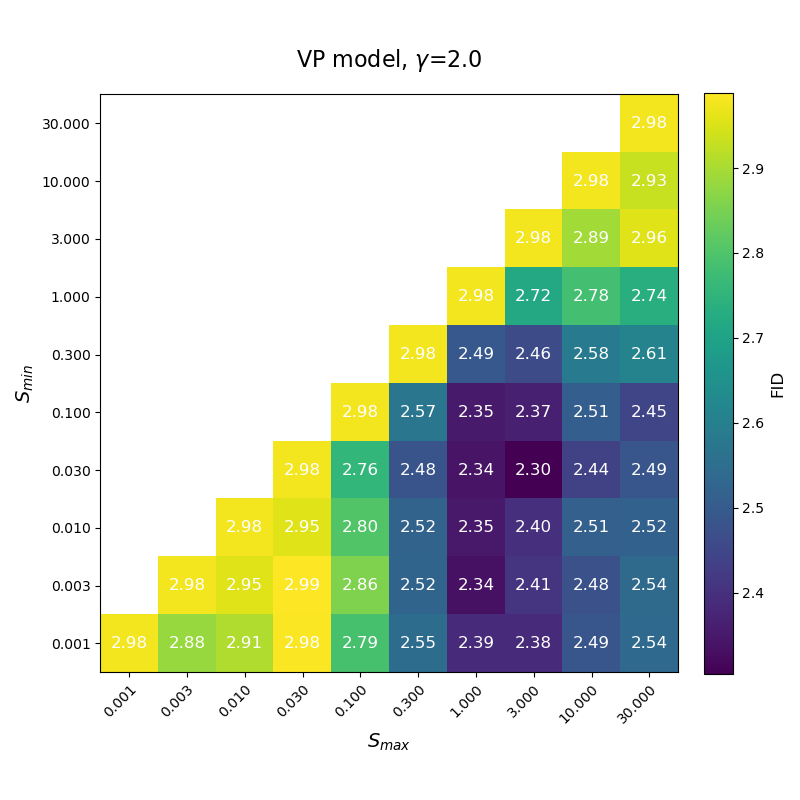}
    \includegraphics[width=0.48\linewidth]{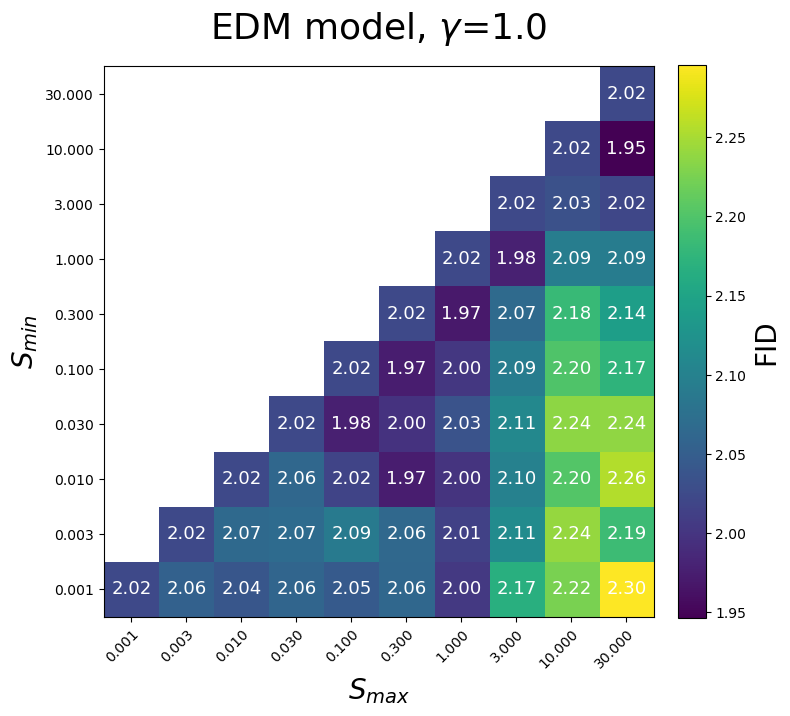}

    \caption{FID scores for the pretrained CIFAR-10 models from \cite{karras}.}
    \label{fig:colormaps_cifar10}
\end{figure}

\end{document}